\newtheorem{theorem}{Theorem}
\newtheorem{lemma}{Lemma}
\newtheorem{corollary}{Corollary}
\newtheorem{remark}{Remark}
\crefname{claim}{claim}{claims}
\Crefname{algorithm}{Algorithm}{Algorithms}
\Crefname{theorem}{Theorem}{Theorems}
\crefname{protocol}{protocol}{protocols}
\newcommand{\E}{\mathbb{E}}
\newcommand{\N}{\mathbb{N}}
\newcommand{\dmax}{d_{\scaleto{\text{max}}{3pt}}}
\DeclareMathOperator*{\argmax}{arg\,max}
\newcommand{\indicator}{\mathds{1}}
\newcommand{\Deltabar}{\overline{\Delta}}
\newcommand{\Ical}{\mathcal{I}}
\newcommand{\Iber}{\Ical_{ber}}
\newcommand{\Rcal}{\mathcal{R}}
\newcommand{\mabregret}{\mathcal{R}^{\scaleto{MAB}{3pt}}_T}
\newcommand{\ifrac}[2]{#1/#2}
\author{
Tal Lancewicki$^{1\dag}$
\and
Shahar Segal$^{1\dag}$
\and
Tomer Koren$^{1,2}$
\and
Yishay Mansour$^{1,2}$
}
\begin{document}
\title{Stochastic Multi-Armed Bandits with Unrestricted Delay Distributions}
\maketitle

\def\thefootnote{$\dag$ }\footnotetext{ These authors contributed equally to this work.}
\def\thefootnote{1 }\footnotetext{ Blavatnik School of Computer Science, Tel Aviv University, Israel.}
\def\thefootnote{2 }\footnotetext{ Google Research, Tel Aviv.}

\begin{abstract}
    We study the stochastic Multi-Armed Bandit~(MAB) problem with 
    random delays in the feedback received by the algorithm. We consider two settings: the {\it reward-dependent} delay setting, where realized delays may depend on the stochastic rewards, and the {\it reward-independent} delay setting. Our main contribution is algorithms that achieve near-optimal regret in each of the settings, with an additional additive dependence on the quantiles of the delay distribution.
    Our results do not make any assumptions on the delay distributions: in particular, we do not assume they come from any parametric family of distributions and allow for unbounded support and expectation; 
    we further allow for infinite delays where the algorithm might occasionally not observe any feedback.
\end{abstract}

\section{Introduction}
Stochastic Multi-armed Bandit problem (MAB) is a theoretical framework for studying sequential decision making. 
Most of the literature on MAB assumes that the agent observes feedback immediately after taking an action. However, in many real world applications, 
the feedback might be available only after a period of time. For instance, in clinical trials, the observed effect of a medical treatment often comes in delay, that may vary between different treatments.
Another example is in targeted advertising on the web: when a user clicks a display ad the feedback is immediate, but if a user decides \emph{not} to click, then the algorithm will become aware to that only when the user left the website or enough time has elapsed.

In this paper, we study the stochastic MAB problem with randomized delays~\citep{joulani2013online}. The reward of the chosen action at time $t$ is sampled from some distribution, like in the classic stochastic MAB problem. However, the reward is \textit{observed} only at time $t+d_t$, where $d_t$ is a random variable denoting the delay at step $t$. 
This problem has been studied extensively in the literature~\citep{joulani2013online,vernade2017stochastic,pike2018bandits,manegueu2020stochastic} under an implicit assumption that the delays are \emph{reward-independent}: namely, that $d_t$ is sampled from an unknown delay distribution and may depend on the chosen arm, but \emph{not} on the stochastic rewards on the same round.
For example, \citet{joulani2013online,pike2018bandits} show a regret bound of the form $O(\mabregret + K\mathbb{E}[D])$. Here $\mabregret$ denotes the optimal instance-dependent $T$-round regret bound for standard (non-delayed) MAB: $\mabregret = \sum_{\Delta_i > 0}\log(T)/\Delta_i$, where $\Delta_i$ is the sub-optimality gap for arm $i$. In the second term, $K$ is the number of arms and $\mathbb{E}[D]$ is the expected delay. 

A significantly more challenging setting, that to the best of our knowledge was not explicitly addressed previously in the literature,%
\footnote{Some of the results of \citet{vernade2017stochastic,manegueu2020stochastic} can be viewed as having a specific form of reward-dependent delays; we discuss this in more detail in the related work section.} 
is that of \emph{reward-dependent} delays. In this setting, the random delay at each round may also depend on the reward received on the same round (in other words, they are drawn together from a \emph{joint distribution} over rewards and delays). This scenario is motivated by both of the examples mentioned earlier: e.g., in targeted advertisement the delay associated with a certain user is strongly correlated with the reward she generates (i.e., click or no click); and in clinical trials, the delay often depends on the effect of the applied treatment as some side-effects take longer than others to surface.

In contrast to the reward-independent case, with reward-dependent delays the observed feedback might give a biased impression of the true rewards. Namely, the expectation of the \textit{observed} reward can be very different than the actual expected reward. For example, consider Bernoulli rewards. If the delays given reward $0$ are shorter than the delays given reward $1$, then the observed reward will be biased towards~$0$. Even worse, the direction of the bias can be opposite between different arms. Hence, as long as the fraction of \textit{unobserved} feedback is significant, the expected \textit{observed} reward of the optimal arm can be smaller than expected \textit{observed} reward of a sub-optimal arm, which makes the learning task substantially more challenging.

\subsection{Our contributions}

We consider both the reward-independent and reward-dependent versions of stochastic MAB with delays.
In the reward-independent case we give new algorithms whose regret bounds significantly improve upon the state-of-the-art, and also give instance-dependent lower bounds demonstrating that our algorithms are nearly-optimal.
In the reward-dependent setting, we give the first algorithm 
to handle such delay structure and the potential bias in the observed feedback that it induces. We provide both an upper bound on the regret and a nearly matching general lower bound.

\paragraph{Reward-independent delays:}

We first consider the easier \emph{reward-independent} case.
In this case, we provide an algorithm where the second term scales with a quantile of the delay distribution rather the expectation, and the regret is bounded by $O(\min_{q}\{\mabregret/q + d(q)\})$, where $d(q)$ is the $q$-quantile of the delay distribution. Specifically, when choosing the median (i.e., $q = 1/2$), we obtain regret bound of $O(\mabregret + d(1/2))$. 
We thus improve over the $O(\mabregret + K\mathbb{E}[D])$ regret bound of \citet{joulani2013online,pike2018bandits}, as the median is always smaller than the expectation, up to factor of two (for non-negative random variables). 
Moreover, the increase in regret due to delays in our bound does not scale with number of arms, so the improvement is significant even with fixed delays~\citep{dudik2011efficient,joulani2013online}. 
Our bound is achieved using a remarkably simple algorithm, based on variant of Successive Elimination~\citep{even2006action}. 
For this algorithm, we also prove a more delicate regret bound for arm-dependent delays that allows for choosing different quantiles $q_i$ for different arms $i$ (rather than a single quantile $q$ for all arms simultaneously).


The intuition why the increase in regret due to delays should scale with a certain quantile is fairly straightforward: consider for instance the median of the delay, $d_M$. For simplicity, assume that the delay value is available when we take the action. One can simulate a black box algorithm for delays that are bounded by $d_M$ on the rounds in which delay is smaller than $d_M$ (which are approximately half of the rounds), and in the rest of the rounds, imitate the last action of the black-box algorithm. Since rewards are stochastic, and independent of time and the delay, the regret on rounds with delay larger than $d_M$ is similar to the regret of the black-box algorithm on the rest of the rounds, resulting with total regret of twice the regret of the black-box algorithm. For example, when using the algorithm of \cite{joulani2013online}, this would give us $O(\mabregret + Kd_M)$. We stress that unlike this reduction, our algorithm does not need to know the value of the delay at any time, nor the median or any other quantile. In addition, our bound is much stronger and does not depend on $K$ on the second term.

\paragraph{Reward-dependent delays:}
We then proceed to consider the more challenging \emph{reward-dependent} setting.
In this setting, the feedback reveals much less information on the true rewards due to the selection bias in the \emph{observed} rewards (in other words, the distributions of the observed feedback and the unobserved feedback might be very different).
In order to deal with this uncertainty, we present another algorithm, also inspired by Successive Elimination. The algorithm widens the confidence bounds in order to handle the potential bias. We achieve a regret bound of the form $O(\mabregret + \log(K)d(1-\Delta_{min}/4))$, where $\Delta_{min}$ is the minimal sub-optimality gap, and $d(\cdot)$ is the quantile function of the marginal delay distribution. We show that this bound is optimal, by presenting a matching lower bound, up to a factor of $\Delta$ in the second term (and $\log{(K)}$ factors).



\paragraph{Summary and comparison of bounds:}

Our main results, along with a concise comparison to previous work, are presented in~\cref{table:main_results}. $G^{*}_{T,i}$ denotes the maximal number of unobserved feedback from arm $i$.
The results show that our algorithm works well even under heavy-tailed distributions and some distributions with infinite expected value. For example, the arm-dependent delay distributions used by \citet{manegueu2020stochastic} are all bounded by an $\alpha$-pareto distribution (in terms of the delay distributions CDFs). Hence, their median is bounded by $2^{1/\alpha}$. Our algorithm suffer at most an additional $O(2^{1/\alpha})$ to the classical regret for MAB without delays (see bounds for the $\alpha$-Pareto case in~\cref{table:main_results}). 
In the ``packet loss'' setting, the delay is $0$ with probability $p$, and $\infty$ (or $T$) otherwise.
If $p$ is a constant (e.g., $>1/4$), our regret bound scales as the optimal regret bound for MAB without delays, up to constant factors. Previous work \citet{joulani2013online} show a regret bound which scales with the number of missing samples, and thus is linear. A Pareto distribution that will bound such delay would require a very small parameter $\alpha$ which also result in linear regret bound by the result of \citet{manegueu2020stochastic}. 
%

\begin{table}[t]
    \vskip -0.3cm
    \centering\small
        \caption{Regret bounds comparison of this and previous works. The bounds in this table omit constant and $\log(K)$ factors. 
        }
        \setlength{\tabcolsep}{12pt} 
        \renewcommand{\arraystretch}{1} 
        \begin{tabular}[]{|c|c|c|}
        \hline
            & Previous work & \makecell{This paper}
            \\
        \hline
            \makecell{General, Reward-independent}
            & \makecell{
                $\mabregret +         \sum_i\mathbb{E}[G_{T,i}^{*}]$ \\
                $\mabregret + K\mathbb{E}[D]$\\
                \cite{joulani2013online}}
            & $\min_q \{\frac{1}{q}\mabregret + d(q)\}$
            \\
        \hline
            Fixed delay $d$
            & \makecell{
                $\mabregret + Kd$\\
                \cite{joulani2013online}\\
                $\sqrt{TK} + \sqrt{K}d$\\
                \cite{dudik2011efficient}}
                & $\mabregret + d$ 
                \\
        \hline
            \makecell{$\alpha$-Pareto
            }
            & \makecell{
                $\mabregret + \sum_i\left( \frac{8}{\Delta_i} \right)^{\frac{1-\alpha}{\alpha}}$\\
                \cite{manegueu2020stochastic}}
            & $\mabregret + 2^{1/\alpha}$
            \\
        \hline
            \makecell{Packet loss
            }
            & \makecell{$(1-p)T$ \\ \cite{joulani2013online}}
            & $\frac{1}{p}\mabregret$ 
            \\
        \hline
            \makecell{General, Reward-dependent
            }
            & \textemdash
            & \makecell{$\mabregret$ $+ d(1-\Delta_{min})$}
            \\
        \hline
        \end{tabular}
        \label{table:main_results}
\end{table}

\subsection{Related work}

To the best of our knowledge, \citet{dudik2011efficient} were the first to consider delays in stochastic MAB. They examine contextual bandit with fixed delay $d$, and obtain regret bound of $O(\sqrt{K\log (NT)}(d + \sqrt{T}) )$, where $N$ is number of possible policies. \citet{joulani2013online} use a reduction to non-delayed MAB. For their explicit bound they assume that expected value of the delay is bounded (see \Cref{table:main_results} for their \textit{implicit} bound). \citet{pike2018bandits} consider a more challenging setting in which the learner observe the \textit{sum} of rewards that arrive at the same round. They assume that the expected delay is known, and obtain similar bound as \citet{joulani2013online}.

\citet{vernade2017stochastic} study \textit{partially observed feedback} where the learner cannot distinguish between reward of $0$ and a feedback that have not returned yet, which is a special form of reward-dependent delay. However, they assume bounded expected delay and full knowledge on the delay distribution. 
\citet{manegueu2020stochastic} also consider partially observed feedback, and aim to relax the bounded expected delay assumption. They consider delay distributions that their CDF are bounded from below by the CDF of an $\alpha$-Pareto distribution, which might have infinite expected delay for $\alpha \leq 1$.
However, this assumption still limits the distribution, e.g., the commonly examined fixed delay falls outside their setting. Moreover, they assume that the parameter $\alpha$ is known to the learner. Other extensions include Gaussian Process Bandit Optimization \cite{desautels2014parallelizing} and linear contextual bandits \cite{zhou2019learning}.
As opposed to most of these works, we place no assumptions on the delay distribution, and the learner has no prior knowledge on it.

Delays were also studied in the context of the non-stochastic MAB problem~\citep{auer2002nonstochastic}.
Generally, when reward are chosen in an adversarial fashion, the regret increases by a multiplicative factor of the delay.
Under full information, \citet{weinberger2002delayed} show regret bound of $O(\sqrt{dT})$, with fixed delay $d$. This was extended to bandit feedback by \cite{cesa2019delay}, with near-optimal regret bound of $O(\sqrt{T(K+d)})$. Several works have studied the effect of adversarial delays, in which the regret scales with $O(\sqrt{T}+\sqrt{D})$, where $D$ is the sum of delays \cite{thune2019nonstochastic,bistritz2019online,zimmert2020optimal,gyorgy2020adapting}. For last, \citet{cesa2018nonstochastic} consider a similar setting to \citet{pike2018bandits}, in which the learner observe only the sum of rewards. The increase in the regret is by a multiplicative factor of $\sqrt{d}$.

\section{Problem Setup and Background}

We consider a variant of the classical stochastic Multi-armed Bandit (MAB) problem. 
In each round $t=1,2,\ldots,T$, an agent chooses an arm $a_{t}\in\left[K\right]$
and gets reward $r_{t}(a_t)$, where $r_t(\cdot) \in [0,1]^K$ is a random vector. Unlike the standard MAB setting, the agent does not immediately observe $r_{t}(a_t)$ at the end of round $t$; rather,
only after $d_{t}(a_t)$ rounds (namely, at the end of round $t+d_{t}(a_t)$) the tuple $(a_t, r_t(a_t))$ is received as feedback. 
We stress that neither the delay $d_t(a_t)$ nor the round number $t$ are observed as part of the feedback (so that the delay cannot be deduced directly from the feedback). The delay is supported in $\N \cup \{\infty\}$.
In particular, we allow $d_t(a_t)$ to be infinite, in which case the associated reward is never observed. 
The pairs of vectors $\{(r_t(\cdot), d_t(\cdot))\}_{t=1}^T$ are sampled i.i.d from a \textit{joint} distribution. Throughout the paper we sometimes abuse notation and denote $r_t(a_t)$ and $d_t(a_t)$ simply by $r_t$ and $d_t$, respectively.
This protocol is summarized in Protocol~\ref{protocol:stochastic_delays}.

We discuss two forms of stochastic delays: 
\emph{(i)} reward-independent delays, where the vectors $r_t(\cdot)$ and $d_t(\cdot)$ are \textit{independent} from each other,
and
\emph{(ii)} reward-dependent delays, where there is no restriction on the joint distribution.

The performance of the agent is measured as usual by the 
the difference between the algorithm's cumulative expected reward and the best possible total expected reward of any fixed arm. This is known as the \textit{expected pseudo regret}, formally defined by
\begin{align*}
    \mathcal{R}_{T} 
    &=
    \max_{i}\mathbb{E}\left[\sum_{t=1}^{T} r_{t}(i)\right] - \mathbb{E}\left[\sum_{t=1}^{T}r_{t}(a_t)\right] 
    = 
    T\mu_{i^*} - \mathbb{E}\left[\sum_{t=1}^{T}\mu_{a_{t}}\right] = \mathbb{E}\left[\sum_{t=1}^{T}\Delta_{a_t}\right]
    ,
\end{align*}
 where $\mu_i$ is the mean reward of arm $i$, $i^*$ denotes the optimal arm and $\Delta_{i} = \mu_{i^*} - \mu_{i}$ for all $i \in [K]$. 
 
  \setlength{\textfloatsep}{12pt}
 \begin{algorithm}[t]
    \floatname{algorithm}{Protocol}
    \caption{\label{protocol:stochastic_delays} 
             MAB with stochastic delays}
    \begin{algorithmic}
        \FOR{$t\in \left[T\right]$}
        
            \STATE Agent picks an action $a_t \in [K]$.
            \STATE Environment samples a pair, $(r_t(\cdot),d_t(\cdot))$, from a joint distribution.
            \STATE Agent get a reward $r_t(a_t)$ and observes feedback $\left\{(a_s,r_s(a_s)) : t = s + d_s(a_s) \right\}$.
        \ENDFOR
    \end{algorithmic}
\end{algorithm}

For a fixed algorithm for the agent (the relevant algorithm will always be clear from the context), we denote by $m_t(i)$ the number of times it choose arm $i$ by the end of round $t-1$. 
Similarly $n_{t}(i)$ denotes the number of \textit{observed} feedback from arm $i$, by the end of round $t-1$. The two might differ as some of the feedback is delayed. 
Let~$\hat{\mu}_t(i)$ be the \textit{observed} empirical average of arm $i$, defined as:
$$
    \hat{\mu}_t(i) = \frac{1}{n_t(i) \vee 1} \sum_{s :s + d_s < t} \indicator\{a_s = i\} r_s,
$$
where $a \vee b = \max\{a,b\}$ and $\indicator\{\pi\}$ is the indicator function of predicate $\pi$.

We denote $d_i(q)$ to be the quantile function for arm $i$'s delay distribution; formally, if $D_i$ is the delay of arm $i$ then the quantile function is defined as 
\begin{align*}
    d_i(q) = \min \big\{\gamma \in \N  \mid  \Pr[D_i \leq \gamma] \geq q \big\}.
\end{align*}


\section{Reward-independent Delays}
\label{sec:rew-ind}

We first consider the case where delays are independent of the realized stochastic rewards.
We begin with an analysis of two classic algorithms: UCB~\citep{auer2002finite} and Successive Elimination (SE)~\citep{even2006action}, adjusted to handle delayed feedback in a straightforward naive way (see Procedure~\ref{alg:updates}). 


\subsection{Suboptimality of UCB with delays}  \label{sec:UCB}

UCB \cite{auer2002finite} is based on the basic principle of ``optimism under uncertainty.'' It maintains for each arm an upper confidence bound (UCB): a value that upper bounds the true mean with high probability. In each round it simply pulls the arm with the highest UCB. The exact description appears in~\cref{alg:ucb}.

\begin{algorithm}[ht]
    \floatname{algorithm}{Procedure}
    \caption{Update-Parameters}   \label{alg:updates}
    \begin{algorithmic}
        \FOR{$i \in [K]$}
            \STATE $n_{t}(i) \gets \sum_{s:s+d_{s} < t}\indicator\{a_{s}=i\} $ 
            \hfill {\color{gray} \# number of observed feedback}
            \STATE $\hat{\mu}_{t}(i) \gets \frac{1}{n_{t}(i) \vee 1}\sum_{s:s+d_{s}< t}\indicator\{a_{s}=i\}r_{s}$ 
            \hfill {\color{gray} \# observed empirical mean}
            \STATE  $LCB_{t}(i) \gets \hat{\mu}_{t}(i)-\sqrt{\frac{2\log T}{n_{t}(i) \vee 1}}$
            \STATE  $UCB_{t}(i) \gets \hat{\mu}_{t}(i)+\sqrt{\frac{2\log T}{n_{t}(i) \vee 1}}$
        \ENDFOR
    \end{algorithmic}
\end{algorithm}

\begin{algorithm}[ht]
    \caption{UCB with Delays}   \label{alg:ucb}
    \begin{algorithmic}
        \STATE \textbf{Input:} number of rounds $T$, number of arms $K$
        \STATE \textbf{Initialization:} $t \leftarrow 1$
        \STATE{\color{gray} \# begin with sampling each arm once}
        \STATE Pull each arm $i \in [K]$ once
        \STATE Observe any incoming feedback 
        \STATE Set $t \leftarrow t + K$
        
        \WHILE{$t < T$}
            \STATE Call \textit{Update-Parameters} (Procedure \ref{alg:updates})
            \STATE Pull arm $a_t \in \argmax_i UCB_t(i)$ 
            \hfill {\color{gray} \# With deterministic tie breaking rule i.e. by index}
            \STATE Observe feedback $\left\{(a_s, r_s) : s + d_s = t\right\}$
            \STATE Set $t \leftarrow t + 1$
        \ENDWHILE
    \end{algorithmic}
\end{algorithm}

In the standard non-delayed setting, UCB is known to be optimal. However, with delays this is no longer the case. Consider the simpler case where all arms suffers from a constant fixed delay $d$. \citet{joulani2013online} show that the regret of UCB with delay is bounded by $O(\mabregret + Kd)$. We show that the increase in the regret is necessary for UCB, and the additional regret due to the delay can in general scale as $\Omega(Kd)$. 
The reason is due to the nature of UCB: it always samples the currently most promising arm,
and it might take as much as $d$ rounds to update the latter.
%
This is formalized in the following theorem (proof is deferred to~\cref{appendix:ucb-lower}.)

\begin{theorem}     \label{thm:ucb-lower}
    Under fixed delay $d \geq K$, there exist a problem instance such that $UCB$ suffers regret of $\Omega(Kd)$.
\end{theorem}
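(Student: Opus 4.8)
The plan is to exhibit a $K$-armed instance with constant delay $d$ on which UCB, immediately after its initial round-robin, gets trapped pulling each of the $K-1$ suboptimal arms for roughly $d$ consecutive rounds before it can switch to the next arm, thereby wasting $\Omega(Kd)$ rounds before it can identify the best arm. Concretely, I would let arms $1,\dots,K-1$ deterministically return reward $\mu:=\tfrac12$, let arm $K$ return a fresh, independent $\mathrm{Bernoulli}(p)$ reward each round with $p:=\tfrac34$, and let the delay equal the constant $d$ on every round; then arm $K$ is the unique optimal arm and $\Delta_i=p-\mu=\tfrac14$ for every $i<K$. Define $E$ to be the event that the single reward of arm $K$ received during the round-robin equals $0$, so $\Pr[E]=1-p=\tfrac14$. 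Since the regret is always nonnegative, $\mathcal{R}_T\ge\Pr[E]\cdot\mathbb{E}[\sum_t\Delta_{a_t}\mid E]$, so it suffices to show that conditioned on $E$ the regret is $\Omega(Kd)$.

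Conditioned on $E$ all observed rewards are deterministic, so the run of UCB is deterministic and can be traced round by round through $n_t(i)=\#\{s<t-d:a_s=i\}$. The key observation, valid once $T$ is large enough that $B:=\sqrt{2\log T}>\mu/(1-1/\sqrt2)$ (which only needs $T\ge 5$), is a three-tier ordering of the confidence indices $\hat\mu_t(i)+B/\sqrt{n_t(i)\vee1}$: an arm currently holding exactly one received $\mu$-sample has index $\mu+B$; an arm with no received sample, or with a single received $0$ (which is arm $K$'s state under $E$, since arm $K$ is pulled only in the round-robin until much later), has index $B$; and an arm with $m\ge2$ received $\mu$-samples has index $\mu+B/\sqrt m<B$. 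Hence UCB always picks the lowest-index arm currently holding a single received $\mu$-sample whenever such an arm exists. Using $d\ge K$, I would prove by induction on $j$ the following ``burst'' structure: arm $1$ is pulled throughout rounds $K+1,\dots,d+1$ (all indices tied at $B$, broken by index), and then, once its round-$1$ sample has come back, through round $d+K+1$; and for each $j=2,\dots,K-1$ there is an interval of $d+1$ consecutive rounds on which arm $j$ is pulled, at the start of which arms $1,\dots,j-1$ already carry $\ge2$ received samples (index $<B$), arms $j+1,\dots,K-1$ still carry exactly one (index $\mu+B$, but larger arm-index, so not chosen), and arm $K$ sits at index $B$; this interval ends precisely when the first in-burst pull of arm $j$ returns, which takes $d$ rounds and raises $n_t(j)$ to $2$, passing control to arm $j+1$.

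Putting this together: under $E$, every round $t$ with $K<t\le T_0$, where $T_0=K+(K-1)(d+1)=\Theta(Kd)$ is the last round of arm $(K-1)$'s burst, has $a_t\in\{1,\dots,K-1\}$ and thus contributes $\tfrac14$ to the regret, so the conditional regret is at least $(T_0-K)\cdot\tfrac14=\tfrac14(K-1)(d+1)=\Omega(Kd)$, and therefore $\mathcal{R}_T=\Omega(Kd)$ as claimed (this uses, as any such lower bound must, that $T\gtrsim Kd$). The main obstacle will be the inductive bookkeeping in the second step: one has to track exactly which delayed samples have arrived by each round in order to certify that throughout burst $j$ the count of arm $j$ stays at $1$ while the already-processed arms sit at count $\ge2$, and --- crucially --- that the genuinely optimal arm $K$ never regains the lead, which is exactly where the event $E$ (so that arm $K$'s lone received reward is $0$) together with the largeness of $B$ relative to $\mu$ are needed.
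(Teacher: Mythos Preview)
Your proof is correct and exploits the same mechanism as the paper's: after the initial round-robin, UCB commits to a single arm for roughly $d$ rounds until that arm's second feedback arrives and drops its index, then moves to the next candidate, so $\Omega(K)$ such bursts cost $\Omega(Kd)$ suboptimal pulls. The two constructions are dual, however. The paper makes the $K-1$ suboptimal arms $\mathrm{Bernoulli}(1/2)$ and the optimal arm deterministically $1$, then applies a Chernoff bound to show that with constant probability at least $K/4$ suboptimal arms drew a $1$ on their round-robin pull; each such arm ties the optimal index $1+B$ but has smaller arm-index, and so traps UCB for $d$ rounds in turn. You reverse the randomness: the suboptimal arms are deterministic with reward $1/2$, only the optimal arm is random, and conditioning on the single event $E$ (its round-robin sample equals $0$) renders the whole trajectory deterministic and forces UCB through \emph{all} $K-1$ suboptimal arms in index order. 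Your variant is tidier---one Bernoulli draw instead of a Chernoff argument, $K-1$ trapped arms rather than $K/4$, and no need for the paper's remark about aligning tie-breaking with pull order---at the cost of the inductive bookkeeping you flag. One minor point: the tier-3-versus-tier-2 inequality $\mu+B/\sqrt m<B$ that you set up is not actually needed for the burst structure; during burst $j$ both the already-processed arms (at $\mu+B/\sqrt m$) and arm $K$ (at $B$) sit strictly below the burst arm's $\mu+B$ regardless of the size of $B$, so the event $E$ alone is what keeps arm $K$ from regaining the lead.
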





\subsection{Successive Elimination with delays} \label{sec:SE}

Successive Elimination (SE) maintains a set of active arms, where initially all arms are active. It pulls all arms equally and whenever there is a high-confidence that an arm is sub-optimal, it eliminates it from the set of active arms. The exact description appears in \cref{alg:non-phased-SE}. 

\begin{algorithm}
    \caption{Successive Elimination with Delays} \label{alg:non-phased-SE}
    \begin{algorithmic}
    \STATE \textbf{Input:} number of rounds $T$, number of arms $K$
    \STATE \textbf{Initialization:} $S \gets [K]$, $t \gets 1$
    \WHILE{$t < T$}
        \STATE Pull each arm $i \in S$
        \STATE Observe any incoming feedback
        \STATE Set $t \leftarrow t + |S|$
        \STATE Call \textit{Update-Parameters} (Procedure \ref{alg:updates})
        \STATE{\color{gray} \# Elimination Step}
        \STATE Remove from $S$ all arms $i$ such that exists $j$ with $UCB_t({i})<LCB_t({j})$
    \ENDWHILE
    \end{algorithmic}
\end{algorithm}

Unlike UCB, SE continues to sample all arms equally, and not just the most promising arm. In fact, the number of rounds that SE runs before it observes $m$ samples for $K$ arms is approximately $Km+d$, whereas UCB might require $K(m+d)$ rounds in certain cases. 
More generally, we prove:
\begin{theorem}
    \label{thm:SE_arm_quantiles}
    For reward-independent delay distributions,
    the expected pseudo-regret of \cref{alg:non-phased-SE} is bounded by
     \begin{equation}
        \label{eq:SE_arms_quantiles}
        \begin{aligned}   
        \mathcal{R}_{T} \leq   &\min_{\vec{q} \in (0,1]^K}  \sum_{i\ne i^{*}}\frac{40\log{T}}{\Delta_i}\left(\frac{1}{q_i}+\frac{1}{q_{i^{*}}}\right)
         + \log{(K)} \max_{i \neq i^*}\big\{ (d_i(q_i)+d_{i^*}(q_{i^*}))\Delta_i \big\} 
        .
        \end{aligned}
    \end{equation}
    Additionally, if instead we minimize over a single quantile $q \in (0,1]$, the expected pseudo-regret becomes
    \begin{equation}
         \label{eq:SE_single_quantile}
         \mathcal{R}_{T} \leq \min_{q \in (0,1]} \sum_{i \neq i^*} \frac{325\log\left(T\right)}{q \Delta_i} + 4\max_{i \in [K]}d_i(q).
    \end{equation}
\end{theorem}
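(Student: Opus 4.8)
The plan is to run the standard Successive Elimination argument for Algorithm~\ref{alg:non-phased-SE}, layering two delay-specific ingredients on top. First I would define a high-probability ``good event'' consisting of: (a) all observed empirical means concentrate, i.e.\ $|\hat\mu_t(i)-\mu_i|\le\sqrt{2\log T/(n_t(i)\vee1)}$ for every round $t$ and arm $i$; and (b) a delay-counting bound: for every arm $i$, every round $t$, and every target $N$, if arm $i$ has been pulled at least $3N/q_i$ times by round $t-d_i(q_i)$, then $n_t(i)\ge N$. Event~(a) is the usual concentration for delayed stochastic feedback; it is exactly here that \emph{reward-independence} is used — the set of pulls of arm $i$ whose feedback has arrived is (stochastically) independent of their realized rewards, so a Hoeffding bound together with a union bound over counts and arms applies. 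Event~(b) follows from a Chernoff bound applied to the i.i.d.\ indicators $\indicator\{d_s\le d_i(q_i)\}$ over the pulls of arm $i$ made before round $t-d_i(q_i)$: each indicator is $1$ with probability at least $q_i$ by definition of the quantile, and such pulls have certainly returned by round $t$. Both events fail with probability $\mathrm{poly}(1/T)$, costing $O(1)$ in expected regret.

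On the good event the analysis proceeds as in plain SE: the optimal arm $i^*$ is never eliminated since $UCB_t(i^*)\ge\mu_{i^*}\ge\mu_j\ge LCB_t(j)$ for every $j$; and while a suboptimal arm $i$ is active at a check round $t$ we have $\Delta_i\le 2\sqrt{2\log T/(n_t(i)\vee1)}+2\sqrt{2\log T/(n_t(i^*)\vee1)}$, hence $\min\{n_t(i),n_t(i^*)\}\le 32\log T/\Delta_i^2=:N_i$. Combining this with Event~(b) and the round-robin invariant $|m_t(i)-m_t(i^*)|\le1$ (valid while $i$ is active, as both arms are pulled once per batch), arm $i$ is eliminated at the first check after it has been pulled $\Theta(N_i/q_i+N_i/q_{i^*})$ times \emph{and} a further $d_i(q_i)+d_{i^*}(q_{i^*})$ rounds have passed so those samples arrive. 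Thus, with $M_i$ the number of pulls of arm $i$, $M_i\le \Theta\big(N_i(1/q_i+1/q_{i^*})\big)+(\text{pulls of $i$ during a ``waiting tail'' of length}\le d_i(q_i)+d_{i^*}(q_{i^*}))$; substituting the first part into $\mathcal R_T=\sum_{i\ne i^*}\Delta_i\,\E[M_i]$ yields the leading term $\sum_{i\ne i^*}\Theta(\log T/\Delta_i)(1/q_i+1/q_{i^*})$.

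The remaining task — bounding the waiting-tail regret — is the main obstacle, since the naive ``one pull per waiting round'' estimate gives a sum $\sum_i\Delta_i(d_i(q_i)+d_{i^*}(q_{i^*}))$ rather than the claimed maximum. The structural point is that round-robin keeps the active set large exactly when many arms are simultaneously waiting: order the suboptimal arms $j_1,\dots,j_{K-1}$ by elimination time, and observe that throughout $j_\ell$'s waiting tail every arm eliminated no earlier than $j_\ell$ (together with $i^*$) is still active, so the batch size is at least $K-\ell+1$. Hence $j_\ell$ accrues at most $\big(d_{j_\ell}(q_{j_\ell})+d_{i^*}(q_{i^*})\big)/(K-\ell+1)+O(1)$ extra pulls, and the total waiting regret is at most $\big(\max_{i\ne i^*}\Delta_i(d_i(q_i)+d_{i^*}(q_{i^*}))\big)\sum_{\ell=1}^{K-1}\frac1{K-\ell+1}+O(K)=O(\log K)\max_{i\ne i^*}\Delta_i(d_i(q_i)+d_{i^*}(q_{i^*}))$, the $O(K)$ folding into the leading term; this gives~\eqref{eq:SE_arms_quantiles}.

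For the single-quantile bound~\eqref{eq:SE_single_quantile} the delay term must lose even the $\log K$, and here one exploits that with $q_i\equiv q$ all waiting tails share the \emph{same} length $d(q):=\max_i d_i(q)$. The extra observation is that once every currently active arm has already been pulled $\Theta(N_i/q)$ times — which occurs at most $d(q)+O(K)$ rounds before that arm's elimination — \emph{all} remaining suboptimal arms are eliminated within the next $d(q)+O(K)$ rounds, so the phase in which an entire round-robin batch consists of already-satisfied ``waiting'' arms is confined to a single window of length $O(d(q)+K)$ and contributes only $O(d(q))+O(K)$ regret; every other batch still contains an arm whose $\Theta(N_i/q)$ pull budget is unmet, and is charged against that budget (hence into the leading term). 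Optimizing over $q$ then yields~\eqref{eq:SE_single_quantile}. I expect the bookkeeping in this last step — partitioning batches into ``budget-unmet'' and ``fully-waiting'' and absorbing the additive $K$-terms against $\sum_{i}\log T/(q\Delta_i)$ — to be the most delicate part of the write-up.
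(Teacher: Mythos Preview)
Your argument for~\eqref{eq:SE_arms_quantiles} is essentially the paper's: the same good event (Hoeffding on the observed means plus a Chernoff bound relating $n_t(i)$ to $m_{t-d_i(q_i)}(i)$), the same decomposition $M_i\le m_{\tau_i-d_{\max}(i)}(i)+\big(m_{\tau_i}(i)-m_{\tau_i-d_{\max}(i)}(i)\big)$, and the same harmonic sum $\sum_{\ell}1/(K-\ell+1)\le\log K$ over the elimination order to handle the waiting tails.

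The gap is in your argument for~\eqref{eq:SE_single_quantile}. Your plan is to partition batches into ``some active arm has unmet budget'' versus ``all active arms are waiting,'' observe that the latter is a single terminal window of length $O(d(q))$, and charge every former batch ``against the unmet arm's budget.'' But a type~1 batch also pulls every \emph{other} active suboptimal arm once, and those arms may already be in their own waiting phase with $\Delta_i$ much larger than the $\Delta_j$ of the unmet arm; their pulls contribute regret $\Delta_i$ that is neither covered by arm $j$'s budget (which only accounts for $N_j\Delta_j/q$) nor by arm $i$'s own exhausted budget. Concretely, if $\Delta_1\gg\Delta_2$ and $d(q)$ is large, arm~1 spends its entire waiting window inside type~1 batches (arm~2's budget is unmet), and its $\Theta(d(q)/|S|)\cdot\Delta_1$ regret there is exactly the term that, summed over arms, reproduces the $\log K$ factor you are trying to eliminate. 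Your charging scheme does not explain how this disappears.

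The paper removes the $\log K$ by a different mechanism: a \emph{phase} decomposition. Define $t_\ell$ as the round when every active arm has been pulled $32\log T/(q\epsilon_\ell^2)$ times with $\epsilon_\ell=2^{-\ell}$, and group arms by the phase $[t_\ell+d_{\max},\,t_{\ell+1}+d_{\max}]$ in which they are eliminated. Two facts drive the bound: (i) any arm eliminated in phase $\ell$ has $\Delta_i\le 3\epsilon_{\ell+1}$; and (ii) using $\sum_i m_t(i)=t$, the \emph{total} number of pulls (across all arms) in the length-$d_{\max}$ waiting window attached to phase $\ell$ is at most $d_{\max}+K$. Hence the waiting regret of phase $\ell$ is at most $3\epsilon_{\ell+1}(d_{\max}+K)$, and summing the geometric series $\sum_\ell\epsilon_{\ell+1}\le 1$ gives $O(d_{\max}+K)$ with no $\log K$. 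The key idea you are missing is to bound the waiting cost per phase by (max gap in that phase)$\times d_{\max}$ via the identity $\sum_i m_t(i)=t$, and then exploit that these max gaps decay geometrically.
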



Particularly, \cref{thm:SE_arm_quantiles} implies that for fixed delay $d$, we have $\mathcal{R}_{T} = O(\mabregret + d)$. 
Note that the bounds in \cref{eq:SE_arms_quantiles,eq:SE_single_quantile} are incomparable:  \cref{eq:SE_arms_quantiles} allows choosing a different quantile for each arm, while \cref{eq:SE_single_quantile} gives a slightly better dependence on $K$.

We now turn to show the main ideas of the proof of \cref{thm:SE_arm_quantiles}, deferring the full proof to~\cref{appendix:proof_SE_arm_quantiles}.

\begin{proof}[Proof of \Cref{thm:SE_arm_quantiles} (sketch)]
    Here we sketch the proof of~\cref{eq:SE_arms_quantiles}; proving \cref{eq:SE_single_quantile} is similar, but requires a more delicate argument in order to eliminate the $K$ dependency in the second term. 

    Fix some vector $\vec{q}\in(0,1]^K$ and let $d_{max} = \max_{i \ne i^*}d_i(q_i)$. First, with high probability all the true means of the reward remain within the confidence interval (i.e., $\forall t,i:\mu_{i}\in[LCB_{t}(i),UCB_{t}(i)]$). Under this condition, the optimal arm is never eliminated. If a sub-optimal arm $i$ was not eliminated by time $t$ then, $LCB_{t}({i^{*}}) \leq UCB_{t}(i)$.
    which implies with high probability,
    \begin{align*}
        \Delta_i = \mu_{i^{*}} - \mu_i 
        \leq 
        2\sqrt{\frac{2\log(T)}{n_{t}(i)}} + 2\sqrt{\frac{2\log(T)}{n_{t}(i^*)}}.
    \end{align*}  
    Now, using a concentration bound, we show that the amount of \emph{observed} feedback from arm $j$ at time $t$, is approximately a fraction $q_j$ of the number of pulls at time $t-d_j(q_j)$. We use that to bound $n_t(i)$ and $n_t(i^*)$ from below and obtain,
    \[
        m_{t-d_{max}}(i)
        = 
        O\left(
            \frac{\log T}{\Delta_{i}^{2}}\left(\frac{1}{q_{i}}+\frac{1}{q_{i^{*}}}\right)
        \right)
        .
    \]
    Now, if $t$ is the last time we pulled arm $i$, then we can write the total regret from arm $i$ as,
    \begin{align*}
        m_{t}(i)\Delta_{i} 
        &=
        m_{t-d_{max}}(i)\Delta_{i}+(m_{t}(i)-m_{t-d_{max}}(i))\Delta_{i}\\
        &\leq
        O \left(\frac{\log T}{\Delta_{i}}\left(\frac{1}{q_{i}}+\frac{1}{q_{i^{*}}}\right)\right) + m_{t}(i)-m_{t-d_{max}}(i).
    \end{align*}
    The difference $m_t(i) - m_{t-d_{max}}(i)$ is number of times we pull $i$ between time $t-d_{max}$ and $t$. This is trivially bounded by $d_{max}$, but since we round-robin over active arms, we can divide it by the number of active arms. At the first elimination there are $K$ active arms, in second there $K-1$ active arms, and so on. When summing the regret of all arms we get,
    \begin{align*}
        \mathcal{R}_{T}
        &= 
        O\left(\sum_{i\ne i^*}
            \frac{\log T}{\Delta_{i}}\left(\frac{1}{q_{i}}+\frac{1}{q_{i^{*}}}\right)
        \right)
        + \log(K)d_{max}\\
        &=
        O\left(\sum_{i\ne i^*}
            \frac{\log T}{\Delta_{i}} \left(\frac{1}{q_{i}} + \frac{1}{q_{i^{*}}}\right)
        \right) + \log(K)\max_{i}d_{i}(q_{i}),
    \end{align*}
    where we have used the fact that $1/K+1/(K-1)+...+1/2\leq\log(K)$. This proves the bound in \cref{eq:SE_arms_quantiles}.
\end{proof}

\subsection{Phased Successive Elimination}
\label{sec:phased-SE}

Next, we introduce a phased version of successive elimination, we call Phased Successive Elimination (PSE). Inspired by phased versions of the commonly used algorithms \cite{auer2010ucb}, the algorithm works in phases. Unlike SE, it does not round-robin naively, instead it attempts to maintains a balanced number of \textit{observed} feedback at the end of each phase. As a result, PSE does not depend on the delay of the optimal arm. Surprisingly, the dependence on the delay of the sub-optimal arms remain similar, up to log-factors. 

On each phase $\ell$ of PSE, we sample arms that were not eliminated in previous phase in a round-robin fashion. When we observe at least $16\log(T)/2^{-2\ell}$ samples for an active arm, we stop sampling it, but keep sampling the rest of active arms. Once we reach enough samples from all active arms, we perform elimination the same way we do on SE, and advance to the next phase $\ell + 1$. The full description of the algorithm is found in \cref{alg:phased-SE}.

\begin{algorithm}[h]
    \caption{Phased Successive Elimination (PSE)} \label{alg:phased-SE}
    \begin{algorithmic}
    \STATE \textbf{Input:} number of rounds $T$, number of arms $K$
    \STATE \textbf{Initialization:} $S \gets [K],~ \ell \leftarrow 0,~ t \leftarrow 1 $
    \WHILE{$t < T$}
        \STATE Set $\ell \leftarrow \ell + 1$ (phase counter)
        \STATE Set $S_{\ell} \gets S$
        \WHILE{ $S_{\ell} \neq \emptyset$}
            \STATE Pull each arm $i \in S_{\ell}$, observe incoming feedback
            \STATE Set $t \leftarrow t + |S_{\ell}|$
            \STATE Call \textit{Update-Parameters} (Procedure \ref{alg:updates})
            \STATE Remove from $S_\ell$ all arms that where observed at least $16\log(T)/2^{-2\ell}$ times.
        \ENDWHILE
        \STATE Remove from $S$ all arms $i$ such that exists $j$ with $UCB_t({i})<LCB_t({j})$
    \ENDWHILE
    \end{algorithmic}
\end{algorithm}

\begin{theorem}
    \label{thm:PSE_reward_indep}
    For reward-independent delay distributions,
    the expected pseudo-regret of \cref{alg:phased-SE} (PSE) satisfies
    \begin{equation}
        \label{eq:regret_PSE}
        \begin{aligned}
            \mathcal{R}_{T}  
            \leq  
            &\min_{\vec{q} \in (0,1]^K}  \sum_{i \neq i^*} \frac{290 \log{T}}{q_i\Delta_i}
            + \log(T)\log(K)\max_{i \neq i^*} d_i(q_i)\Delta_i.
        \end{aligned}
    \end{equation}
\end{theorem}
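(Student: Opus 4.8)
The plan is to follow the structure of the proof of \Cref{thm:SE_arm_quantiles}, reusing its two high‑probability ingredients and replacing the pull accounting by one adapted to the phased schedule. Fix an arbitrary $\vec q\in(0,1]^K$, write $d_i:=d_i(q_i)$, and condition on the good event $E$ (probability $1-O(1/T)$) consisting of: \emph{(i)} all true means stay in their confidence intervals, $\mu_i\in[LCB_t(i),UCB_t(i)]$ for all $t,i$ --- the standard Hoeffding bound, valid because reward‑independence makes the observed rewards of each arm an i.i.d.\ subsample; and \emph{(ii)} the delay‑concentration event that, for all $t$ and $i$, $m_{t-d_i}(i)\ge c\log(TK)/q_i$ implies $n_t(i)\ge\tfrac12 q_i\,m_{t-d_i}(i)$ --- which holds since each pull made by round $t-d_i$ returns by round $t$ independently with probability at least $q_i$, so $n_t(i)$ stochastically dominates a $\mathrm{Bin}(m_{t-d_i}(i),q_i)$, and a Chernoff bound with a union bound over $t\le T$, $i\in[K]$ (and the $O(\log T)$ phases) closes it. On $E$ the optimal arm is never eliminated; an arm still active at the start of phase $\ell$ has $\ge n_{\ell-1}=16\log(T)2^{2(\ell-1)}$ observations and hence confidence half‑width $O(2^{-\ell})$, so every such arm has $\Delta_i\le 2\sqrt2\cdot2^{-\ell}$; consequently a sub‑optimal arm $i$ is eliminated by the end of phase $\ell_i:=\lceil\log_2(2/\Delta_i)\rceil$, where $n_{\ell_i}=\Theta(\log(T)/\Delta_i^2)$.

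I would then bound the total number of pulls $m_T(i)$ of a fixed sub‑optimal arm $i$. Let $t'$ be the round of arm $i$'s penultimate pull inside phase $\ell_i$; since $i\in S_{\ell_i}$ then, $n_{t'}(i)<n_{\ell_i}$, and the reverse form of event \emph{(ii)} gives $m_{t'-d_i}(i)\le\tfrac{2}{q_i}n_{t'}(i)+\tfrac{c\log(TK)}{q_i}\le\tfrac{C_0}{q_i}n_{\ell_i}$ (the additive term is absorbed since $n_{\ell_i}\ge16\log T$). After round $t'-d_i$, arm $i$ is pulled only during a window of at most $d_i+O(1)$ further rounds before it is removed from $S$, so $m_T(i)\le\tfrac{C_0}{q_i}n_{\ell_i}+W_i$, where $W_i$ is the number of those window‑pulls. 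Since $n_{\ell_i}\Delta_i^2=O(\log T)$, the first term contributes $O\bigl(\sum_{i\ne i^*}\log(T)/(q_i\Delta_i)\bigr)$ to the regret, i.e.\ the first term of \Cref{eq:regret_PSE}.

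It remains to bound the delay overhead $\sum_{i\ne i^*}\Delta_iW_i$, and this is where the phased round‑robin helps: a crude bound $W_i\le d_i$ would only give $O(\sum_i\Delta_id_i)$, but amortizing over simultaneously‑active arms improves this to the claimed $\log(T)\log(K)\max_{i\ne i^*}\Delta_id_i$. In the typical case the $d_i$‑window preceding arm $i$'s elimination lies entirely inside phase $\ell_i$, because raising its observation count to $n_{\ell_i}$ forces enough phase‑$\ell_i$ pulls of arm $i$ that at least $d_i$ rounds of the phase have elapsed before its last pull. Within phase $\ell$, order the arms of $S_\ell$ by the round they leave $S_\ell$ as $\sigma(1),\dots,\sigma(k_\ell)$; while arm $\sigma(r)$ is inside its $d_{\sigma(r)}$‑window, arms $\sigma(r),\dots,\sigma(k_\ell)$ are all still being round‑robined, so $\sigma(r)$ is pulled at most $d_{\sigma(r)}/(k_\ell-r+1)+O(1)$ times there. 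Thus the overhead charged to phase $\ell$ is
\[
\sum_{r=1}^{k_\ell}\Delta_{\sigma(r)}\Bigl(\frac{d_{\sigma(r)}}{k_\ell-r+1}+O(1)\Bigr)
\ \le\ \Bigl(\max_{i\ne i^*}\Delta_id_i\Bigr)\sum_{j=1}^{k_\ell}\frac1j+O\!\bigl(2^{-\ell}K\bigr)
\ \le\ (1+\log K)\max_{i\ne i^*}\Delta_id_i+O\!\bigl(2^{-\ell}K\bigr),
\]
using $\Delta_{\sigma(r)}\le2\sqrt2\cdot2^{-\ell}$ for the last sum. Summing over the $O(\log T)$ non‑empty phases gives a total overhead $O\bigl(\log(T)\log(K)\max_{i\ne i^*}\Delta_id_i\bigr)+O(K)$, and the $O(K)$ is dominated by the first term; taking the minimum over $\vec q$ then yields \Cref{eq:regret_PSE}, up to tracking explicit constants.

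The step I expect to be the main obstacle is justifying that the $d_i$‑window sits inside its own phase: one must handle the case where a burst of long‑delayed feedback from earlier phases raises $n_t(i)$ past $n_{\ell_i}$ after only a few phase‑$\ell_i$ pulls, so that the window spills backwards and the ``$k_\ell-r+1$ active arms'' count is not directly available. I expect this is exactly what forces the extra $\log(T)$ factor on the delay term compared with \Cref{thm:SE_arm_quantiles}: a spilled window meets $O(\log T)$ phases, and redoing the round‑robin amortization phase by phase inside it costs that factor. The remaining work is routine bookkeeping --- propagating the union bounds and confidence widths uniformly over all phases and arms, and verifying that the $O(1)$‑per‑arm‑per‑phase and concentration‑slack terms are absorbed into the stated constants and into $\log T\log K$.
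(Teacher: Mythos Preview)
Your proposal is correct and follows essentially the same route as the paper: the same two high-probability events, the bound $\Delta_i\le 2\sqrt2\,\epsilon_{\ell+1}$ for arms surviving phase~$\ell$, the split $m_T(i)=m_{\tau_i-d_i(q_i)}(i)+(\text{window pulls})$ with the first piece controlled by the delay-concentration event, and the per-phase harmonic-sum amortization of the window term.

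The only remark worth making is that what you call the ``main obstacle'' is not treated as an obstacle in the paper at all. Rather than arguing that the $d_i$-window typically lies inside phase~$\ell_i$ and then patching the spillover, the paper simply writes
\[
\sum_{i\ne i^*}\sum_{t=\tau_i-d_i(q_i)}^{\tau_i}\indicator\{a_t=i\}\Delta_i
=\sum_{\ell=1}^{L}\sum_{i}\sum_{t}\indicator\bigl\{a_t=i,\ t\in[\tau_i-d_i(q_i),\tau_i]\cap[t_{\ell-1}+1,t_\ell]\bigr\}\Delta_i
\]
from the outset and applies the round-robin/harmonic-sum bound phase by phase, exactly as you anticipate in your final paragraph. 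The $\log T$ factor arises because there are at most $\log T$ phases, each contributing at most $(\log K+1)\max_{i\ne i^*}d_i(q_i)\Delta_i+O(K)$; the accumulated $O(K\log T)$ slack is absorbed into $\sum_{i\ne i^*}\log(T)/(q_i\Delta_i)$. So you can drop the ``typical case'' framing and do the phase decomposition uniformly; the rest of your accounting goes through as written.
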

The proof of~\cref{thm:PSE_reward_indep} appears in~\cref{appendix:PSE-proof}.
Similarly to the proof \Cref{thm:SE_arm_quantiles}, both SE and PSE eliminate arm $i$ approximately whenever 
\[
    \sqrt{\frac{\log T}{n_{t}(i)}}+\sqrt{\frac{\log T}{n_{t}(i^{*})}} \approx \Delta_{i}.
\]
In a sense, PSE aims to shrink both terms in the left-hand side at a similar rate, which avoid the dependence on $1/q_{i^*}$ in the first term of \cref{eq:regret_PSE}. The down side is in the second term: SE keeps sampling all active arms at the same rate, which gives rise to the $\log(K)$ dependence in the second term. Under PSE this is no longer the case: naively, one could show a linear dependence on $K$, but a more careful analysis that uses round-robin sampling within phases gives a $\log(T)\log(K)$ dependence in the second term of \cref{eq:regret_PSE}.

One important example in which PSE dominates SE is the arm-dependent packet loss setting,  
where we get the feedback of arm $i$ immediately (i.e., zero delay) with probability $p_i$, 
and infinite delay otherwise. The regret of SE in this setting is $O(\sum_{i\ne i^{*}}\log(T)/\Delta_{i}\cdot(\ifrac{1}{p_i}+\ifrac{1}{p_{i^{*}}}))$. 
On the other hand, PSE's regret is bounded by $O(\sum_{i\ne i^{*}}\log T/(\Delta_{i}p_{i}))$. 
The difference in the regret is substantial when $p_{i^*}$ is very small. In fact, small amount of feedbacks from the optimal arm only benefits PSE, as it would keep sampling it until it gets enough feedbacks.

\subsection{Lower Bound}
\label{sec:lower_bound_reward_indep}

We conclude this section with showing an instance-dependent lower bound (an instance is defined by the set of sub-optimality gaps $\Delta_i$). 


\begin{theorem} \label{thm:lower-bound-ind}
    Let $ALG^{delay}$ be an algorithm that guarantees a regret bound of $T^\alpha$ over any instance. For any sub-optimality gaps set $S_\Delta = \{\Delta_i : \Delta_i \in [0,\tfrac14]\}$ of cardinality $K$, a quantile $q\in(0,1]$, and $\Tilde{d} \leq T$, there exists an instance with an order on $S_\Delta$,
    and delay distributions with $d_i(q) = \tilde{d}$ for any $i$, such that $ALG^{delay}$'s regret on that instance is
    \begin{align} \label{eq:lower-bound-ind}
        \mathcal{R}_T
        \geq
        \frac{1}{128}\sum_{i:S_\Delta \ni \Delta_i > 0}\frac{(1 - \alpha)\log T}{q\Delta_{i}}
        + \frac{1}{2}\Deltabar\max_{i \in [K]} d_i(q)
    \end{align}
    for sufficiently large $T$, 
    where $\Deltabar = \frac{1}{K} \sum_{i \in [K]} \Delta_i$. 
\end{theorem}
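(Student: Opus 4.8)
The plan is to prove the two terms of the lower bound separately, using two different constructions, and then combine them by noting that any algorithm achieving the claimed regret must do well on both. For the first (logarithmic) term, I would reduce to the standard instance-dependent lower bound for non-delayed MAB, but with a twist to introduce the $1/q$ factor: take delay distributions where each arm has delay $0$ with probability exactly $q$ and delay $T$ (effectively infinite, i.e.\ never observed within the horizon) with probability $1-q$. Then by round $t$ the algorithm has, in expectation, only about $q\cdot m_t(i)$ useful observations from arm $i$, so the effective sample size is scaled down by $q$. Plugging this into the classical change-of-measure / KL-divergence argument (à la Lai–Robinson, or the version that accounts for the prescribed regret exponent $\alpha$, so that the leading constant becomes $(1-\alpha)$) gives that any arm with gap $\Delta_i>0$ must be pulled $\Omega((1-\alpha)\log T/(q\Delta_i^2))$ times, contributing $\Omega((1-\alpha)\log T/(q\Delta_i))$ to the regret. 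This has to be done carefully because the delays are random and reward-independent here, but since the observation indicator is independent of the rewards, the information-theoretic argument goes through essentially unchanged with $m_t(i)$ replaced by its $q$-discounted version; I would use that $d_i(q)=\tilde d$ is consistent with this choice (a $0$/$T$ two-point distribution has $q$-quantile $0$ if $q$ is small, so I may need to shift the small mass to $\tilde d$ rather than $0$ — this does not change the information argument since it only delays observations further).

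For the second (delay-dependent) term, I would use the standard ``delay forces a burst of bad pulls'' idea: construct an instance where all delays equal exactly $\tilde d = d_i(q)$ deterministically (or: with the $q$-quantile equal to $\tilde d$, concentrating mass appropriately), and observe that during the first $\tilde d$ rounds the algorithm receives no feedback at all, hence its choices are independent of the reward realizations. Over a random permutation of which arm is optimal (or a random assignment of the gaps $\Delta_i$ to arms), the expected per-round regret in those first $\tilde d$ rounds is at least $\bar\Delta = \frac1K\sum_i \Delta_i$ times a constant (the algorithm cannot beat random guessing before any feedback), giving $\Omega(\tilde d\,\bar\Delta)$ regret. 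The factor $\tfrac12$ is just the slack from bounding the probability of picking a suboptimal arm below $1/2$ on average. I would phrase this via Yao-type / averaging over instances so that it applies to any (possibly randomized) algorithm, consistent with the theorem's ``there exists an instance'' quantifier.

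To combine the two bounds into a single instance, I would interleave the two constructions: use delay distributions that with probability $q$ give delay $\tilde d$ and with probability $1-q$ give delay $T$ (never observed), so that simultaneously (a) the $q$-quantile is $\tilde d$ for every arm, as required; (b) the first $\tilde d$ rounds see no feedback, yielding the $\bar\Delta \tilde d$ term; and (c) the effective sample size is $q$-discounted, yielding the $\log T/(q\Delta_i)$ term. One then checks that the two regret contributions come from essentially disjoint sources (rounds $\le \tilde d$ versus the long-run pull counts) so they add up to within constants, landing the $\frac1{128}$ and $\frac12$ constants. The main obstacle I anticipate is the bookkeeping in making one instance serve both purposes while keeping the information-theoretic argument valid under the $q$-thinned, $\tilde d$-delayed observation process — in particular verifying that the change-of-measure computation only sees the thinned observation count and that delaying the observed samples by $\tilde d$ (rather than revealing them immediately) costs nothing in the lower bound, only shrinking the adversary's options. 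I would isolate this in a lemma of the form ``any algorithm on the $q$-thinned delayed instance behaves, information-theoretically, like an algorithm on a non-delayed instance with sample budget scaled by $q$,'' and then quote the standard lower bound.
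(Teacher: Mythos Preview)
Your proposal is correct and follows essentially the same approach as the paper: the same delay construction (delay $\tilde d$ with probability $q$, infinite otherwise), the same two-term decomposition, and the same averaging-over-orderings argument for the $\bar\Delta\,\tilde d$ term. The one methodological difference is that, for the $\log T/(q\Delta_i)$ term, the paper does not run the KL argument directly on the thinned/delayed observation process; instead it builds an explicit reduction---a non-delayed algorithm $ALG^{MAB}$ over roughly $qT/4$ rounds that simulates $ALG^{delay}$, plays only on the $\mathrm{Ber}(q)$ ``observed'' rounds, and satisfies $\mathcal{R}^{ALG^{MAB}}_{qT/4}\le q\,\mathcal{R}^{ALG^{delay}}_T + O(1)$---then applies the standard non-delayed lower bound to $ALG^{MAB}$. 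This sidesteps exactly the bookkeeping you flagged as the main obstacle, and you may find it cleaner than formalizing your proposed ``$q$-thinned information'' lemma.
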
  
The lower bound is proved using delay distribution which is homogeneous across all arms: at time $t$, the delay is $\Tilde{d}$ with probability $q$ and $\infty$ otherwise. The upper bound of SE and PSE involves a minimization over $q_i$. In this case, it is solved by $q_i = q$ for all $i$. Therefore, the best comparison is to \cref{eq:SE_single_quantile} in \cref{thm:SE_arm_quantiles}, where a single quantile is chosen.
\Cref{thm:lower-bound-ind} shows that SE is near optimal in this case. The first term in \cref{eq:SE_single_quantile} is aligned with \cref{eq:lower-bound-ind}, up to constant factors. The difference between the two is on the second term, where there is a $\Deltabar$ factor in the lower bound.

The second term in \cref{eq:lower-bound-ind} is due to the fact that the algorithm does not get any feedback for the first $\Tilde{d} = d_i(q)$ rounds. Thus, any order on $\Delta$ is statistically indistinguishable from the others for the first rounds. Therefore, the learner suffers $\Deltabar$ regret on average, over the first $\tilde{d}$ rounds, under at least one of the instances.
The first term is achieved using a reduction from instance-depended lower bound for MAB without delays~(\citealp{kleinberg2010regret}; see also \citealp{lattimore2020bandit}). The regret is bounded from below by this term, even if the instance $I$ is known to the learner (the regret guarantee over the other instances ensures that the algorithm does not specialized particularly for that instance). A more detailed lower bound and its full proof is provided in~\cref{appendix:lower-bound-ind}.

\section{Reward-dependent Delays}

We next consider the more challenging case where we let the reward and the delays to be probabilistically dependent. 
Namely, there is no restriction on the reward-delay joint distribution.


The main challenge in this setting is that the \textit{observed} empirical mean is no
longer an unbiased estimator of the expected reward; e.g., if the delay given a reward of $0$ is shorter than the delay given that the reward is $1$, then the observed empirical mean would be biased towards $0$. Therefore, the analysis from the previous section
does not hold anymore. 
To tackle the problem, we present a new variant of successive elimination, Optimistic-Pessimistic Successive Elimination (OPSE), described in \cref{alg:SE-reward-dependent}. When calculating UCB the agent is optimistic regarding the unobserved samples, by assuming all missing samples have the maximal reward (one). 
When calculating LCB the agent assumes all missing samples have the minimal reward (zero). 
We emphasize that unlike the previous section, here the estimators take into account all samples, including the unobserved ones. The above implies that the confidence interval computed by OPSE contains the confidence interval computed by non-delayed SE.


\begin{algorithm}[h]
    \caption{Optimistic-Pessimistic Successive Elimination}
    \begin{algorithmic}     \label{alg:SE-reward-dependent}
        \STATE \textbf{Input:} number of rounds $T$, number of arms $K$
        \STATE \textbf{Initialization:} $S \gets [K] $, ~$t\leftarrow1$
        \WHILE{$t < T$}
            \STATE Pull each arm $i \in S$
            \STATE Observe any incoming feedback
            \STATE Set $t \leftarrow t + |S|$
            \FOR{$i \in S$}
                \STATE $m_{t}(i) \gets \sum_{s < t}\indicator\{a_{s}=i\} $ 
                \hfill {\color{gray} \# number of pulls}
                \STATE $n_{t}(i) \gets \sum_{s:s+d_{s}< t}\indicator\{a_{s}=i\} $ 
                \hfill {\color{gray} \# number of observations}
                \STATE $\hat{\mu}_{t}^{-}(i) \gets \frac{1}{m_{t}(i)}\sum_{s:s+d_{s}< t}\indicator\{a_{s}=i\}r_{s}$
                \hfill {\color{gray} \# pessimistic estimator for $\mu_i$}
                \STATE $\hat{\mu}_{t}^{+}(i) \gets \frac{m_{t}(i) - n_t(i)}{m_{t}(i)} + \hat{\mu}_{t}^{-}(i)$
                \hfill {\color{gray} \# optimistic estimator for $\mu_i$}
                \STATE $LCB_{t}(i) \gets \hat{\mu}_{t}^{-}(i)-\sqrt{\frac{2\log T}{m_{t}(i)}}$
                \STATE $UCB_{t}(i) \gets \hat{\mu}_{t}^{+}(i)+\sqrt{\frac{2\log T}{m_{t}(i)}}$
            \ENDFOR
            \STATE Remove from $S$ all arms $i$ such that exists $j$ with $UCB_t({i})<LCB_t({j})$
        \ENDWHILE
    \end{algorithmic}
\end{algorithm}


For OPSE we prove the following regret guarantee.
\begin{theorem}
\label{thm:se_reward_dep}
    For reward-dependent delay distributions,
    the expected pseudo-regret of \cref{alg:SE-reward-dependent} is bounded by
    \begin{equation}
        \label{eq:se_reward_dep}
        \begin{aligned}
            \mathcal{R}_{T} \leq& \sum_{i \ne i^{*}}\frac{1166\log T}{\Delta_i} 
            + 4\log\left(K\right)\Big(\max_{i\ne i^{*}}d_i(q_i) +d_{i^*}(q_{i^*})\Big),
        \end{aligned}
    \end{equation}
    where $q_{i^*}=1-\min_{i\ne i^{*}}\Delta_i/4$ and $q_i = 1 - \Delta_{i}/4$ for $i\neq i^*$.
\end{theorem}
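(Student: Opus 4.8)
Here is how I would prove \Cref{thm:se_reward_dep}.

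The plan is to run the successive-elimination analysis while accounting for the fact that, in the reward-dependent setting, $\hat{\mu}^{+}$ and $\hat{\mu}^{-}$ are \emph{biased} estimators of the arm means. The structural fact that rescues us is a deterministic sandwich: for every arm $i$ and round $t$, letting $\bar{\mu}_{t}(i):=\frac{1}{m_{t}(i)}\sum_{s<t}\indicator\{a_{s}=i\}r_{s}$ be the (hypothetical) average over \emph{all} pulls of $i$, we have $\hat{\mu}^{-}_{t}(i)\le\bar{\mu}_{t}(i)\le\hat{\mu}^{+}_{t}(i)$, because a still-unobserved pull contributes $0$ to $\hat{\mu}^{-}_{t}(i)$ and $1$ to $\hat{\mu}^{+}_{t}(i)$ while its true reward $r_{s}\in[0,1]$ lies in between. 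Since the rewards of the pulls of a fixed arm form an i.i.d.\ sequence with mean $\mu_{i}$ irrespective of the delays, a Hoeffding bound with a union over the number of pulls gives a good event $\mathcal{E}_{1}$ of probability $1-\mathrm{poly}(1/T)$ on which $|\bar{\mu}_{t}(i)-\mu_{i}|\le\sqrt{2\log T/m_{t}(i)}$ for all $t,i$; combined with the sandwich this yields $\mu_{i}\in[LCB_{t}(i),UCB_{t}(i)]$, so the optimal arm is never eliminated.

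Next I bound the number of pulls of a fixed sub-optimal arm $i$. Let $\tau_{i}$ be its last pull and set $D:=\max_{j\ne i^{*}}d_{j}(q_{j})+d_{i^{*}}(q_{i^{*}})$. Split the pulls of $i$ and of $i^{*}$ into \emph{old} ones — those made before round $\tau_{i}-D-1$ — and \emph{recent} ones, with counts $M_{i},M_{i^{*}}$ and $R_{i},R_{i^{*}}$ respectively. An old pull of $i$ has been waiting at least $d_{i}(q_{i})+1$ rounds, so by the definition of the quantile it is still unobserved with probability at most $1-q_{i}=\Delta_{i}/4$; an old pull of $i^{*}$ is still unobserved with probability at most $1-q_{i^{*}}=\tfrac14\min_{j\ne i^{*}}\Delta_{j}\le\Delta_{i}/4$. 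A Chernoff bound with a union over arms and over the sample count gives a further good event $\mathcal{E}_{2}$ on which the number of unobserved old pulls of $i$ (resp.\ $i^{*}$) is at most $\tfrac{\Delta_{i}}{4}M_{i}+\sqrt{cM_{i}\log T}$ (resp.\ $\tfrac{\Delta_{i}}{4}M_{i^{*}}+\sqrt{cM_{i^{*}}\log T}$). Now $i$ surviving at round $\tau_{i}$ means $UCB_{\tau_{i}}(i)\ge LCB_{\tau_{i}}(i^{*})$; the gap between $\hat{\mu}^{+}_{\tau_{i}}(i)$ and $\bar{\mu}_{\tau_{i}}(i)$ is exactly the unobserved fraction of $i$'s pulls, which is at most $\tfrac{\Delta_{i}}{4}+\sqrt{c\log T/M_{i}}+R_{i}/m_{\tau_{i}}(i)$ on $\mathcal{E}_{1}\cap\mathcal{E}_{2}$, and symmetrically for $\hat{\mu}^{-}_{\tau_{i}}(i^{*})$. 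Since round-robin keeps pull counts of active arms within one of each other (so $m_{\tau_{i}}(i),m_{\tau_{i}}(i^{*})\ge M:=\min\{M_{i},M_{i^{*}}\}$ and $M_{i},M_{i^{*}}\le M+1$), expanding the survival inequality and cancelling $\mu_{i^{*}}-\mu_{i}=\Delta_{i}$ against the two $\Delta_{i}/4$ bias terms gives
\[
    \frac{\Delta_{i}}{2}\;\le\;O\!\left(\sqrt{\frac{\log T}{M}}\right)+\frac{R_{i}+R_{i^{*}}}{M}.
\]
Splitting according to which term on the right dominates yields $M_{i}=O\!\left(\log T/\Delta_{i}^{2}+(R_{i}+R_{i^{*}})/\Delta_{i}\right)$, hence the regret charged to arm $i$ is $m_{\tau_{i}}(i)\Delta_{i}+\Delta_{i}=(M_{i}+R_{i}+1)\Delta_{i}=O(\log T/\Delta_{i})+O(R_{i}+R_{i^{*}})$.

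Finally I sum over the sub-optimal arms and bound $\sum_{i\ne i^{*}}(R_{i}+R_{i^{*}})$ via the round-robin structure: if arm $i$ is the $k$-th to be eliminated then throughout its final window $[\tau_{i}-D-1,\tau_{i}]$ the active set has at least $K-k+1$ arms, so $R_{i},R_{i^{*}}\le\lceil(D+1)/(K-k+1)\rceil$, and summing over $k=1,\dots,K-1$ gives $\sum_{i\ne i^{*}}(R_{i}+R_{i^{*}})=O(D\log K)+O(K)$. The $O(K)$ term, and the failure of $\mathcal{E}_{1}\cap\mathcal{E}_{2}$ (contributing $O(T\cdot\mathrm{poly}(1/T))$), are absorbed into $\sum_{i\ne i^{*}}O(\log T/\Delta_{i})$ since each summand there is at least $\log T$, so $\mathcal{R}_{T}\le\sum_{i\ne i^{*}}O(\log T/\Delta_{i})+O(\log K)\big(\max_{i\ne i^{*}}d_{i}(q_{i})+d_{i^{*}}(q_{i^{*}})\big)$, which is the claim after tracking constants.

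The main obstacle is the middle step: controlling the \emph{bias} of the optimistic and pessimistic estimators is exactly where reward-dependence hurts, and taming it forces the quantile choice $q_{i}=1-\Delta_{i}/4$ (so the bias is at most $\Delta_{i}/4$, which is precisely what surfaces as $d_{i}(1-\Delta_{i}/4)$ in the second term of the bound), while the ``recent'' pulls within the last $D$ rounds — which the elimination inequality does not control — must be charged through the round-robin counting argument in order to replace a linear-in-$K$ term by the $\log K$ factor.
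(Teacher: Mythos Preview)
Your proposal is correct and follows essentially the same route as the paper's proof: the deterministic sandwich $\hat{\mu}^{-}_{t}(i)\le\bar{\mu}_{t}(i)\le\hat{\mu}^{+}_{t}(i)$ together with Hoeffding on $\bar{\mu}$ to keep $i^{*}$ alive, a quantile-based Hoeffding bound on the number of still-missing pulls to control the bias (the paper's \Cref{lemma:quantile_bound_hoeffding} and \Cref{lemma:reward-dependent-interval-size-bound}), the same case split on whether the $\sqrt{\log T/m}$ term or the ``recent pulls'' term dominates, and the same harmonic round-robin counting to convert $\sum_{i}R_{i}$ into a $\log K$ factor on the delay quantile. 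The only noticeable difference is that the paper keeps two separate windows, $d_{\max}=\max_{j\ne i^{*}}d_{j}(q_{j})$ for arm $i$ and $d_{\max}^{*}=d_{i^{*}}(q_{i^{*}})$ for arm $i^{*}$, rather than your single window $D=d_{\max}+d_{\max}^{*}$; this is what produces the exact constant $4$ in front of $\log K$, whereas your merged window is slightly looser and, if you track constants as promised, would yield a somewhat larger leading constant on the delay term.
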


\Cref{thm:se_reward_dep} is analogous to \Cref{thm:SE_arm_quantiles} in the reward-independent setting. We show a variant of SE, rather than PSE, because the algorithm relies on the entire feedback, rather than just the observed feedback. In addition, the dependence in $1/q_i$ was the main motivation to introduce PSE in the previous section, here it is bounded by a constant.
In the reward-dependent setting we have much less information on the unobserved feedback, thus it would be unrealistic to expect similar regret bounds. The main difference between the two bounds is that here we are restricted to specific choice of quantiles $q_i$ and $q_{i^*}$,
while the bound in \cref{thm:SE_arm_quantiles} hold for any vector $\vec{q}$. 
A second difference between the theorems is in the additive penalty due to the delay, here it is not multiplied by the sub-optimality gap, $\Delta_i$. This factor $\Delta_i$ also appears in the lower bound in \cref{thm:reward_depndent_lower}, which we discuss later on. 

\begin{proof}[Proof of \Cref{thm:se_reward_dep} (sketch)] 
    Consider time $t$ in which arm $i$ is still active.
    Define $\lambda_{t}(i) = \sqrt{2\log(T)/m_{t}(i)}$. 
    Let $\tilde{\mu}_{t}(i)$ be the empirical mean of arm $i$ that is based on all $m_t(i)$ samples. Formally, $\tilde{\mu}_{t}(i) = \frac{1}{m_{t}(i)} \sum_{s < t}\indicator\{a_s=i\}r_{s}.$
    
    This is the estimator that we would use to compute the confidence interval in non-delayed setting, but since not all observations are available at time $t$, we cannot compute it directly.
    Note that by definition,
    \begin{align} \label{eq:rew-dep-mu-inequality}
        \forall t,i:
        \quad 
        \hat{\mu}_{t}^{-}(i)\leq\tilde{\mu}_{t}(i)\leq\hat{\mu}_{t}^{+}(i)
        .
    \end{align}
    With high probability, using concentration bound on $\tilde{\mu}_t$ and \cref{eq:rew-dep-mu-inequality} we can show that,
    \begin{equation}
        \label{eq:dep-proof-sketch-delta-bound}
        \begin{aligned}
            \Delta_{i} 
            &=
            \mu_{i^{*}}-\mu_{i}\\
            & \leq
            4\lambda_{t}(i)
            + \hat{\mu}_{t}^{+}(i)-\hat{\mu}_{t}^{-}(i) + \hat{\mu}_{t}^{+}(i^{*})-\hat{\mu}_{t}^{-}(i^{*})
            \\
            & =
            4\lambda_{t}(i)
            +
            \frac{m_{t}(i)-n_{t}(i)}{m_{t}(i)}
            +
            \frac{m_{t}(i^*)-n_{t}(i^*)}{m_{t}(i^*)}
            .
        \end{aligned}
    \end{equation}
    Let $\dmax = \max_{i \ne i^*}d_i(1-\Delta_{i}/4)$. 
    Using Hoeffding's inequality, with high probability, we have that,
    \begin{align*}
        n_{t}(i)
        \geq
        (1-\Delta_{i}/4) m_{t-\dmax}(i)-\lambda_{t}(i)m_{t}(i).
    \end{align*}
    Hence,
    \begin{align*}
         \frac{m_{t}(i)-n_{t}(i)}{m_{t}(i)} 
        & =
        \frac{m_{t}(i) - m_{t-\dmax}(i)}{m_{t}(i)}
        + 
        \frac{m_{t-\dmax}(i) - n_{t}(i)}{m_{t}(i)}\\
        & \leq
        \frac{m_{t}(i)-m_{t-\dmax}(i)}{m_{t}(i)} + \Delta_{i}/4 + \lambda_{t}(i).
    \end{align*}
    The third term on the right hand side in \cref{eq:dep-proof-sketch-delta-bound} is bounded in a similar fashion, which gives us the following bound:
    \begin{align*}
        \Delta_{i}
        =
        O\Bigg( \frac{2m_{t}(i)-m_{t-\dmax}(i)-m_{t-\dmax^{*}}(i)}{m_{t}(i)}
        + 
        \sqrt{\frac{\log T}{m_{t}(i)}} \Bigg),
    \end{align*}
    where $\dmax^{*} = \max_{i\ne i^*}d_{i^*}(1-\Delta_i)$.
    Either the last term on the right hand side is larger than the first two, or vice versa. By considering both cases and solving them, we yield the following result: 
    \begin{align*}  
        m_{t}(i)\Delta_i
        = O\bigg(     
        \frac{\log T}{\Delta_i} 
        + 
        m_{t}(i)-m_{t-\dmax}(i) 
        +
        m_{t}(i)-m_{t-\dmax^*}(i)
        \bigg).
    \end{align*}
    The above holds for the last time we pull arm $i$, $\tau_{i}$. Summing over the sub-optimal arms gives us a bound on regret. Similar to the setting of \Cref{sec:rew-ind}, $\sum_{i}m_{\tau_{i}}(i) - m_{\tau_{i} - d}(i) \leq \log(K)d$. Here, we set $d$ to $d_{max}$ or $\dmax^{*}$ accordingly, which gives us the desired regret bound.
\end{proof}

\paragraph{Optimistic-UCB.} 

The dependency on the delay of the optimal arm comes from the bias of $\hat{\mu}_{t}^{-}$. A similar proof would hold for a variant of UCB that uses $\hat{\mu}_{t}^{+}$. In that case, one can obtain a regret bound of
\begin{equation}
    \label{eq:regret_UCB_reward_dependent}
     O\left(
    \sum_{i\ne i^{*}}\frac{\log T}{\Delta_{i}}
    + \sum_{i\ne i^{*}}d_{i}(1-\Delta_{i}/4) \right).
\end{equation}
In most cases, this is a weaker bound than the bound of \Cref{thm:se_reward_dep}, as the second term scales linearly with the number of arms. The advantage of Optimistic-UCB is that it does not depend on the delay of the optimal arm. It still remains an open question whether we can enjoy the benefits of both bounds, and achieve a regret bound that depends only on $\max_{i \ne i^*}d_i(1-\Delta_i)$.

On the other hand, in \Cref{thm:reward_depndent_lower} we show that the dependence in $\max_{i \ne i^*}d_i(1-\Delta_i)$ cannot be avoided, which establishes that our bound is not far from being optimal.
\begin{theorem}
    \label{thm:reward_depndent_lower}
    Let $K=2$. For any $\tilde{d} \leq T$ and $\Delta \in [0,1/2] $, there exist reward distributions with sub-optimality gap $\Delta$ and reward-dependent delay distributions with $d_i(1 - 2\Delta) = \tilde{d}$, such that,
    \begin{equation}
        \label{eq:lower_reward_dep}
        \mathcal{R}_T \geq \frac{1}{2}\Delta \cdot d_i(1-2\Delta).
    \end{equation}
    Moreover, for any algorithm $ALG^{delay}$ that guarantees a regret bound of $T^\alpha$ over any instance, the regret is at least,
    \begin{align*}
        \mathcal{R}_T = \Omega \left( \frac{(1-\alpha)\log(T)}{\Delta} + \Delta \cdot d_i(1-2\Delta) \right),
    \end{align*}
    for sufficiently large $T$.
\end{theorem}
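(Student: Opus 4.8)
My plan is to exhibit, for $K=2$, a pair of instances $I_A,I_B$ with sub-optimality gap exactly $\Delta$ that no learner can tell apart during the first $\tilde d$ rounds, using the freedom of the reward-dependent setting to make the \emph{fast feedback plentiful but useless}. Concretely, I would fix $\mu_0$ with $\mu_0,\mu_0+2\Delta\in[0,1]$, let arm $1$ have Bernoulli reward mean $\mu_0+\Delta$ in both instances, and let arm $2$ have mean $\mu_0$ in $I_A$ (so it is suboptimal) and $\mu_0+2\Delta$ in $I_B$ (so it is optimal). Every delay takes only the values $0$ (``fast'') or $\tilde d$ (``slow''); I pin down the joint law of $(r,d)$ by prescribing its restriction to the fast event to be the \emph{same} for every arm in both instances, say $\Pr[r=1,d=0]=\alpha$ and $\Pr[r=0,d=0]=\beta$, and then placing the remaining mass on the slow event so as to realize the intended marginal reward mean of that arm. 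The parameters $\alpha,\beta\ge0$ are chosen so that (a) the leftover slow masses are nonnegative, i.e.\ $\alpha\le\mu_0$ and $\beta\le 1-\mu_0-2\Delta$; (b) $\alpha+\beta<1-2\Delta$; and (c) $1-\alpha-\beta$, together with the induced slow-conditional reward means, stays bounded away from $0$ and $1$. All of this is simultaneously feasible for $\Delta\in(0,\tfrac12)$. By (a)--(b) the marginal delay law is common to all arms and both instances — it equals $0$ w.p.\ $\alpha+\beta<1-2\Delta$ and $\tilde d$ otherwise — so $d_i(1-2\Delta)=\tilde d$; and by construction the joint law of $(r,d)$ conditioned on $\{d=0\}$ is identical across arms and instances.

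For the first, unconditional bound I would argue that any feedback that can reach the learner by round $\tilde d$ must originate from a pull with fast delay, and that, by the previous sentence, such feedback has the same joint law and arrival times under $I_A$ and $I_B$ regardless of which arm generated it. Hence the learner's transcript through round $\tilde d$ has the same distribution under both instances, so the expected number $n$ of pulls of arm $1$ in rounds $1,\dots,\tilde d$ is the same under $I_A$ and under $I_B$. Since arm $2$ is suboptimal by $\Delta$ in $I_A$ and arm $1$ is suboptimal by $\Delta$ in $I_B$, summing the regret contributions of these $\tilde d$ rounds gives $\mathcal R_T(I_A)+\mathcal R_T(I_B)\ge \Delta(\tilde d-n)+\Delta n=\Delta\tilde d$, so one of the two instances forces $\mathcal R_T\ge\tfrac12\Delta\tilde d=\tfrac12\Delta\,d_i(1-2\Delta)$, which is \cref{eq:lower_reward_dep}.

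The $\Delta\,d_i(1-2\Delta)$ part of the ``moreover'' bound is then already in hand, and it remains to add the $\Omega\!\big((1-\alpha)\log T/\Delta\big)$ term for any $ALG^{delay}$ with a uniform $T^\alpha$ regret guarantee. For this I would run the standard change-of-measure argument (as in the proof of \Cref{thm:lower-bound-ind}) on the very same pair $I_A,I_B$, which differ only in arm $2$: by the divergence decomposition, $\KLD{P_{I_A}}{P_{I_B}}$ equals $\E_{I_A}[\#\text{ observations received from arm }2]$ times the KL divergence between arm $2$'s observed-feedback law under the two instances. Fast observations of arm $2$ are identically distributed and contribute nothing; a slow observation of arm $2$ is a Bernoulli draw whose mean differs by $\Theta(\Delta)$ between $I_A$ and $I_B$, and, because condition (c) keeps these conditional means in a fixed subinterval of $(0,1)$, each such observation contributes only $O(\Delta^2)$. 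As the number of observations is at most the number of pulls, $\KLD{P_{I_A}}{P_{I_B}}=O(\Delta^2)\,\E_{I_A}[m_T(2)]$; Bretagnolle--Huber, together with the fact that a $T^\alpha$ guarantee forces $\E_{I_A}[m_T(2)],\E_{I_B}[m_T(1)]\le T^\alpha/\Delta$, then yields $\E_{I_A}[m_T(2)]=\Omega\!\big((1-\alpha)\log T/\Delta^2\big)$ for large $T$, hence $\mathcal R_T(I_A)\ge\Delta\,\E_{I_A}[m_T(2)]=\Omega\!\big((1-\alpha)\log T/\Delta\big)$. Combining with the first bound (via $\max\ge\tfrac12(\text{sum})$) gives the stated $\Omega\!\big((1-\alpha)\log T/\Delta+\Delta\,d_i(1-2\Delta)\big)$.

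The crux, and the part I expect to be most delicate, is the delay construction itself: I need a single reward-delay joint law whose fast part is arm- and instance-independent (for indistinguishability over $\tilde d$ rounds), whose marginal has its $(1-2\Delta)$-quantile sitting exactly at $\tilde d$, and whose slow part nevertheless separates the two instances strongly enough to drive the $\log T/\Delta$ term. This tension is precisely the phenomenon that makes the reward-dependent setting strictly harder: with reward-\emph{independent} delays a constant fraction of fast feedback would be unbiased and reveal the arms' means, so the $\tilde d$ term would simply disappear; the plentiful-but-uninformative fast feedback must therefore be engineered through the reward dependence, which is exactly what pins the relevant quantile near $1$. Checking that $\alpha\le\mu_0$, $\beta\le1-\mu_0-2\Delta$, $\alpha+\beta<1-2\Delta$, and $1-\alpha-\beta=\Theta(1)$ can all be met simultaneously across the admissible range of $\Delta$ (keeping the slow-conditional means in a fixed subinterval of $(0,1)$) is the main bookkeeping obstacle, but it is routine.
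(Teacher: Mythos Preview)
Your approach is essentially the same as the paper's: both construct a pair of instances that differ only in arm~2's mean (by $2\Delta$), arrange the delay--reward joint law so that the ``fast'' ($d=0$) feedback is identically distributed across arms and instances, and conclude that the learner's transcript through round $\tilde d$ is the same under both, yielding $\mathcal R_T\ge\tfrac12\Delta\tilde d$ on one of them. The paper simply hardwires $\mu_0=\tfrac12-\Delta$ and a specific $(\alpha,\beta)$ giving $\Pr[d=0]=1-2\Delta$ for arm~2, whereas your parameterization is more flexible; in fact your strict inequality $\alpha+\beta<1-2\Delta$ handles the quantile condition $d_i(1-2\Delta)=\tilde d$ more cleanly than the paper's boundary choice.

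For the ``moreover'' part, the paper takes a shortcut and just invokes the non-delayed lower bound (\cref{lemma:no-delay-lower-bound}), whereas you run the change-of-measure argument directly on the delayed pair $I_A,I_B$, decomposing the KL into fast (zero) and slow (Bernoulli with $\Theta(\Delta)$ gap) contributions. Your route is more self-contained and makes it transparent that both terms in the $\Omega(\cdot)$ bound are attained on the \emph{same} instance, which is what the theorem actually asserts; the paper's shortcut leaves this implicit. Both arguments are correct.
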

Note that $d_i(1 - 2\Delta) \leq d_i(1 - \Delta/4)$, which complies with our upper bound. 
It seems necessary to have the $\Delta$ factor in \cref{eq:lower_reward_dep}, and we conjecture that it should also appear in the upper bound.

The proof for \cref{thm:reward_depndent_lower} is built upon two instances which are indistinguishable until time $\tilde{d}$. The reward distributions are Bernoulli and the index of the optimal arm alternates in the two instances.
The idea is that when arm $2$ is optimal, samples with reward $1$ are delayed more often than samples with reward $0$. When arm $2$ is sub-optimal, the opposite occurs. The delay distribution is tailored such that under both instances, \emph{(i)} the probability to observe feedback immediately is exactly $1-2\Delta$; and \emph{(ii)} the probability for reward $1$ given that the delay is $0$, is identical for both arms under both instances. These two properties guarantee that the learner cannot distinguish between the two instances until time $\tilde{d}$. After that, it is possible to distinguish between them whenever a sample with delay $\tilde{d}$ is observed.
The full details of the proof appears in \cref{appendix:dep-lower}.

\section{Experiments}
\label{sec:exp}

We conducted a variety of synthetic experiments to support our theoretical findings.

\paragraph{Fixed delays.} 

In \cref{fig:SEvsUSB} we show the effect of different fixed delays on UCB and SE. We ran both algorithms with a confidence radius $\lambda_t(i) = \sqrt{2/n_t(i)}$, for $K = 20$ arms, each with Bernoulli rewards with mean uniform  in $[0.25,0.75]$, under various fixed delays.
Top plots show cumulative regret 
until $T=2\cdot 10^4$. Bottom plot shows regret over increasing delays for $T=2\cdot 10^5$. The results are averaged over 100 runs and intervals in both plots are 4 times the standard error.

\begin{figure}
    \centering
    \includegraphics[width=0.75\textwidth]{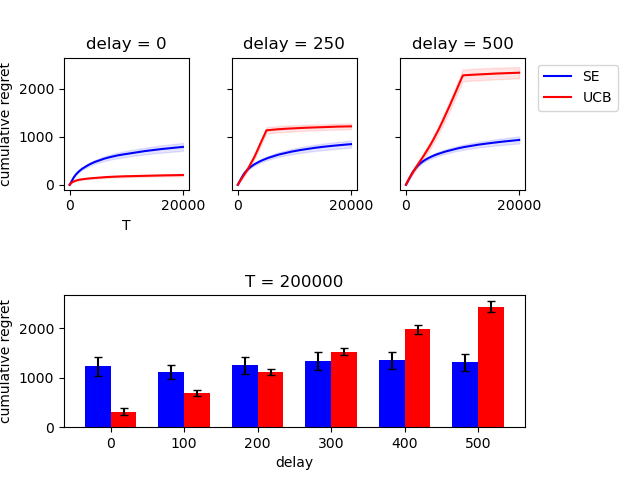}
    \caption{\label{fig:SEvsUSB} Regret of SE and UCB for fixed delays.}
     
    
\end{figure}


As delay increases, the regret of UCB increases as well, while SE is quite robust to the delay, and around delay of 200 SE becomes superior.
These empirical results coincides with our theoretical results: As in the proof \Cref{thm:ucb-lower}, the regret UCB grows linearly in the first $Kd$ rounds. On the other hand, SE created a pipeline of observations, so it keeps getting observations from all active arms. While it cannot avoid from sampling each sub-optimal arm for $d/K$ times, as long as this does not exceed the minimal amount of observations required for SE to eliminate a sub-optimal arm, the effect on the regret is minor.

\paragraph{$\alpha$-Pareto delays.} 

We reproduce an experiment done by \citet{manegueu2020stochastic} under our reward-independent setting, in \cref{fig:SEvsPB}. We compare their algorithm, PatientBandits (PB), with SE. For $T = 3000$ rounds and $K=2$ arms, we ran sub-optimality gaps $\Delta \in [0.04, 0.6]$. The expected rewards are $\mu_1 = 0.4$ and $\mu_2 = 0.4 + \Delta$. The delay is sampled from Pareto distribution with $\alpha_1 = 1$ for arm $1$ and $\alpha_2 = 0.2$ for arm $2$. The results are averaged over 300 runs. 
\begin{figure}
    \centering
    \includegraphics[width=0.75\textwidth]{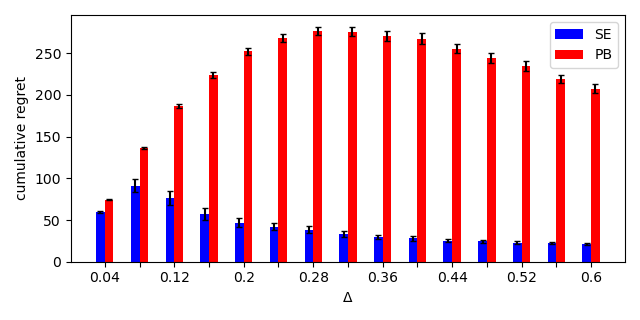}
    \caption{\label{fig:SEvsPB} Regret of SE and PatientBandits (PB) for Pareto delays.}
    \vskip -0.1in
\end{figure}

PB is a UCB-based algorithm that uses a prior knowledge on distribution in order to tune confidence radius.
Even though it is designed to work under Pareto distributions, 
SE's regret is strictly smaller for any value of $\Delta$.
For small values of $\Delta$, the regret increases with $\Delta$, as the algorithms are not able to distinct between the arms. When $\Delta$ becomes large enough the regret starts to decrease as $\Delta$ increases. This transition occurs much sooner under SE,
which indicates that SE starts to distinguish between the arms at lower values of $\Delta$. We note that PB is designed for partial observation setting, which is more challenging than the reward-independent setting. 
However, the work of \cite{manegueu2020stochastic} is the only previous work, as far as know, to present a regret bound for delay distributions that potentially have infinite expected value and arm-dependent delays, as in this experiment.

\paragraph{Packet-loss.}
We study the regret of SE and PSE in the packet loss setting. Specifically to evaluate the difference when amount of feedback from the best arm is significantly smaller than the other arms. We ran the algorithms for $T = 2 \cdot 10^4$ rounds and $K=10$ arms with randomized values of sub-optimality gaps between $\Delta \in [0.15, 0.25]$. The probability to observe the best arm is $0.1$, and $1$ for the sub-optimal arms. The results are averaged over 300 runs.
As seen in \cref{fig:SEvsPSE}, the slope of PSE zeroes in some regions. This is the part of a phase in which the algorithm observed enough feedback from all sub-optimal arms and keeps sampling only the optimal arm. This happens due to the fact that the feedback of the optimal arm is unobserved 90\% of the time. Meanwhile, SE samples each arm equally and receives less reward. The slope of PSE in other regions, is similar to the one of SE which indicates that the set of active arms is similar as well.

\begin{figure}[t]
    \centering
    \includegraphics[width=0.75\textwidth]{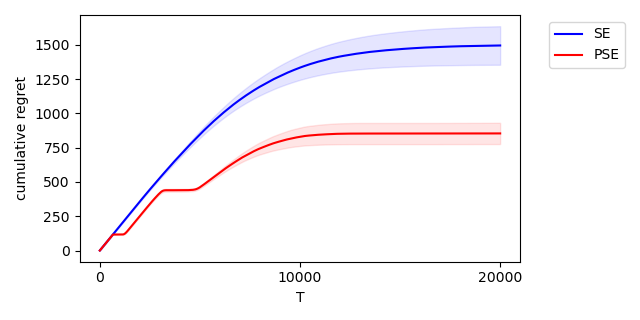}
    \vspace{-0.75cm}
    \caption{\label{fig:SEvsPSE} Regret of SE and PSE for packet loss delays.}  
    
\end{figure}

\paragraph{Reward-dependent case.} 

We compare between OPSE (\cref{alg:SE-reward-dependent}) and UCB. We show that unlike in the reward-independent case, here an "off-the-shelf" solution doesn't perform very well, thus this case requires a modified algorithm. We set $T = 6 \cdot 10^4$  and $K=3$ arms with random sub-optimality gaps of $\Delta \in [0.15, 0.25]$. The delay is \textit{biased} with fixed delay of 5,000 rounds for reward 1 of the best arm and reward 0 of the sub-optimal arms. The results are averaged over 100 runs. In \cref{fig:OPSEvsUCB}, OPSE outperforms UCB, mostly due to UCB's unawareness that the observed reward empirical means are biased. Thus, it favors the sub-optimal arms at the beginning 
and never recovers from that regret loss. We remark that in this settings, standard SE eliminates the best arm and suffers linear regret, so we omitted it from the plot.

\begin{figure}[t]
    \centering
    \includegraphics[width=0.75\textwidth]{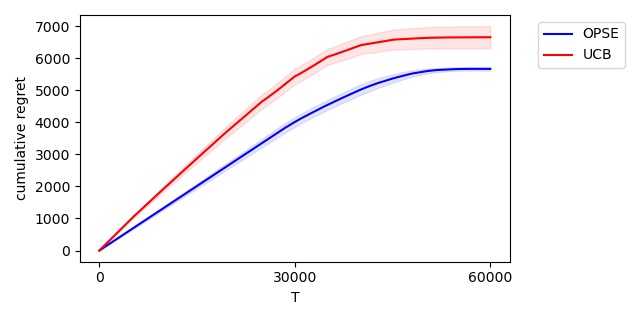}
    \vspace{-0.75cm}
    \caption{\label{fig:OPSEvsUCB} Reward-dependent setting. Regret of OPSE and UCB.}  
    
    
\end{figure}

We provide additional experiments in \cref{sec:additional-exp}.

 \section{Discussion}
We presented algorithms for multi-arm bandits under two stochastic delayed feedback settings. In the reward-independent, which was studied previously, we present near-optimal regret bounds that scale with the delays quantiles. Those are significantly stronger, in many cases, then previous results. 
In addition we show a surprising gap between two classic algorithms: UCB and SE. While the former suffers a regret of $\Omega(\mabregret + Kd)$ under fixed delays, the latter achieves $O(\mabregret + d)$ for fixed delays and $O(\min_{q} \mabregret/q + \max_{i}d_i(q))$ in the general setting. We further showed the PSE algorithm, which removes the dependency on the delay of the best arm. We then presented the reward-dependent delay setting, which is more challenging since the observed and the actual rewards distribute differently. Our novel OPSE  algorithm achieves $O(\mabregret + \log(K)d(1-\Delta_{min}))$ by widening the gap of the confidence bounds to incorporate the potential observed biases. In both settings we provided almost matching lower bounds.

 
Our paper leaves some interesting future lines of research. The reward-dependent setting is mostly unaddressed in the literature and we believe there is more to uncover in this setting. One important question regards the gap between UCB and SE with fixed delays. In non-delayed multi-arm bandits, UCB and SE have similar regret bounds (and UCB even outperforms SE empirically when the delay is zero as evidence by \cref{fig:SEvsUSB}). This raises the question: Can a variant of UCB or any other optimistic algorithm achieve similar regret bounds as a round-robin algorithm in the delayed settings? Lastly, another interesting direction is to tighten the regret bounds: 
In the reward independent case the gap between the lower and upper bound is either logarithmic in $K$ (e.g., the bound in \cref{eq:SE_arms_quantiles}) or missing a $\Delta$  factor on the delay term (e.g., \cref{eq:SE_single_quantile}). 
In the reward dependent case it is still remains open question whether we can enjoy the benefits of both optimistic-SE and optimistic-UCB and obtain a regret bound that scales with $\max_{i\ne i^*}d_i(1-\Delta_i)$.




\section*{Acknowledgments}
The work of YM and TL has received funding from the European Research Council (ERC) under the European Union's Horizon 2020 research and innovation program (grant agreement No. 882396), by the Israel Science Foundation (grant number 993/17) and the Yandex Initiative for Machine Learning at Tel Aviv University. 
SS and TK were supported in part by the Israeli Science Foundation (ISF) grant no.~2549/19, by the Len Blavatnik and the Blavatnik Family foundation, and by the Yandex Initiative in Machine Learning.



\newpage
\bibliography{refs}

\begin{thebibliography}{21}
\providecommand{\natexlab}[1]{#1}
\providecommand{\url}[1]{\texttt{#1}}
\expandafter\ifx\csname urlstyle\endcsname\relax
  \providecommand{\doi}[1]{doi: #1}\else
  \providecommand{\doi}{doi: \begingroup \urlstyle{rm}\Url}\fi

\bibitem[Auer and Ortner(2010)]{auer2010ucb}
P.~Auer and R.~Ortner.
\newblock Ucb revisited: Improved regret bounds for the stochastic multi-armed
  bandit problem.
\newblock \emph{Periodica Mathematica Hungarica}, 61\penalty0 (1-2):\penalty0
  55--65, 2010.

\bibitem[Auer et~al.(2002{\natexlab{a}})Auer, Cesa-Bianchi, and
  Fischer]{auer2002finite}
P.~Auer, N.~Cesa-Bianchi, and P.~Fischer.
\newblock Finite-time analysis of the multiarmed bandit problem.
\newblock \emph{Machine learning}, 47\penalty0 (2-3):\penalty0 235--256,
  2002{\natexlab{a}}.

\bibitem[Auer et~al.(2002{\natexlab{b}})Auer, Cesa-Bianchi, Freund, and
  Schapire]{auer2002nonstochastic}
P.~Auer, N.~Cesa-Bianchi, Y.~Freund, and R.~E. Schapire.
\newblock The nonstochastic multiarmed bandit problem.
\newblock \emph{SIAM journal on computing}, 32\penalty0 (1):\penalty0 48--77,
  2002{\natexlab{b}}.

\bibitem[Bistritz et~al.(2019)Bistritz, Zhou, Chen, Bambos, and
  Blanchet]{bistritz2019online}
I.~Bistritz, Z.~Zhou, X.~Chen, N.~Bambos, and J.~Blanchet.
\newblock Online exp3 learning in adversarial bandits with delayed feedback.
\newblock In \emph{Advances in Neural Information Processing Systems}, pages
  11349--11358, 2019.

\bibitem[Cesa-Bianchi et~al.(2018)Cesa-Bianchi, Gentile, and
  Mansour]{cesa2018nonstochastic}
N.~Cesa-Bianchi, C.~Gentile, and Y.~Mansour.
\newblock Nonstochastic bandits with composite anonymous feedback.
\newblock In \emph{Conference On Learning Theory}, pages 750--773, 2018.

\bibitem[Cesa-Bianchi et~al.(2019)Cesa-Bianchi, Gentile, and
  Mansour]{cesa2019delay}
N.~Cesa-Bianchi, C.~Gentile, and Y.~Mansour.
\newblock Delay and cooperation in nonstochastic bandits.
\newblock \emph{The Journal of Machine Learning Research}, 20\penalty0
  (1):\penalty0 613--650, 2019.

\bibitem[Csisz{\'a}r and Talata(2006)]{csiszar2006context}
I.~Csisz{\'a}r and Z.~Talata.
\newblock Context tree estimation for not necessarily finite memory processes,
  via bic and mdl.
\newblock \emph{IEEE Transactions on Information theory}, 52\penalty0
  (3):\penalty0 1007--1016, 2006.

\bibitem[Desautels et~al.(2014)Desautels, Krause, and
  Burdick]{desautels2014parallelizing}
T.~Desautels, A.~Krause, and J.~W. Burdick.
\newblock Parallelizing exploration-exploitation tradeoffs in gaussian process
  bandit optimization.
\newblock \emph{Journal of Machine Learning Research}, 15:\penalty0 3873--3923,
  2014.

\bibitem[Dudik et~al.(2011)Dudik, Hsu, Kale, Karampatziakis, Langford, Reyzin,
  and Zhang]{dudik2011efficient}
M.~Dudik, D.~Hsu, S.~Kale, N.~Karampatziakis, J.~Langford, L.~Reyzin, and
  T.~Zhang.
\newblock Efficient optimal learning for contextual bandits.
\newblock In \emph{Proceedings of the Twenty-Seventh Conference on Uncertainty
  in Artificial Intelligence}, pages 169--178, 2011.

\bibitem[Even-Dar et~al.(2006)Even-Dar, Mannor, and Mansour]{even2006action}
E.~Even-Dar, S.~Mannor, and Y.~Mansour.
\newblock Action elimination and stopping conditions for the multi-armed bandit
  and reinforcement learning problems.
\newblock \emph{Journal of machine learning research}, 7\penalty0
  (Jun):\penalty0 1079--1105, 2006.

\bibitem[Gael et~al.(2020)Gael, Vernade, Carpentier, and
  Valko]{manegueu2020stochastic}
M.~A. Gael, C.~Vernade, A.~Carpentier, and M.~Valko.
\newblock Stochastic bandits with arm-dependent delays.
\newblock In \emph{International Conference on Machine Learning}, pages
  3348--3356. PMLR, 2020.

\bibitem[Gy{\"o}rgy and Joulani(2020)]{gyorgy2020adapting}
A.~Gy{\"o}rgy and P.~Joulani.
\newblock Adapting to delays and data in adversarial multi-armed bandits.
\newblock \emph{arXiv preprint arXiv:2010.06022}, 2020.

\bibitem[Joulani et~al.(2013)Joulani, Gyorgy, and
  Szepesv{\'a}ri]{joulani2013online}
P.~Joulani, A.~Gyorgy, and C.~Szepesv{\'a}ri.
\newblock Online learning under delayed feedback.
\newblock In \emph{International Conference on Machine Learning}, pages
  1453--1461, 2013.

\bibitem[Kleinberg et~al.(2010)Kleinberg, Niculescu-Mizil, and
  Sharma]{kleinberg2010regret}
R.~Kleinberg, A.~Niculescu-Mizil, and Y.~Sharma.
\newblock Regret bounds for sleeping experts and bandits.
\newblock \emph{Machine learning}, 80\penalty0 (2-3):\penalty0 245--272, 2010.

\bibitem[Lattimore and Szepesv{\'a}ri(2020)]{lattimore2020bandit}
T.~Lattimore and C.~Szepesv{\'a}ri.
\newblock \emph{Bandit algorithms}.
\newblock Cambridge University Press, 2020.

\bibitem[Pike-Burke et~al.(2018)Pike-Burke, Agrawal, Szepesvari, and
  Grunewalder]{pike2018bandits}
C.~Pike-Burke, S.~Agrawal, C.~Szepesvari, and S.~Grunewalder.
\newblock Bandits with delayed, aggregated anonymous feedback.
\newblock In \emph{International Conference on Machine Learning}, pages
  4105--4113. PMLR, 2018.

\bibitem[Thune et~al.(2019)Thune, Cesa-Bianchi, and
  Seldin]{thune2019nonstochastic}
T.~S. Thune, N.~Cesa-Bianchi, and Y.~Seldin.
\newblock Nonstochastic multiarmed bandits with unrestricted delays.
\newblock In \emph{Advances in Neural Information Processing Systems}, pages
  6541--6550, 2019.

\bibitem[Vernade et~al.(2017)Vernade, Capp{\'e}, and
  Perchet]{vernade2017stochastic}
C.~Vernade, O.~Capp{\'e}, and V.~Perchet.
\newblock Stochastic bandit models for delayed conversions.
\newblock In \emph{Conference on Uncertainty in Artificial Intelligence}, 2017.

\bibitem[Weinberger and Ordentlich(2002)]{weinberger2002delayed}
M.~J. Weinberger and E.~Ordentlich.
\newblock On delayed prediction of individual sequences.
\newblock \emph{IEEE Transactions on Information Theory}, 48\penalty0
  (7):\penalty0 1959--1976, 2002.

\bibitem[Zhou et~al.(2019)Zhou, Xu, and Blanchet]{zhou2019learning}
Z.~Zhou, R.~Xu, and J.~Blanchet.
\newblock Learning in generalized linear contextual bandits with stochastic
  delays.
\newblock In \emph{Advances in Neural Information Processing Systems}, pages
  5197--5208, 2019.

\bibitem[Zimmert and Seldin(2020)]{zimmert2020optimal}
J.~Zimmert and Y.~Seldin.
\newblock An optimal algorithm for adversarial bandits with arbitrary delays.
\newblock In \emph{International Conference on Artificial Intelligence and
  Statistics}, pages 3285--3294. PMLR, 2020.

\end{thebibliography}


\begin{thebibliography}{8}
\providecommand{\natexlab}[1]{#1}
\providecommand{\url}[1]{\texttt{#1}}
\expandafter\ifx\csname urlstyle\endcsname\relax
  \providecommand{\doi}[1]{doi: #1}\else
  \providecommand{\doi}{doi: \begingroup \urlstyle{rm}\Url}\fi

\bibitem[Author(2021)]{anonymous}
Author, N.~N.
\newblock Suppressed for anonymity, 2021.

\bibitem[Duda et~al.(2000)Duda, Hart, and Stork]{DudaHart2nd}
Duda, R.~O., Hart, P.~E., and Stork, D.~G.
\newblock \emph{Pattern Classification}.
\newblock John Wiley and Sons, 2nd edition, 2000.

\bibitem[Kearns(1989)]{kearns89}
Kearns, M.~J.
\newblock \emph{Computational Complexity of Machine Learning}.
\newblock PhD thesis, Department of Computer Science, Harvard University, 1989.

\bibitem[Langley(2000)]{langley00}
Langley, P.
\newblock Crafting papers on machine learning.
\newblock In Langley, P. (ed.), \emph{Proceedings of the 17th International
  Conference on Machine Learning (ICML 2000)}, pp.\  1207--1216, Stanford, CA,
  2000. Morgan Kaufmann.

\bibitem[Michalski et~al.(1983)Michalski, Carbonell, and
  Mitchell]{MachineLearningI}
Michalski, R.~S., Carbonell, J.~G., and Mitchell, T.~M. (eds.).
\newblock \emph{Machine Learning: An Artificial Intelligence Approach, Vol. I}.
\newblock Tioga, Palo Alto, CA, 1983.

\bibitem[Mitchell(1980)]{mitchell80}
Mitchell, T.~M.
\newblock The need for biases in learning generalizations.
\newblock Technical report, Computer Science Department, Rutgers University,
  New Brunswick, MA, 1980.

\bibitem[Newell \& Rosenbloom(1981)Newell and Rosenbloom]{Newell81}
Newell, A. and Rosenbloom, P.~S.
\newblock Mechanisms of skill acquisition and the law of practice.
\newblock In Anderson, J.~R. (ed.), \emph{Cognitive Skills and Their
  Acquisition}, chapter~1, pp.\  1--51. Lawrence Erlbaum Associates, Inc.,
  Hillsdale, NJ, 1981.

\bibitem[Samuel(1959)]{Samuel59}
Samuel, A.~L.
\newblock Some studies in machine learning using the game of checkers.
\newblock \emph{IBM Journal of Research and Development}, 3\penalty0
  (3):\penalty0 211--229, 1959.

\end{thebibliography}

\appendix

\newpage
\section{Reward-independent Setting}

We state a simple concentration bound for the estimation of the observed expected rewards, which follows immediately from Hoeffding's inequality and a union bound. 

\begin{lemma}
    \label{lemma:estimator_bound}
    Let $\hat{\mu}_t(i)$ be the observed empirical average of the expected reward up to the end of round $t-1$. Then,
    \begin{align*}
        \Pr \Big[ \exists ~ i, t \;:\; \abs{\hat{\mu}_t(i) - \mu_i} > \sqrt{\frac{2\log{T}}{n_t(i)}}\Big] 
        &\leq 
        \frac{2}{T^2}.
    \end{align*}
\end{lemma}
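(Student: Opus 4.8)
The plan is to prove \Cref{lemma:estimator_bound} by a direct application of Hoeffding's inequality at each fixed count of observations, followed by union bounds over arms and over the relevant range of counts. The subtle point is that $n_t(i)$ is a random stopping-time-like quantity, so one cannot apply Hoeffding directly to the sum $\sum_{s : s+d_s < t} \indicator\{a_s = i\} r_s$ with a random number of summands; instead I would first argue that in the \emph{reward-independent} setting the identity of the observed rewards from arm $i$, \emph{as a multiset of i.i.d.\ samples}, does not depend on the delays. Concretely, for each arm $i$ let $r_i^{(1)}, r_i^{(2)}, \ldots$ denote the sequence of reward realizations from the (at most $T$) pulls of arm $i$, in the order the pulls were made; these are i.i.d.\ $[0,1]$ random variables with mean $\mu_i$, and crucially, since delays are independent of rewards, the event $\{n_t(i) = k\}$ is independent of $(r_i^{(1)}, \ldots, r_i^{(k)})$, and on that event $\hat\mu_t(i)$ equals the average of the first $k$ of these samples (because feedback arrives in a way that does not reorder which pulls are observed — the $k$ observed feedbacks from arm $i$ are exactly $k$ of its pulls, and in the reward-independent case we may take them to be the first $k$ pulls without loss of generality for the purpose of bounding the estimator). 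This reduction is the main obstacle and the place where reward-independence is essential.

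Once this reduction is in place, the argument is routine. First I would fix $i \in [K]$ and $k \in \{1, \ldots, T\}$ and apply Hoeffding's inequality to the average $\bar{r}_i^{(k)} := \frac{1}{k}\sum_{j=1}^k r_i^{(j)}$ of $k$ i.i.d.\ samples in $[0,1]$:
\begin{align*}
    \Pr\Big[ \big| \bar{r}_i^{(k)} - \mu_i \big| > \sqrt{\tfrac{2\log T}{k}} \Big] \leq 2\exp\!\big(-2k \cdot \tfrac{2\log T}{k}\big) = \frac{2}{T^4}.
\end{align*}
Then I would take a union bound over all $i \in [K]$ and all $k \in \{1, \ldots, T\}$, giving a failure probability at most $K T \cdot 2/T^4 \leq 2/T^2$ (using $K \leq T$, which holds in the regime of interest, or more crudely absorbing it — if one wants the clean $2/T^2$ without assuming $K \le T$ one can note each arm is pulled at least once so $K\le T$ trivially). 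On the complement event, for every $i$ and every $t$, writing $k = n_t(i) \vee 1$ we have $|\hat\mu_t(i) - \mu_i| \le \sqrt{2\log T / n_t(i)}$, since $\hat\mu_t(i) = \bar r_i^{(n_t(i))}$ when $n_t(i) \ge 1$ (and when $n_t(i) = 0$ the stated inequality is vacuous as the right-hand side is $+\infty$, or one interprets it with the $\vee 1$ convention). This yields exactly the claimed bound.

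I should flag one bookkeeping choice: the lemma's event uses $n_t(i)$ in the denominator rather than $n_t(i)\vee 1$, so strictly the statement only has content when $n_t(i) \ge 1$; I would simply note that the case $n_t(i) = 0$ contributes nothing. The only genuine mathematical content is (a) the decoupling of delays from the reward multiset, which licenses conditioning on $\{n_t(i) = k\}$ while treating the observed samples as the first $k$ i.i.d.\ draws, and (b) the choice of confidence radius $\sqrt{2\log T / k}$ making each Hoeffding tail $2/T^4$, which survives the $KT \le T^2$-fold union bound with room to spare. Everything else is standard, and this matches the ``follows immediately from Hoeffding's inequality and a union bound'' remark preceding the statement.
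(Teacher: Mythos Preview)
Your approach is the paper's: Hoeffding plus a union bound over arms and observation counts (the paper supplies no more detail than that one-line remark). One point in your decoupling step deserves care, though. The claim that $\{n_t(i)=k\}$ is independent of $(r_i^{(1)},\ldots,r_i^{(k)})$ is not correct---the algorithm's pull schedule, and hence $n_t(i)$, depends on previously observed rewards---and the $k$ observed feedbacks are not in general the \emph{first} $k$ pulls of arm $i$ (a later pull with a shorter delay can be observed first), so $\hat\mu_t(i)\neq \bar r_i^{(n_t(i))}$ in your notation. The clean fix is to index by \emph{observation} order rather than pull order: let $\tilde r_i^{(k)}$ denote the $k$-th reward from arm $i$ to be observed, so that $\hat\mu_t(i)=\frac{1}{n_t(i)}\sum_{k=1}^{n_t(i)}\tilde r_i^{(k)}$ holds exactly. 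Reward-independence of the delays then ensures $\E\big[\tilde r_i^{(k)}\mid \text{history up to just before the }k\text{-th observation}\big]=\mu_i$ (the unobserved reward value cannot influence anything until it arrives), so Azuma--Hoeffding gives the same $2/T^4$ tail for each fixed $(i,k)$, and your union bound over $i\in[K]$ and $k\le T$ finishes exactly as you wrote.
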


\subsection{Proof of \Cref{thm:ucb-lower}}
\label{appendix:ucb-lower}

    Consider an instance in which all rewards are sampled from Bernoulli distributions, where the optimal arm has $\mu_{i^{*}} = 1$ and the rest of the arms are with mean $\mu_i=\frac{1}{2}$ and the delay is fixed such that $K < d$. Without loss of generality we assume that the tie breaking rule is by index and that $i^* \geq K/2$. We can always randomize the index of the optimal arm. In this case, the assumption holds with probability of at least $1/2$, which affects the regret only by a constant. Recall that UCB begins with round-robin over all arms. Let $r_i$ be the first realization of reward from arm $i$ (at time $t=i$). With a constant probability the reward of at least $1/4$ of arms $i\leq i^*$ is $1$. 
    Formally, using Chernoff concentration bound,
    \[
        \Pr\left( \sum_{i\leq i^{*}}r_{i}\geq\frac{K}{4} \right)
        \geq
        1-e^{-\frac{1}{8}K}
        \geq
        \frac{1}{10}.
    \]
    This means, that when calculating the UCB with respect to at most one sample, with a constant probability, there are at least $K/4$ arms $i\leq i^*$, that has higher UCB than the optimal arm. Let these arms with lowest index be $i_1<...<i_{K/4}$. Additionally, assume $i_1 = 1$, which occurs with probability $1/2$. 
    
    Since $K < d$, until time $K+d$ we either do not observe any feedback (until time $d+1$) or we observe some of the feedback from the round-robin (from time $d+1$ to $K+d$). As arm $1$'s first reward is $1$, it is the only arm we sample until time $d+K$, since it has the lowest index and maximal UCB for that period. At time $K+d+1$, we observe a second feedback for arm $1$, which lowers arm 1's UCB. At that time, arm $i_2$ has the lowest index with maximal UCB. We then sample it $d$ times until time $K+2d$, as no new observation is coming from any arm other than arm $1$ (which has lower UCB). At time $K+2d+1$ we observe a second feedback for arm $i_2$, and we move to sample $i_3$. This process is repeated and we sample $d$ times each of arms $i_2, ..., i_{K/4}$ consecutively. Therefore, the total regret of UCB under this instance scale as,
    \[
        \mathcal{R}_{T} \geq \frac{1}{20}\frac{Kd}{4}\Delta \geq \Omega(Kd).
    \]
    \begin{remark}
        The proof relies on an assumption that the first arm that we pull is also the first arm in the tie-breaking rule. That way, even if a feedback with reward of $1$ is observed within the interval $[d+1,d+K]$, we keep sampling that arm. This assumption can be easily avoided, simply by multiplying the reward of the rest of the arm by $1-\epsilon$ (for sufficiently small $\epsilon > 0$), so that even if a positive feedback is received, the UCB of the first arm we pulled is still the highest.
    \end{remark}

\subsection{Proof of \Cref{thm:SE_arm_quantiles}}
\label{appendix:proof_SE_arm_quantiles}

\begin{lemma}
    \label{lemma:quantile_bound_chernoff}
    At time $t$, for any arm $i$ and quantile $q \in (0,1]$, it holds that,
    $$
        \Pr \Big[n_{t +d_i(q)}(i) < \frac{q}{2}  m_t(i) \Big] \leq \exp \Big(-\frac{q}{8} m_t(i) \Big).
    $$
\end{lemma}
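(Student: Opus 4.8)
The plan is to read this off from a multiplicative Chernoff bound once the correct i.i.d.\ structure is exposed. First I would record the deterministic fact that makes the quantile relevant: if arm $i$ is pulled at a round $s\le t-1$ and the realized delay satisfies $d_s(i)\le d_i(q)$, then the corresponding feedback is received at the end of round $s+d_s(i)\le (t-1)+d_i(q) < t+d_i(q)$, and hence is already counted in $n_{t+d_i(q)}(i)$. Writing $\tau_1<\tau_2<\cdots$ for the (exactly $m_t(i)$) rounds in $\{1,\dots,t-1\}$ at which arm $i$ is pulled, and setting $X_k=\indicator\{d_{\tau_k}(i)\le d_i(q)\}$, this yields the pointwise bound
\[
    n_{t+d_i(q)}(i)\;\ge\;\sum_{k=1}^{m_t(i)}X_k ,
\]
so it suffices to lower-bound $\sum_{k\le m_t(i)}X_k$.

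Next I would argue that the $X_k$ behave like independent $q$-biased coins. Each $\tau_k$ is a stopping time of the natural filtration, and at round $\tau_k$ the environment samples the pair $(r_{\tau_k}(\cdot),d_{\tau_k}(\cdot))$ i.i.d.\ and independently of all history strictly before round $\tau_k$ --- in particular, independently of $\tau_1,\dots,\tau_k$ and of $d_{\tau_1}(i),\dots,d_{\tau_{k-1}}(i)$. Therefore, conditionally on the history just before the $k$-th pull of arm $i$, $\Pr[X_k=1]=\Pr[D_i\le d_i(q)]\ge q$, the last inequality being exactly the definition of the quantile $d_i(q)$. With this, the lower tail is controlled through the Chernoff supermartingale $V_j=\exp\!\big(-\theta\sum_{k\le j}X_k+j\,q(1-e^{-\theta})\big)$, which is a supermartingale for every $\theta>0$; choosing $\theta$ appropriately --- the target $\tfrac q2 m_t(i)$ lies a factor $\ge 2$ below the conditional mean $\ge q\,m_t(i)$, which is what produces the $\tfrac q8$ in the exponent --- gives $\Pr\big[\sum_{k\le m_t(i)}X_k<\tfrac q2 m_t(i)\big]\le\exp\!\big(-\tfrac q8 m_t(i)\big)$, and combining with the displayed inequality proves the claim.

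The only real subtlety, and what I expect to be the main obstacle to make fully rigorous, is that neither the number of summands $m_t(i)$ nor the pull times $\tau_k$ are fixed in advance: they are determined adaptively from the observed feedback, which itself depends on the realized delays, so one cannot naively treat $X_1,\dots,X_{m_t(i)}$ as a fixed number of i.i.d.\ Bernoullis. The resolution is exactly the stopping-time framing above: since $d_{\tau_k}(i)$ is drawn afresh and is independent of everything up to the $k$-th pull, the conditional success probabilities $\Pr[X_k=1\mid\mathcal F_{\tau_k-1}]\ge q$ hold no matter how $\tau_k$ was chosen, so the supermartingale argument goes through with the (bounded, random) $m_t(i)$ playing the role of the number of trials. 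Everything else is a routine Chernoff computation.
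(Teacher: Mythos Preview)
Your proposal is correct and follows essentially the same route as the paper: lower-bound $n_{t+d_i(q)}(i)$ by the sum of indicators $\indicator\{d_{\tau_k}(i)\le d_i(q)\}$ over the $m_t(i)$ pulls of arm $i$, then apply a multiplicative Chernoff lower-tail bound with deviation factor $\tfrac12$ (yielding the $q/8$ exponent). If anything you are more careful than the paper, which invokes ``the relative Chernoff bound'' directly without explicitly addressing the adaptivity of $m_t(i)$ and the pull times that you flag and resolve via the supermartingale argument.
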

\begin{proof}[Proof of \Cref{lemma:quantile_bound_chernoff}]
    Define $\indicator \{d_s \leq d_i(q)\}$ to be an indicator that on time $s$ that the delay is smaller than $d_i(q)$. Note that if arm $i$ was pulled at time $s$, then $\E[\indicator \{d_s \leq d_i(q)\} \mid a_s = i] \geq q$. Thus,
    \begin{align*}
        \Pr \Big[n_{t +d_i(q)}(i) < \frac{q}{2}  m_t(i) \Big] &\leq \Pr \Big[\sum_{s\leq t: a_s = i} \indicator \{d_s \leq d_i(q)\}  < \frac{q}{2}  m_t(i) \Big]
        \\
        & \leq \Pr \Big[\sum_{s\leq t: a_s = i} \indicator \{d_s \leq d_i(q)\}  < \frac{1}{2}  \sum_{s\leq t: a_s = i} \E[\indicator \{d_s \leq d_i(q)\} \mid a_s = i] \Big]
        \\
        & \leq \exp \Big(-\frac{1}{8} \sum_{s\leq t: a_s = i} \E[\indicator \{d_s \leq d_i(q)\} \mid a_s = i] \Big) \leq \exp \Big(-\frac{q}{8} m_t(i) \Big),
    \end{align*}
    where the third inequality follows from the relative Chernoff bound, and the last inequality is since $\sum_{s\leq t: a_s = i} \E[\indicator \{d_s \leq d_i(q)\} \mid a_s = i] \geq q \cdot m_t(i)$.
\end{proof}

    \paragraph{Regret bound \cref{eq:SE_arms_quantiles}:}
    Fix some vector $\vec{q}\in(0,1]^K$, and define $d_{max}(i) = \max\{d_i(q_i),d_{i^{*}}(q_{i^{*}})\}$. Consider the following failure events:
    \begin{align*}
        F_1
        & =
        \left\{\exists t,i:
            \abs{\hat{\mu}_{t}(i)-\mu_i} > \sqrt{\frac{2\log(T)}{n_{t}(i)}}
        \right\}, \\
        F_2 
        & = 
        \left\{ 
            \exists t,i:\;m_{t}(i)\geq\frac{32\log(T)}{q_i}~,~n_{t+d_{max}({i})}(i)<\frac{q_i}{2}m_{t}(i)
        \right\}.
    \end{align*}
    
    Since the delays are independent of the rewards, the reward
    estimator is unbiased. By \cref{lemma:estimator_bound}, $\Pr\left( F_1 \right) \leq 2 T^{-2}$.
    Using union bound and \cref{lemma:quantile_bound_chernoff},
    \begin{align*}
        \Pr[F_2] & =\Pr\Big[\exists t, i :~ m_{t}(i)\geq\frac{32\log(T)}{q_i} ~,~ n_{t+d_{max}(i)}(i) < \frac{q_i}{2}m_{t}(i)\Big]\\
         & \tag{union bound}
         \leq
         \sum_{i}\sum_{t:m_{t}(i)\geq32\log(T)/q_i} \Pr\Big[n_{t+d_{max}(i)}(i)<\frac{q_i}{2}m_{t}(i)\Big]\\
         & \leq
         \sum_{i}\sum_{t:m_{t}(i)\geq32\log(T)/q_i} \Pr\Big[n_{t+d_i(q_i)}(i)<\frac{q_i}{2}m_{t}(i)\Big]\\
         & \leq
         \tag{by \cref{lemma:quantile_bound_chernoff}}
         \sum_{i}\sum_{t:m_{t}(i)\geq32\log(T)/q_i}\exp\Big(-\frac{q_i}{8}m_{t}(i)\Big)\\
         & \leq T\cdot K\exp\Big(-\frac{q_i}{8}\frac{32\log(T)}{q_i}\Big)\leq\frac{1}{T^{2}}.
    \end{align*}
    Define the good event $G = \lnot F_1 \cap \lnot F_2$. By union bound, $\Pr(G) \geq 1 - 3T^{-2}$.
    The good event (particularly, $\lnot F_1$) implies that,
    \[
        UCB_t(i^*) \geq \mu_{i^*} \geq \mu_i \geq LCB_t(i),
    \]
    for any $i$. Hence, the best arm is never eliminated. The event $\lnot F_2$ implies a lower bound on the number of observations received from each
    arm.  
    We bound the regret under $\lnot G$ by $T$, and for the rest of the analysis we assume that $G$ occurred.

    Let $\tau_i$ be the last time we performed elimination and arm $i$ was not eliminated in it (meaning it remained active for exactly one more round-robin and $m_T(i) \leq m_{\tau_i}(i) + 1$). By the algorithm's definition, 
    \[
         LCB_{\tau_i}({i^{*}})\leq UCB_{\tau_i}(i).
    \]
    The above with $\lnot F_1$ implies that,
    
    \begin{align}   
        \Delta_i = \mu_{i^{*}} - \mu_i 
        \leq 
        2\sqrt{\frac{2\log(T)}{n_{\tau_i}(i)}} + 2\sqrt{\frac{2\log(T)}{n_{\tau_i}(i^*)}}.
        \label{eq:SE-proof-first-delta-bound}
    \end{align}
    Assume that 
    \begin{align}   \label{eq:SE-proof-enogh-samples}
        m_{\tau_i-d_{max}(i)}(i) > 32\log(T) \max\{ \frac{1}{q_{i}} , \frac{1}{q_{i^*}} \}.
    \end{align}
     Since $F_2$ does not occur, 
     \begin{align*}
         n_{\tau_i}(i)
         & \geq
         \frac{q_i}{2}m_{\tau_i-d_{max}(i) }(i),\\
         n_{\tau_i}({i^{*}})
         & \geq
         \frac{q_{i^*}}{2}m_{\tau_i-d_{max}(i) }(i^*) \\
         & \geq
         \tag{$\forall$ active arms $i,j$: $|m_t(i) - m_t(j)| \leq 1$}
         \frac{q_{i^*}}{2}(m_{\tau_i-d_{max}(i)})(i) - 1).\\
     \end{align*}
     
    Combining with \cref{eq:SE-proof-first-delta-bound} gives us,
    \begin{align*}
        \Delta_i 
        & \leq 
        2\sqrt{\frac{2\log(T)}{q_i (m_{\tau_i-d_{max}(i) }(i) - 1)}}
        + 2\sqrt{\frac{2\log(T)}{q_{i^{*}}(m_{\tau_i-d_{max}(i) }(i) - 1)}}
        \\
        & \leq
        6\sqrt{\frac{\log(T)}{m_{\tau_i-d_{max}(i)}(i) - 1} 
        \left(  \frac{1}{q_{i^{*}}} + \frac{1}{q_{i}} \right) },
    \end{align*}
    where the last inequality uses $\sqrt{a}+\sqrt{b}\leq2\sqrt{a+b}$. Now, this implies that,
    
    $$
        m_{\tau_i-d_{max}(i) }(i) 
        \leq 
        \frac{36\log(T)}{(\Delta_i)^{2}}\left(\frac{1}{q_i}+\frac{1}{q_{i^{*}}}\right) + 1
        \leq
        \frac{37\log(T)}{(\Delta_i)^{2}}\left(\frac{1}{q_i}+\frac{1}{q_{i^{*}}}\right).
    $$
    If the condition in \cref{eq:SE-proof-enogh-samples} does not hold, then the above holds trivially. The regret of arm $i$ is therefore,
    \begin{align*}
        m_{T}(i)\Delta_i 
        & \leq 
        (1 + m_{\tau_i}(i))\Delta_i\\  
        & \leq 
        \Delta_i + (m_{\tau_i-d_{max}(i)}(i)+m_{\tau_i}(i)-m_{\tau_i-d_{max}(i)}(i))\Delta_i\\
         & \leq 
         \Delta_i  + \frac{37\log(T)}{\Delta_i}\left(\frac{1}{q_i}+\frac{1}{q_{i^{*}}}\right)+(m_{\tau_i}(i)-m_{\tau_i-d_{max}(i)}(i))\Delta_i
         \\
         & \leq 
         \frac{38\log(T)}{\Delta_i}\left(\frac{1}{q_i}+\frac{1}{q_{i^{*}}}\right)+(m_{\tau_i}(i)-m_{\tau_i-d_{max}(i)}(i))\Delta_i.
    \end{align*}
    Let $\sigma(i)$ be number of active arms at time $\tau_i$. Since we round-robin over active arms,
    \[
        m_{\tau_i}(i)-m_{\tau_i-d_{max}(i)}(i)
        \leq
        \frac{d_{max}(i)}{\sigma(i)} + 1.
    \]
    So the total regret can be bounded as,
    \begin{align*}
        \mathcal{R}_{T} 
        & \leq 
        \sum_{i\ne i^{*}} \frac{38\log(T)}{\Delta_i}\left(\frac{1}{q_i}+\frac{1}{q_{i^{*}}}\right) + \sum_{i\ne i^{*}} \mathbb{E}\left[\frac{d_{max}(i)}{\sigma(i)}\Delta_i + 1 \right] 
        + \underbrace{\Pr(\lnot G) \cdot T}_{\leq 1}
        \\
        &  \leq \sum_{i\ne i^{*}}\frac{40\log(T)}{\Delta_i}\left(\frac{1}{q_i}+\frac{1}{q_{i^{*}}}\right) + \sum_{i\ne i^{*}} \mathbb{E}\left[\frac{d_i(q_i) + d_{i^*}(q_{i^*})}{\sigma(i)}\Delta_i\right]
        \\
         &\leq \sum_{i\ne i^{*}}\frac{40\log(T)}{\Delta_i}\left(\frac{1}{q_i}+\frac{1}{q_{i^{*}}}\right) + \max_{i\neq i^*} (d_i(q_i) + d_{i^*}(q_{i^*}))\Delta_i \Big(\sum_{i\ne i^{*}}  \mathbb{E}\left[\frac{1}{\sigma(i)}\right]\Big)
        \\
         & \leq \sum_{i\ne i^{*}}\frac{40\log(T)}{\Delta_i}\left(\frac{1}{q_i}+\frac{1}{q_{i^{*}}}\right) + \log(K)\max_{i \neq i^*}(d_i(q_i)+d_{i^*}(q_{i^*}))\Delta_i),
    \end{align*}
    where the last inequality follows $\sum_{i \ne i^{*}} 1/\sigma(i) \leq 1/K + 1/(K-1) + ... + 1/2 \leq \log K$.
    The above holds for any choice of $\vec{q}$, and in particular hold for the $\vec{q}$ that minimizes the bound.

\paragraph{Regret bound of \cref{eq:SE_single_quantile}:}
Fix $q\in(0,1]$. Let $t_{\ell}$ be the time we pulled all the active arms exactly $32\log T/(q\epsilon_{\ell}^{2}$) times, where $\epsilon_{\ell} = 2^{-\ell}$.
We define the next two failure events:
\[
    F_{1} = 
    \left\{ \exists t,i: \abs{\hat{\mu}_{t}(i) - \mu_{t}(i)}>\sqrt{\frac{2\log(T)}{n_{t}(i)}} \right\},
\]
\[
    F_{2} = 
    \left\{ \exists t,i:
    m_t(i) \geq \frac{32\log(T)}{q}, n_{t+d_{max}}(i) < \frac{1}{2}qm_{t}(i) \right\} ,
\]
where $d_{max} = \max_{i}d_i(q)$.
Define the clean event $G=\lnot F_{1} \cap \lnot F_{2}$. Using \cref{lemma:estimator_bound}, \cref{lemma:quantile_bound_chernoff} and union bound, $\Pr(G)\geq1-2T^{-2}$. Recall that under event $G$, $i^{*}$ is never eliminated. 

Let $0 \leq \kappa_{\ell} \leq K$ such that $t_{\ell} + d_{max} + \kappa_{\ell}$ is an elimination step. Let $S_{\ell}$ be the set of sub-optimal arms, that where \textit{not} eliminated by time $t_{\ell} + d_{max} + \kappa_{\ell}$, but was eliminated by time $t_{\ell+1} + d_{max} +  \kappa_{\ell+1}$. If arm $i$ was never eliminated, we consider $i$ to be part of $S_{\ell}$ for the minimal $\ell$ such that $t_{\ell+1}+d_{max} +  K > T$. We  define $\kappa_{\ell}$ as such, since $t_{\ell} + d_{max}$ is not necessarily a time where SE performs an elimination and could be in the middle of a round-robin.

Given $i \in S_{\ell}$, since $i$ was not eliminated in that step, 
\[
    UCB_{t_{\ell} + d_{max} + \kappa_{\ell}}(i)
    \geq 
    LCB_{t_{\ell} + d_{max} + \kappa_{\ell}}(i^{*}).
\]
The above implies that 
\begin{align}
    \Delta_i 
    = 
    \mu_{i^{*}} - \mu_{i} 
    \leq 
    2\sqrt{\frac{2\log(T)}{n_{t_{\ell} + d_{max} + \kappa_{\ell}}(i)}} + 2\sqrt{\frac{2\log(T)}{n_{t_{\ell}+  d_{max} + \kappa_{\ell}}(i^{*})}}.
    \label{eq:fixed-q-delta-bound1}
\end{align}
Under the event $G$, 
\begin{align*}
    n_{t_{\ell} + d_{max} + \kappa_{\ell}}(i) 
        & \geq \frac{q}{2}m_{t_{\ell}}(i)
        \underset{(*)}{\geq} 16\log(T)/\epsilon_{\ell}^{2},\\
    n_{t_{\ell} + d_{max} + \kappa_{\ell}}(i^*) 
        & \geq \frac{q}{2}m_{t_{\ell}}(i^*)
        \underset{(*)}{\geq} 16\log(T)/\epsilon_{\ell}^{2},
\end{align*}
where $(*)$ is definition of $t_\ell$. Combing with \cref{eq:fixed-q-delta-bound1} gives us,
\begin{equation}    \label{eq:fixed-q-delta-bound2}
    \Delta_{i} 
    \leq
    \frac{3}{2}\epsilon_{\ell}
    =
    3\epsilon_{\ell+1}.
\end{equation}
Hence, the total regret from the arms in $S_{\ell}$ is,
\begin{align*}
    \sum_{i\in S_{\ell}}m_{t_{\ell + 1} + d_{max} + \kappa_{\ell+1}}(i)\Delta_{i} 
    & = \sum_{i\in S_{\ell}} \big(m_{t_{\ell + 1} + d_{max} + \kappa_{\ell+1}}(i)-m_{t_{\ell + 1}}(i)\big)\Delta_{i} + \sum_{i\in S_{\ell}}m_{t_{\ell + 1}}(i)\Delta_{i}
    \\
    & \leq 3\sum_{i\in S_{\ell}} \big(m_{t_{\ell + 1} + d_{max} + \kappa_{\ell+1}}(i)-m_{t_{\ell + 1}}(i)\big) \epsilon_{\ell + 1} 
    + \sum_{i\in S_{\ell}}\frac{32\log(T)}{q\epsilon_{\ell + 1}^{2}}\Delta_{i}
    \\
    & \leq 3\sum_{i\in S_{\ell}} \big(m_{t_{\ell + 1} + d_{max} + \kappa_{\ell+1}}(i)-m_{t_{\ell + 1}}(i)\big) \epsilon_{\ell + 1}
    + \sum_{i\in S_{\ell}}\frac{288\log(T)}{q\Delta_{i}}
    \\
    & \leq 3\sum_{i\in [K]} \big(m_{t_{\ell + 1} + d_{max} + \kappa_{\ell+1}}(i)-m_{t_{\ell + 1}}(i)\big) \epsilon_{\ell + 1}
    + \sum_{i\in S_{\ell}}\frac{288\log(T)}{q\Delta_{i}}
    \\
    & \leq 3(d_{max} + K) \epsilon_{\ell + 1}
    + \sum_{i\in S_{\ell}}\frac{288\log(T)}{q\Delta_{i}}.
\end{align*}
The first two inequalities are by \cref{eq:fixed-q-delta-bound2}, the third inequality is since $m_t(i)$ is increasing with $t$, and the last equality is since $\sum_i m_t(i) = t$ for any $t$. 

Summing over all $\ell \in \N$, taking into account the arms that where
eliminated before time $t_{0}+d_{max}+\kappa_0$ and the bad event,
\begin{align*}
    \mathcal{R}_{T} 
    & \leq
    \sum_{t=1}^{t_{0}+d_{max}+\kappa_0}\sum_i \indicator\{a_t = i\}\Delta_{i} + 3(d_{max} + K) \underbrace{\sum_{\ell = 0}^{\infty} \epsilon_{\ell + 1}}_{ = 1}
    +\sum_{\ell = 0}^{\infty} \sum_{i\in S_{\ell}} \frac{288\log(T)}{q\Delta_{i}} + \Pr(\lnot G)T\\
    & \leq
    \frac{32\log(T)}{q}(K-1) + 4(d_{max} + K)
    +\sum_{i\ne i^*} \frac{289\log(T)}{q\Delta_{i}}\\
    & \leq
    \sum_{i\ne i^*} \frac{325\log(T)}{q\Delta_{i}} + 4\max_{i}d_{i}(q).
\end{align*}
The above is true for any non-zero choice of $q$, thus we choose
the optimal $q$ to obtain the statement of the theorem.


\subsection{Proof of \Cref{thm:PSE_reward_indep}}
\label{appendix:PSE-proof}
    Fix some vector $\vec{q}\in(0,1]^K$, and define the following failure events:
    \begin{align*}
        F_1
        & =
        \left\{\exists t,i:
            \abs{\hat{\mu}_{t}(i)-\mu_i} > \sqrt{\frac{2\log(T)}{n_{t}(i)}}
        \right\}, \\
        F_2 
        & = 
        \left\{ 
            \exists t,i: m_{t}(i)\geq\frac{32\log(T)}{q_i}~,~n_{t+d_{i}(q_i)}(i)<\frac{q_i}{2}m_{t}(i)
        \right\},
    \end{align*}
    and the good event $G = \lnot F_1 \cap \lnot F_2$. Similar to the proof of \Cref{thm:SE_arm_quantiles}, 
    $\Pr(G) \geq 1 - 3T^{-2}$.
    
    Let $t_{\ell}$ be the last round of phase $\ell$. 
    Assume arm $i$ is eliminated at time $t_{\ell + 1}$. Since it is not eliminated at time $t_{\ell}$,
    \[
        LCB_{t_{\ell}}(i^*) \leq UCB_{t_{\ell}}(i),
    \]
    which under the good event implies,
    \begin{align}
        \Delta_{i}
        =
        \mu_{i^{*}}-\mu_{i}
        \leq 2\sqrt{\frac{2\log(T)}{n_{t_{\ell}}(i)}} + 2\sqrt{\frac{2\log(T)}{n_{t_{\ell}}(i^*)}}  \leq 4\sqrt{\frac{2\log(T)}{16\log(T)/\epsilon_{\ell}^2}} 
        \leq 
        \sqrt{2}\epsilon_{\ell} 
        = 
        2\sqrt{2}\epsilon_{\ell+1}.
        \label{eq:PSE-proof-delta-bound}
    \end{align}
    where the second inequality is because the phase ends whenever $n_{t_{\ell}}(j) \geq 16\log(T)/\epsilon_{\ell}^2$ for all $j$.
    Let $\tau_i$ the last time we've pulled arm $i$. We have that $n_{\tau_i - 1}(i) < 16\log(T)/\epsilon_{\ell + 1}^2$.
    Assume that,
    \begin{align}   \label{eq:PSE-proof-enogh-samples}
        m_{\tau_i-d_i(q_i) - 1}(i) >  \frac{32\log(T)}{q_{i}}.
    \end{align}
    Under the good event,
    \begin{align*}
        m_{\tau_i-d_i(q_i) - 1}(i) 
        & \leq 
        \frac{2}{q_i}n_{\tau_i - 1}(i)
        \leq
        \frac{32\log(T)}{q_{i}\epsilon_{\ell + 1}^2}
        \leq
        \frac{256\log(T)}{q_{i}\Delta_{i}^{2}}
    \end{align*}
    where the last holds by \cref{eq:PSE-proof-delta-bound}. If the condition in \cref{eq:PSE-proof-enogh-samples} does not holds, then the above holds trivially.
    The total regret from arm $i$ is,
    \begin{align*}
        m_{\tau_i}(i)\Delta_{i} 
        & =
        m_{\tau_i-d_{i}(q_{i})-1}(i)\Delta_{i}+(m_{\tau_i}(i)-m_{\tau_i-d_{i}(q_{i})-1}(i))\Delta_{i}\\
        & \leq
        \frac{256\log(T)}{q_{i}\Delta_{i}}+(m_{\tau_i}(i)-m_{\tau_i-d_{i}(q_{i})-1}(i))\Delta_{i}\\
        & =
        \frac{256\log(T)}{q_{i}\Delta_{i}}+\sum_{t=\tau_i-d_{i}(q_{i})}^{\tau_i}\indicator\{a_{t}=i\}\Delta_{i}
    \end{align*}
    When summing over all arms,
    \begin{align}
        \nonumber
        \mathcal{R}_{T} 
        & \leq
        \sum_{i\ne i^{*}} \frac{256\log(T)}{q_{i}\Delta_{i}}  +  \sum_{i\ne i^{*}}\sum_{t=\tau_i-d_{i}(q_{i})}^{\tau_i} \indicator\{a_{t}=i\}\Delta_{i} + \Pr(\lnot G)T\\
        \nonumber
        & \leq
        \sum_{i\ne i^{*}}\frac{257\log(T)}{q_{i}\Delta_{i}} + \sum_{i\ne i^{*}}\sum_{t=\tau_i-d_{i}(q_{i})}^{\tau_i} \indicator\{a_{t}=i\}\Delta_{i}\\
        & =
        \sum_{i\ne i^{*}} \frac{257\log(T)}{q_{i}\Delta_{i}}
        + \sum_{\ell=1}^{L} \sum_{i = 1 }^{K} \sum_{t=1}^{T} \indicator\{a_{t}=i,t\in[\tau_i-d_{i}(q_{i}),\tau_i] \cap [t_{\ell-1} + 1,t_{\ell}]\}\Delta_{i},
        \label{eq:PSE-proof-regret}
    \end{align}
    where $L$ is number of phases and we define $t_0 = 0$. Let $S_{\ell}$ be the set of all arms, $i$, such that some rounds in $[\tau_i - d_i(q_i),\tau_i]$ intersects with phase $\ell$. Formally, 
    \[
        S_{\ell}=
        \bigl\{ i\in[K]:[\tau_i - d_i(q_i),\tau_i]\cap[t_{\ell-1} + 1,t_{\ell}] \ne \emptyset
        \bigr\}.
    \]
    Let $\sigma_{\ell}(i)$ be the number of arms in the round-robin of phase $\ell$ at time $\min\{ \tau_i , t_{\ell}\}$. We have that, 
    \begin{align}
        \nonumber
        & \sum_{i=1}^{K} \sum_{t=1}^{T} \indicator\{a_{t}=i,t\in[\tau_i-d_{i}(q_{i}),\tau_i]\cap[t_{\ell-1} + 1,t_{\ell}]\}\Delta_{i}\\
        \nonumber
        & =
        \sum_{i\in S_{\ell}} \sum_{t=1}^{T}
        \indicator\{a_{t}=i, t\in[\tau_i-d_{i}(q_{i}),\tau_i] \cap [t_{\ell-1} + 1,t_{\ell}]\}\Delta_{i}\\
        \nonumber
        & \leq
        \sum_{i\in S_{\ell}}\left[\frac{(d_{i}(q_{i})+1)\Delta_{i}}{\sigma_{\ell}(i)} + 1\right]\\
        \nonumber
        & \leq
        \sum_{i\in S_{\ell}} \frac{\max_{i\ne i^{*}}(d_{i}(q_{i})+1)\Delta_{i}}{\sigma_{\ell}(i)} + |S_\ell|\\
        & \leq(\log(K) + 1)\max_{i\ne i^{*}}d_{i}(q_{i})\Delta_{i}+\log(K) + K.
        \label{eq:PSE-proof-penalty-per-phase}
    \end{align}
    The equality is since the sum over the indicators is empty whenever $i \notin S_{\ell}$. The first inequality follows the fact that $|[\tau_i-d_{i}(q_{i}),\tau_i]| \leq d_i(q_i) + 1$ and that we round-robin over at least $\sigma_\ell(i)$ arms. Finally, the last inequality is since 
    \[
        \sum_{i\in S_{\ell}} \sigma_{\ell}(i)
        \leq
        \frac{1}{|S_{\ell}|} + \frac{1}{|S_{\ell}|-1} + ... + 1
        \leq
        \log(|S_{\ell}|) + 1
        \leq
        \log(K) + 1.
    \]
    Plugging \cref{eq:PSE-proof-penalty-per-phase} in \cref{eq:PSE-proof-regret}, and using the fact that the number of phases is at most $\log(T)$, gives us,
    \[
        \mathcal{R}_T
        \leq
        \sum_{i\ne i^{*}} \frac{290\log(T)}{q_{i}\Delta_{i}}
        + \log(T) (\log(K) + 1) \max_{i\ne i^{*}}d_{i}(q_{i})\Delta_{i}.
    \]

\subsection{Lower Bound and proof of \cref{thm:lower-bound-ind}}    \label{appendix:lower-bound-ind}


We will use the following lower bound for MAB without delay, which is a variant of \citep[Lemma 14]{kleinberg2010regret}
\begin{lemma}   \label{lemma:no-delay-lower-bound}
    Consider an algorithm $ALG^{MAB}$ for MAB problem without delays. And let $\Iber$ be the set of instances with Bernoulli rewards. Fix  sub-optimality gaps $\Delta_{i}\in [0,1/4]$ where $\Delta_{1} = 0$ (that is, arm $1$ is optimal), and consider the instance $I \in \Iber$ in which $\mu_i = 1/2 - \Delta_{i}$.
    If $ALG^{MAB}$'s regret over all instances in $\Iber$, is bounded by $CT^{\alpha}$ where $\alpha \in [0,1]$ and $C > 0$, then,
    \[
        \mathcal{R}_{T}^{ALG}
        \geq
        \frac{1}{32}\sum_{i:\Delta_{i}>0}
        \left(
            \frac{(1 - \alpha)\log T-\log\frac{8C}{\Delta_{i}}}{\Delta_{i}}
        \right)^{+}
    \]
    where $(x)^{+} = \max\{x , 0\}$.
\end{lemma}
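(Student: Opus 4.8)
The plan is the standard change-of-measure (two-point) lower bound, run arm by arm. For each suboptimal arm $i$ (so $\Delta_i>0$) I would introduce the alternative instance $I_i\in\Iber$ obtained from $I$ by raising arm $i$'s mean from $1/2-\Delta_i$ to $1/2+\Delta_i$ and leaving all other arms unchanged; since $\Delta_i\le1/4$ this is a valid instance, it makes arm $i$ uniquely optimal, and every other arm $j$ now has gap $\Delta_i+\Delta_j\ge\Delta_i$ to it. Writing $N_i$ for the number of pulls of arm $i$ over the $T$ rounds, we have $\mathcal{R}_T^{ALG}(I)=\sum_{j}\Delta_j\,\E_I[N_j]$, so it suffices to lower bound $\E_I[N_i]$ for each $i$ separately, while on the alternative $\mathcal{R}_T^{ALG}(I_i)\ge\Delta_i\bigl(T-\E_{I_i}[N_i]\bigr)$ because every non-$i$ pull there costs at least $\Delta_i$.

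Next I would invoke the divergence-decomposition lemma: since $I$ and $I_i$ differ only in the reward distribution of arm $i$ (note arm $1$ has mean $1/2$ under both), the laws $\mathbb{P}_I,\mathbb{P}_{I_i}$ of the $T$-round interaction satisfy $\KLD{\mathbb{P}_I}{\mathbb{P}_{I_i}}=\E_I[N_i]\cdot\KLD{\mathrm{Ber}(1/2-\Delta_i)}{\mathrm{Ber}(1/2+\Delta_i)}$, and since $\KLD{\mathrm{Ber}(p)}{\mathrm{Ber}(q)}\le(p-q)^2/(q(1-q))$ the per-pull term is at most $4\Delta_i^2/(1/4-\Delta_i^2)\le32\Delta_i^2$ for $\Delta_i\le1/4$. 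Then I would apply the Bretagnolle--Huber inequality to the event $A=\{N_i>T/2\}$, i.e.\ $\Pr_I[A]+\Pr_{I_i}[A^c]\ge\tfrac12\exp(-\KLD{\mathbb{P}_I}{\mathbb{P}_{I_i}})$, and combine it with Markov's inequality ($\E_I[N_i]\ge\tfrac T2\Pr_I[A]$ and $T-\E_{I_i}[N_i]\ge\tfrac T2\Pr_{I_i}[A^c]$) and the two regret estimates above to get
\[
    \mathcal{R}_T^{ALG}(I)+\mathcal{R}_T^{ALG}(I_i)\ \ge\ \frac{\Delta_i T}{4}\exp\!\bigl(-32\Delta_i^2\,\E_I[N_i]\bigr).
\]

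To finish I would argue by contradiction: if $\E_I[N_i]<\bigl((1-\alpha)\log T-\log(8C/\Delta_i)\bigr)/(32\Delta_i^2)$, then the exponent above exceeds $-(1-\alpha)\log T+\log(8C/\Delta_i)$, so the right-hand side is strictly larger than $\tfrac{\Delta_i T}{4}\cdot\tfrac{8C}{\Delta_i}T^{\alpha-1}=2CT^\alpha$, contradicting that both $\mathcal{R}_T^{ALG}(I)$ and $\mathcal{R}_T^{ALG}(I_i)$ are at most $CT^\alpha$ by hypothesis. Hence $\E_I[N_i]\ge\bigl((1-\alpha)\log T-\log(8C/\Delta_i)\bigr)/(32\Delta_i^2)$ whenever this quantity is positive, and $\E_I[N_i]\ge0$ always, so in all cases $\Delta_i\,\E_I[N_i]\ge\tfrac1{32}\bigl((1-\alpha)\log T-\log(8C/\Delta_i)\bigr)^+/\Delta_i$; summing over $i$ with $\Delta_i>0$ yields the claim. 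The routine parts are the Bernoulli-KL estimate and the $(\cdot)^+$ bookkeeping. The points that need care are verifying that $I_i$ keeps every non-optimal gap at least $\Delta_i$ (so the bound on $\mathcal{R}_T^{ALG}(I_i)$ is valid) and that $I$ and $I_i$ differ in arm $i$ only (so the divergence decomposition contributes a single term); once these are settled, choosing the perturbation to be the symmetric $1/2\pm\Delta_i$ makes the constant $1/32$ fall out directly from $\Delta_i\le1/4$.
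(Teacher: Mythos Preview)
Your proposal is correct and follows essentially the same approach as the paper: the same alternative instance $I_i$ (raising arm $i$'s mean from $1/2-\Delta_i$ to $1/2+\Delta_i$), the same KL bound $\mathrm{KL}\!\left(\mathrm{Ber}(1/2-\Delta_i)\,\|\,\mathrm{Ber}(1/2+\Delta_i)\right)\le 32\Delta_i^2$, and the same resulting lower bound on $\E_I[N_i]$. The only difference is presentational: the paper invokes \citep[Lemma~16.3]{lattimore2020bandit} as a black box to obtain $\E_I[N_i]\ge\bigl((1-\alpha)\log T-\log(8C/\Delta_i)\bigr)/\mathrm{KL}$ directly, whereas you unpack that lemma via divergence decomposition, Bretagnolle--Huber on $\{N_i>T/2\}$, and the contradiction with the $CT^\alpha$ assumption---which is exactly how that lemma is proved.
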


\begin{proof}
    We denote by $\mu_{i}(I')$, the expected reward of arm $i$ under instance $I'$. For any $i \ne i^*$ consider the instance $I_{i}$ under which
    \[
        \mu_{j}(I_{i})
        = 
        \begin{cases}
            \frac{1}{2} - \Delta_{j} & j \ne i\\
            \frac{1}{2} + \Delta_{i} & j=i
        \end{cases}
    \]
    Instance $I_i$ is similar to instnace $I$. However, under instance $I_i$, arm $i$ is the optimal arm. The assumption that $ALG^{MAB}$'s regret over any instance, and in particular over $I_i$, is small, would insure that $ALG^{MAB}$ cannot specialized on $I$.
    Formally, $ALG^{MAB}$'s regret over $I_{i}$ is at most $CT^{\alpha}$ and so, by \citep[Lemma 16.3]{lattimore2020bandit},
    \[
        \E_{I}\left[m_{i}(T)\right]
        \geq
        \frac{(1-\alpha)\log(T)-\log\left(\frac{8C}{\Delta_{i}}\right)}
        {KL\left(\mu_{i}(I)\Vert\mu_{i}(I_{i})\right)}
        \geq
        \frac{(1-\alpha)\log(T)-\log\left(\frac{8C}{\Delta_{i}}\right)}{32\Delta_{i}^{2}},
    \]
    where $KL(p \Vert q)$ is KL-divergence between Bernoulli distributions with parameters $p$ and $q$. The seconed inequality is due to inverse Pinsker's inequality (see for example \citep[Lemma 6.3]{csiszar2006context}). The lemma now follows by multiplying the above by $\Delta_i$ (to get a bounds on the regret from arm $i$), and summing over all arms.
\end{proof}

\newpage
\begin{theorem}     
    \label{thm:lower-bound-ind-0-inf}
    Consider a delay distribution such that with probability $q$ the delay is $0$, and infinity otherwise. For any  sub-optimality gaps $\Delta_{i}\in [0,1/4]^K$ where $\Delta_{1} = 0$ (that is, arm $1$ is optimal), consider an instance $I$ in which $\mu_i = 1/2 - \Delta_{i}$.
    For any algorithm $ALG^{delay}$ that guarantees a regret bound of $CT^\alpha$ over any instance, $ALG^{delay}$'s regret on $I$ is at least,
    \begin{align*}
        \Rcal_{T}^{ALG^{delay}}
        & \geq
        \frac{1}{32}\sum_{i:\Delta_{i}>0}\frac{(1 - \alpha)\log T - 2\log\left(\frac{32C}{q\Delta_{i}}\right) - 4}{q\Delta_{i}}
    \end{align*}
    
\end{theorem}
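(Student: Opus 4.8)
The plan is to reduce to the instance-dependent no-delay bound of \cref{lemma:no-delay-lower-bound}, exploiting that under this packet-loss delay the learner gets immediate feedback on only a random $q$-fraction of the rounds. Given $ALG^{delay}$ and the horizon $T$, I would set $m := \lfloor qT/2\rfloor$ and construct a non-delayed MAB algorithm $ALG^{MAB}$ of horizon $m$ that internally runs $ALG^{delay}$ for $T$ rounds: at simulated round $s$, $ALG^{delay}$ proposes an arm $\tilde a_s$, and $ALG^{MAB}$ draws a fresh $b_s \sim \mathrm{Bernoulli}(q)$ --- if $b_s=1$ (and $ALG^{MAB}$ still has rounds left) it actually plays $\tilde a_s$, observes its reward, and relays it to $ALG^{delay}$ with zero delay; if $b_s=0$ it relays nothing, that is, delay $\infty$. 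This exactly simulates the packet-loss model, so the internal trajectory is a genuine run of $ALG^{delay}$ on the delayed version of whatever Bernoulli reward instance $ALG^{MAB}$ faces. If fewer than $m$ of $b_1,\dots,b_T$ are $1$ --- an event of probability at most $e^{-qT/8}$ by a multiplicative Chernoff bound --- $ALG^{MAB}$ simply plays arm $1$ for its remaining rounds.

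The crucial step is a regret identity on the instance $I$. Letting $\sigma(m)$ be the round of the $m$-th ``$1$'', the $k$-th play of $ALG^{MAB}$ is $\tilde a_{\sigma(k)}$ for as long as it has enough successes, so $\mathcal{R}^{ALG^{MAB}}_m = \E_I\big[\sum_{s \le \min\{\sigma(m),T\}} \indicator\{b_s=1\}\Delta_{\tilde a_s}\big]$ on $I$. Since $b_s$ is independent of the past while $\indicator\{s \le \sigma(m)\}$ and $\tilde a_s$ are both measurable with respect to it (the earlier coins and the observed feedback), each summand has conditional success probability exactly $q$, giving $\mathcal{R}^{ALG^{MAB}}_m = q\,\E_I\big[\sum_{s \le \min\{\sigma(m),T\}} \Delta_{\tilde a_s}\big] \le q\,\mathcal{R}^{ALG^{delay}}_T$ on $I$. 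This is precisely where the $1/q$ in the target bound comes from. Repeating the computation on an arbitrary Bernoulli instance $J$ and adding the crude bound $m\,e^{-qT/8} \le 1$ (for $T$ large) for the filler plays gives $\mathcal{R}^{ALG^{MAB}}_m \le q\,\mathcal{R}^{ALG^{delay}}_T(J) + 1 \le qCT^\alpha + 1 \le \tilde C m^\alpha$ with $\tilde C = 5C$ (using $T \le (2m+2)/q$ and $q \le 1$), so \cref{lemma:no-delay-lower-bound} applies to $ALG^{MAB}$ with horizon $m$, constant $\tilde C$, and the same gaps $\Delta_i$.

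Combining the two facts, $\mathcal{R}^{ALG^{delay}}_T \ge \tfrac1q \mathcal{R}^{ALG^{MAB}}_m \ge \tfrac1{32}\sum_{i:\Delta_i>0} \tfrac1{q\Delta_i}\big((1-\alpha)\log m - \log(8\tilde C/\Delta_i)\big)^+$, and then I would just simplify: $m \ge qT/4$ gives $(1-\alpha)\log m \ge (1-\alpha)\log T - \log(4/q)$, and $\log(8\tilde C/\Delta_i) + \log(4/q) = \log(160C/(q\Delta_i)) \le 2\log(32C/(q\Delta_i)) + 4$, so the numerator is at least $(1-\alpha)\log T - 2\log(32C/(q\Delta_i)) - 4$; dropping the positive part only weakens the bound, which yields the statement (the loose constants being absorbed by the generous $2\log(\cdot)$ and additive $4$ in the claim).

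The main obstacle is the second paragraph --- arranging the subsampling so that a clean multiplicative $q$ falls out, which needs the fresh coin $b_s$ to be disentangled from the history-measurable $\tilde a_s$ and the random stopping time $\sigma(m)$. A secondary technical nuisance is the horizon mismatch (a non-delayed algorithm must fix its horizon, but the number of informative rounds is $\mathrm{Binomial}(T,q)$), handled by taking $m\approx qT/2$, bounding the shortfall by Chernoff, and checking that the arm-$1$ filler plays contribute zero regret on $I$ while adding only $o(1)$ to $ALG^{MAB}$'s regret on the alternatives needed inside \cref{lemma:no-delay-lower-bound}. Everything after that is the off-the-shelf change-of-measure argument already packaged in that lemma.
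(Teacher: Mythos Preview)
Your proposal is correct and follows essentially the same route as the paper: both build a non-delayed algorithm $ALG^{MAB}$ by internally running $ALG^{delay}$, subsampling its plays with fresh $\mathrm{Bernoulli}(q)$ coins to mimic the packet-loss feedback, then invoking \cref{lemma:no-delay-lower-bound} on the subsampled horizon $\Theta(qT)$ and using the independence of the coin from the history to extract the factor $q$ relating the two regrets. The only cosmetic differences are that the paper takes horizon $\lfloor qT/4\rfloor$ and fills any shortfall by continuing to follow $ALG^{delay}$ (so the filler regret is bounded by $\mathcal{R}^{ALG^{delay}}_T$ on every instance), whereas you take $\lfloor qT/2\rfloor$ and fill with arm~$1$ (zero regret on $I$, and $\le m\,e^{-qT/8}$ on the alternatives); your explicit handling of the stopping time $\sigma(m)$ via $\indicator\{s\le\sigma(m)\}$ being $\sigma(b_1,\dots,b_{s-1})$-measurable is in fact slightly cleaner than the paper's treatment.
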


\begin{proof}
    Let $ALG^{delay}$ be an algorithm that guarantees expected pseudo regret of $CT^{\alpha}$, over $T$ rounds, for any instance. We built an algorithm $ALG^{MAB}$ that simulates $ALG^{delay}$ and interacts with a non-delayed environment for $\frac{1}{4} q T$ rounds. In each round, $ALG^{MAB}$ draw a Bernoulli variable with probability $q$. If the variable is $1$, then it chooses the same action as $ALG^{delay}$ and feed it with the feedback. Otherwise, only $ALG^{delay}$ plays this round. If after $\lfloor T(1-q/4)\rfloor$ rounds of $ALG^{delay}$, $ALG^{MAB}$ have not played $(Tq)/4$ rounds, then for the rest of its rounds it follows $ALG^{delay}$s actions (and keeps feeding it with feedback with probability $q$). This technical condition  ensures that $ALG^{MAB}$ plays all of the $(Tq)/4$ rounds. However with high probability $ALG^{MAB}$ finishes his game before that. A protocol of this process appears as \Cref{alg:lower-bound-ind}. 
    \begin{algorithm}
        \caption{Reduction from Non-Delayed Environment}    \label{alg:lower-bound-ind}
        \begin{algorithmic}
            \FOR{$t = 1,2,3,..., \lfloor T(1-q/4)\rfloor$}
                \STATE $ALG^{delay}$ chooses an action $a_t$. 
                \STATE $ALG^{MAB}$ draw $X_t \sim Ber(q)$.
                \IF{$X_t = 1$}
                    \STATE $ALG^{MAB}$ plays $a_t$ and feed $ALG^{delay}$ with the feedback $r_t(a_t)$.                
                \ENDIF
            \ENDFOR
            \STATE {\color{gray} \# The condition below occurs under ``bad'' event}
            \IF{$\sum_{t=1}^{\lfloor T(1-q/4)\rfloor}X_{t} < \frac{qT}{4}$}
                \STATE $ALG^{MAB}$ follows $ALG^{delay}$s choices for the rest of its game, while keeps feeding $ALG^{MAB}$ with the feedbacks with probability of $q$.
            \ENDIF
        \end{algorithmic}
    \end{algorithm}
    
    Define the bad event as
    \[
        B = \left\{  \sum_{t=1}^{\lfloor T(1-q/4)\rfloor}X_{t} < \frac{qT}{4} \right\}.
    \]
    Since $\lfloor T(1-q/4)\rfloor \geq T/2$, by Chernoff bound,
    \[
        \Pr(B)
        \leq
        e^{-Tq/16}
    \]
    Let $C' = \frac{C}{\left(q/4\right)^{\alpha}}$. We have that, 
    \[
        \Rcal^{ALG^{MAB}}_{\lfloor \frac{1}{4}Tq \rfloor} 
        \leq 
        \Rcal^{ALG^{delay}}_{T}
        \leq
        C T^{\alpha} = C'\left\lfloor \frac{Tq}{4}\right\rfloor ^{\alpha}.
    \]
    By \Cref{lemma:no-delay-lower-bound}, 
    \begin{align}
        \Rcal^{ALG^{MAB}}_{\lfloor \frac{1}{4}Tq \rfloor} 
        \geq
        \frac{1}{32}\sum_{i:\Delta_{i}>0}
        \left(
            \frac{(1 - \alpha)\log \frac{Tq}{4} + \log\frac{\Delta_{i}}{8C'}}{\Delta_{i}}
        \right)^{+}
        =
        \frac{1}{32}\sum_{i:\Delta_{i}>0}
        \left(
            \frac{(1-\alpha) \log T + \log\left(\left(\frac{q}{4}\right)^{1+\alpha}\frac{\Delta_{i}}{8C}\right)}
            {\Delta_{i}}
        \right)^{+}
    \label{eq:ind-lower-bound-ALG-tag-regret}
    \end{align}
    On the other hand,
    \begin{align*}
        \Rcal_{\lfloor\frac{1}{4}Tq\rfloor}^{ALG^{MAB}} 
        & \leq
        \E\left[\sum_{t<T(1-q/4)}\indicator\{X_{t}=1\}\sum_{i=1}^{K}\indicator\{a_{t}=i\}\Delta_{i}\right]
        +\E\left[\indicator\{B\}\sum_{t\geq T(1-q/4)}\sum_{i=1}^{K}\indicator\{a_{t}=i\}\Delta_{i}\right]\\
        \tag{$X_{t}$ is independent of $a_{t}$} 
        & \leq
        \sum_{t<T(1-q/4)}
        \underbrace{\E\left[\indicator\{X_{t}=1\}\right]}_{=q}
        \E\left[\sum_{i=1}^{K}\indicator\{a_{t}=i\}\Delta_{i}\right]+\frac{Tq}{4}\Pr(B)\\
        & \leq
        q\E\left[\sum_{t=1}^{T}\sum_{i=1}^{K}\indicator\{a_{t}=i\}\Delta_{i}\right]+\frac{Tq}{4}\Pr(B)\\
        & \leq 
        q\Rcal_{T}^{ALG^{delay}} + \frac{Tq}{4}e^{-\frac{Tq}{16}}\\
        & \leq 
        q\Rcal_{T}^{ALG^{delay}} + 4.
    \end{align*}
    Combing with \cref{eq:ind-lower-bound-ALG-tag-regret},
    \begin{align*}
        \mathcal{R}_{T}^{ALG^{delay}} 
        & \geq
        \frac{1}{32}\sum_{i:\Delta_{i}>0}\frac{(1-\alpha)\log T + \log\left(\left(\frac{q}{4}\right)^{1+\alpha}\frac{\Delta_{i}}{8C}\right)}{q\Delta_{i}} - \frac{4}{q}\\
        & \geq
        \frac{1}{32}\sum_{i:\Delta_{i}>0}\frac{(1 - \alpha)\log T - 2\log\left(\frac{32C}{q\Delta_{i}}\right) - 4}{q\Delta_{i}}
    \end{align*}
\end{proof}

Fix some integer $\tilde{d}$ and consider the following delay distribution:
\begin{align}
    \forall i\in[K]:\quad\Pr(d(i)=x)
    =
    \begin{cases}
        q   & x = \Tilde{d}\\
        1-q & x = \infty.
    \end{cases}
    \label{eq:lower-bound-indpendent-delay-d-inf}
\end{align}
Under the distribution above, the algorithm does not get feedback in the first $\Tilde{d}$ rounds. Therefore, if we fix sub-optimality gaps and arrange the arms in uniform order, then the expected regret of any algorithm will be at least $\Tilde{d}\Deltabar$ where $\Deltabar = \frac{1}{K}\sum_{i}\Delta_i$. Combining this observation with \Cref{thm:lower-bound-ind-0-inf} gives us the next corollary.
\begin{corollary}
   For any sub-optimality gaps set $S_\Delta = \{\Delta_i : \Delta_i \in [0,1/4]\}$ of cardinality $K$, a quantile $q\in(0,1]$, and $\Tilde{d} \leq T$, there  exist a delay distribution, and instance $I$  with an order on $S_\Delta$, such that for any algorithm $ALG^{delay}$ that guarantees a regret bound of $CT^\alpha$ over any instance, $ALG^{delay}$s regret on $I$ is at least,
    \[
        \mathcal{R}_T
        \geq
        \frac{1}{64}\sum_{i:\Delta_{i}>0}\frac{(1 - \alpha)\log T - 2\log\left(\frac{32C}{q\Delta_{i}}\right) - 4}{q\Delta_{i}}
        + \frac{1}{2}\Deltabar\max_i d_i(q).
    \]
    Moreover, $d_i(q) = \Tilde{d}$ for any $i$.
\end{corollary}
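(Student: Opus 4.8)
The plan is to invoke the homogeneous delay distribution of \cref{eq:lower-bound-indpendent-delay-d-inf}, which places mass $q$ on the value $\tilde d$ and mass $1-q$ on $\infty$, and to lower bound $\mathcal{R}_T$ by two separate quantities that this distribution produces: a ``statistical'' term, inherited essentially unchanged from \Cref{thm:lower-bound-ind-0-inf}, and a ``delay'' term equal to $\tilde d\,\Deltabar$, which comes from the fact that no feedback at all arrives during the first $\tilde d$ rounds. Before anything else, note that for this distribution and any $q\in(0,1]$ we have $\Pr[D_i\le\gamma]=0<q$ for every $\gamma<\tilde d$ while $\Pr[D_i\le\tilde d]=q$, so $d_i(q)=\tilde d$ for every arm $i$, which settles the ``Moreover'' claim.

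First I would argue that \Cref{thm:lower-bound-ind-0-inf} remains true, with the same constant $\tfrac{1}{32}$, when its ``$0$-or-$\infty$'' delay is replaced by \cref{eq:lower-bound-indpendent-delay-d-inf}. The only feature of the $0$-or-$\infty$ delay used in that proof is that each reward is revealed, independently and with probability exactly $q$, at \emph{some} round of the game; the reduction in \Cref{alg:lower-bound-ind} realizes this by forwarding the sampled feedback to $ALG^{delay}$ exactly when $X_t=1$. Forwarding that feedback after an additional $\tilde d$ internal rounds, instead of immediately, simulates \cref{eq:lower-bound-indpendent-delay-d-inf} for $ALG^{delay}$ and does not touch $ALG^{MAB}$'s own (non-delayed) interaction, so its regret --- and hence the entire analysis --- is unchanged. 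Since the resulting lower bound depends only on the multiset of gaps, relabelling the arms shows it holds for the instance induced by \emph{any} ordering of $S_\Delta$; denote this bound by $L_1$. Next I would use that, because the minimal delay is $\tilde d$, the observed feedback in rounds $1,\dots,\tilde d$ is empty, so the law of $(a_1,\dots,a_{\tilde d})$ is independent of the ordering of $S_\Delta$; averaging $\sum_{t=1}^{\tilde d}\E[\Delta_{a_t}]$ over a uniformly random ordering (equivalently, over the $K$ cyclic shifts) makes each round contribute exactly $\Deltabar$, so there is an ordering for which $\sum_{t=1}^{\tilde d}\E[\Delta_{a_t}]\ge\tilde d\,\Deltabar$, and I would take that ordering as the instance $I$ --- which is allowed precisely because $L_1$ holds for every ordering. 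Combining, for this $I$ we have both $\mathcal{R}_T\ge L_1$ and $\mathcal{R}_T\ge\tilde d\,\Deltabar$, hence $\mathcal{R}_T\ge\tfrac{1}{2}L_1+\tfrac{1}{2}\tilde d\,\Deltabar$, which --- using $\max_i d_i(q)=\tilde d$ and $\tfrac{1}{64}=\tfrac{1}{2}\cdot\tfrac{1}{32}$ --- is exactly the stated inequality.

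The hard part is more conceptual than technical and has two mild components: one must verify carefully that inserting the $\tilde d$-round buffer into \Cref{alg:lower-bound-ind} leaves its analysis (and the bad-event bound) intact, which is essentially bookkeeping; and one must accept the factor-$2$ loss, visible in the $\tfrac{1}{64}$ coefficient and the $\tfrac{1}{2}$ on the delay term, which is forced by reading two lower bounds off the \emph{same} instance via $a\vee b\ge\tfrac{1}{2}(a+b)$. This last step in turn relies on \Cref{thm:lower-bound-ind-0-inf} being insensitive to the ordering of the gaps, which is why we first establish $L_1$ for all orderings.
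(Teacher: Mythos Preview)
Your proposal is correct and follows essentially the same approach as the paper: take the $\tilde d$-or-$\infty$ delay distribution of \cref{eq:lower-bound-indpendent-delay-d-inf}, read off the statistical term from \Cref{thm:lower-bound-ind-0-inf} and the $\tilde d\,\Deltabar$ term from the absence of feedback in the first $\tilde d$ rounds, and combine the two via $a\vee b\ge\tfrac12(a+b)$. You are more explicit than the paper on two points it leaves implicit --- that the reduction in \Cref{alg:lower-bound-ind} survives inserting a $\tilde d$-round buffer, and that $L_1$ is ordering-invariant so one may pick the ordering that realizes the delay term --- but these are exactly the details the paper's one-line ``combining this observation with \Cref{thm:lower-bound-ind-0-inf}'' is gesturing at.
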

\Cref{thm:lower-bound-ind} is obtained directly from the above, as for $T \geq \exp\left({\frac{8}{(1-\alpha)} (\log \left( \frac{32C}{q\min\Delta_i} \right) + 1 )}\right)$,
\[
    \forall i \ne i^*: \quad
    \frac{1}{2}(1-\alpha)\log T
    \geq
    2\log\left(\frac{32C}{q\Delta_{i}}\right) + 4.
\]

\newpage
\section{Reward-dependent Setting} 
\label{appendix:reward-dependent}
\subsection{Proof of \Cref{thm:se_reward_dep}}
We begin with proving 3 useful lemmas.
The first, it a concentration bound for the estimation of the actual empirical expected rewards, similar to \cref{lemma:estimator_bound}, which also follows immediately from Hoeffding's inequality and a union bound. 

\begin{lemma}
    \label{lemma:estimator_bound_unobserved}
    Let $\tilde{\mu}_t(i)$ be the actual empirical average (including unobserved feedback) of the expected reward up to the end of round $t-1$. Then,
    \begin{align*}
        \Pr \Big[ \exists ~ i, t \;:\; \abs{\tilde{\mu}_t(i) - \mu_i} > \sqrt{\frac{2\log{T}}{m_t(i)}}\Big] 
        &\leq 
        \frac{2}{T^2}.
    \end{align*}
\end{lemma}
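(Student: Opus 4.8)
The plan is to recognize that this is a purely delay-free concentration statement: $\tilde{\mu}_t(i)$ is the empirical mean of arm $i$ over all $m_t(i)$ of its pulls (observed or not), so it depends only on which arm was played and which reward was sampled, and never on the delays. So I would first set up a ``reward tape'' view: for each arm $i$, let $Z^{(i)}_1, Z^{(i)}_2, \ldots$ be the rewards received from the successive pulls of arm $i$, listed in the order of those pulls. Since the pairs $\{(r_t(\cdot), d_t(\cdot))\}_{t=1}^T$ are i.i.d., the marginal reward $r_t(i)$ is i.i.d.\ across $t$ with mean $\mu_i$ and support in $[0,1]$; hence, whatever (adaptive, feedback-dependent) strategy the algorithm uses, $Z^{(i)}_1, Z^{(i)}_2, \ldots$ is an i.i.d.\ $[0,1]$-valued sequence with mean $\mu_i$, and by definition $\tilde{\mu}_t(i) = \frac{1}{m_t(i)}\sum_{j=1}^{m_t(i)} Z^{(i)}_j$.

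The one subtlety — and the step I would be most careful about — is that $m_t(i)$ is itself a random quantity determined by the algorithm, so one cannot apply Hoeffding directly to $\tilde{\mu}_t(i)$ as a fixed-length average. I would get around this by the standard ``fix the tape, union over the length'' argument: since $m_t(i)\le t\le T$, whatever value $m_t(i)$ takes the deviation event is contained in a deviation of some prefix of the fixed i.i.d.\ sequence $(Z^{(i)}_j)_j$, i.e.
\begin{align*}
    \Big\{ \exists\, i, t : \abs{\tilde{\mu}_t(i) - \mu_i} > \sqrt{\tfrac{2\log T}{m_t(i)}} \Big\}
    \subseteq
    \bigcup_{i \in [K]} \bigcup_{n=1}^{T} \Big\{ \abs{\tfrac{1}{n}\sum_{j=1}^n Z^{(i)}_j - \mu_i} > \sqrt{\tfrac{2\log T}{n}} \Big\}.
\end{align*}

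Then I would bound each event on the right by Hoeffding's inequality: for fixed $i$ and $n$, since $Z^{(i)}_1,\dots,Z^{(i)}_n$ are i.i.d.\ in $[0,1]$, $\Pr\big[\abs{\tfrac1n\sum_{j=1}^n Z^{(i)}_j - \mu_i} > \sqrt{2\log T / n}\big] \le 2\exp(-2n\cdot 2\log T / n) = 2T^{-4}$. A union bound over the at most $KT$ pairs $(i,n)$ then gives probability at most $2KT\cdot T^{-4} = 2K/T^3 \le 2/T^2$, under the usual assumption $K \le T$ (this is exactly the same bookkeeping as in the proof of \cref{lemma:estimator_bound}, with $m_t(i)$ in place of $n_t(i)$). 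No step here is genuinely difficult; the content of the lemma is simply that the reward-dependent delays are irrelevant to the concentration of $\tilde{\mu}_t$, which is why it ``follows immediately from Hoeffding's inequality and a union bound.''
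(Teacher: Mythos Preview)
Your proof is correct and is exactly the approach the paper intends: the paper does not give a detailed argument for this lemma, stating only that it ``follows immediately from Hoeffding's inequality and a union bound'' (mirroring \cref{lemma:estimator_bound} with $m_t(i)$ in place of $n_t(i)$), and your reward-tape decomposition plus union over the at most $KT$ pairs $(i,n)$ is the standard way to make that sentence rigorous.
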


    \begin{lemma}
    \label{lemma:quantile_bound_hoeffding}
    Fix some $q \in (0,1]$. For any $i$ and $t$, with probability of at least $1-T^{-4}$, 
    \begin{align}
        n_{t}(i) \geq q m_{t-d_i(q)}(i) - \sqrt{2\log(T) m_{t}(i)}.
        \label{eq:rew-dep-good}
    \end{align}
\end{lemma}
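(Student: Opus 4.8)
The plan is to lower-bound $n_t(i)$ by counting only those pulls of arm $i$ that are both early enough and have small enough realized delay that their feedback is guaranteed to have arrived by round $t-1$, and then to apply a Hoeffding-type concentration inequality to this (adaptively chosen) sum. First I would observe that if arm $i$ is pulled at a round $s < t - d_i(q)$ and its realized delay satisfies $d_s \le d_i(q)$, then $s + d_s < t$, so this feedback is among the observations counted by $n_t(i)$; hence
\[
    n_t(i) \;\ge\; \sum_{s < t - d_i(q)} \indicator\{a_s = i\}\,\indicator\{d_s \le d_i(q)\}.
\]
By the definition of the quantile function $\Pr[D_i \le d_i(q)] \ge q$, and since each pair $(r_s(\cdot), d_s(\cdot))$ is drawn i.i.d.\ and independently of the history that determines $a_s$, we have $\E[\indicator\{d_s \le d_i(q)\} \mid a_s = i, \mathcal{F}_{s-1}] = \Pr[D_i \le d_i(q)] \ge q$, where $\mathcal{F}_{s-1}$ denotes the history up to round $s-1$.

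Next I would form the martingale-difference sequence $\xi_s := \indicator\{a_s = i\}\bigl(\indicator\{d_s \le d_i(q)\} - \Pr[D_i \le d_i(q)]\bigr)$; its increments lie in $[-1,1]$ and vanish whenever $a_s \ne i$, so among rounds $s < t - d_i(q)$ the number of nonzero increments is $m_{t-d_i(q)}(i) \le m_t(i)$. Applying the Azuma--Hoeffding inequality to $\sum_{s < t - d_i(q)} \xi_s$, with the effective number of summands $m_t(i)$ and deviation $\lambda = \sqrt{2\log(T)\,m_t(i)}$ so that the failure probability is at most $\exp(-2\lambda^2/m_t(i)) = T^{-4}$, gives with probability at least $1 - T^{-4}$ that
\[
    \sum_{s < t - d_i(q)} \indicator\{a_s = i\}\indicator\{d_s \le d_i(q)\} \;\ge\; \Pr[D_i \le d_i(q)]\, m_{t-d_i(q)}(i) - \sqrt{2\log(T)\,m_t(i)} \;\ge\; q\, m_{t-d_i(q)}(i) - \sqrt{2\log(T)\,m_t(i)},
\]
using $\Pr[D_i \le d_i(q)] \ge q$ in the last step. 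Chaining this with the first display yields the stated bound.

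The hard part is making the concentration step rigorous: the index set $\{s < t - d_i(q) : a_s = i\}$ is itself random and adapted to the realized delays (through the feedback that drives the algorithm's choices), so the indicators are not independent and the plain Chernoff/Hoeffding bound does not apply — one must use the martingale form. Moreover the deviation needs to scale with the random quantity $m_t(i)$ rather than with the trivial horizon $T$; I would secure this either via a Freedman-type bound, noting that the predictable quadratic variation of $\sum \xi_s$ up to round $t - d_i(q) - 1$ is at most $\tfrac14 m_{t-d_i(q)}(i) \le \tfrac14 m_t(i)$, or via a union bound over the at most $T$ possible values of $m_t(i)$ — where the values with $\sqrt{2\log(T)m} > m$ make the bad event impossible, since $\lvert \sum \xi_s \rvert \le m$, and so contribute nothing.
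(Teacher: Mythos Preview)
Your proposal is correct and follows essentially the same route as the paper: lower-bound $n_t(i)$ by $\sum_{s \le t-d_i(q)} \indicator\{a_s=i,\, d_s\le d_i(q)\}$, apply a Hoeffding-type concentration using $\Pr[d_s\le d_i(q)\mid a_s=i]\ge q$, and then relax $m_{t-d_i(q)}(i)$ to $m_t(i)$ in the deviation term by monotonicity. The paper simply invokes ``Hoeffding's inequality'' with $m_{t-d_i(q)}(i)$ in place of a fixed sample size, whereas you are more explicit about the adaptivity (martingale increments, random number of nonzero terms); your extra care there is justified but does not change the argument's substance.
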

    \begin{proof}
    By definition, $\Pr[d_{s}\leq d_i(q)|a_{s}=i] \geq q$. 
    Hence, by Hoeffding's inequality 
    \[
        \Pr\Big[\frac{1}{m_{t-d_i(q)}(i)}\sum_{s=1}^{t-d_i(q)}\indicator\left\{ d_{s}\leq d_i(q),a_{s}=i\right\} \leq q-\delta\Big]\leq \exp(-2m_{t-d_i(q)}(i)\delta^{2}).
    \]
    For $\delta=\sqrt{2\log(T)/m_{t-d_i(q)}(i)}$ 
    \[
        \Pr\Bigg[\sum_{s=1}^{t-d_i(q)}\indicator\left\{ d_{s}\leq d_i(q),a_{s}=i\right\} \leq qm_{t-d_i(q)}(i)-\sqrt{2\log(T)m_{t-d_i(q)}(i)}\Bigg]\leq\frac{1}{T^{4}}.
    \]
    Now, note that
    \[
        n_{t}(i)=\sum_{s=1}^{t}\indicator\left\{ s+d_{s}\leq t,a_{s}=i\right\} \geq\sum_{s=1}^{t-d_i(q)}\indicator\left\{ d_{s}\leq d_i(q),a_{s}=i\right\},
    \]
    which implies that with probability of at least $1-T^{-4}$, 
    \[
    n_{t}(i)
    \geq qm_{t-d_i(q)}(i)-\sqrt{2\log(T) m_{t-d_i(q)}(i)}
    \geq qm_{t-d_i(q)}(i)-\sqrt{2\log(T) m_{t}(i)},
    \]
    where the last inequality is since $m_t(i)$ is monotone in $t$.
\end{proof}
    
    \begin{lemma}   
        \label{lemma:reward-dependent-interval-size-bound}
        Fix some time $t$, arm $i$, and $q\in (0,1]$. If \cref{eq:rew-dep-good} holds, then,
        \[
            \hat{\mu}_{t}^{+}(i)-\hat{\mu}_{t}^{-}(i)\leq\frac{m_{t}(i)-m_{t-d_i(q)}(i)}{m_{t}(i)}+1-q+\sqrt{\frac{2\log(T)}{m_{t}(i)}}.
        \]
    \end{lemma}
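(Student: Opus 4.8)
The plan is to first notice that the left-hand side collapses to an elementary quantity. From the definitions of the estimators in \cref{alg:SE-reward-dependent}, we have $\hat{\mu}_{t}^{+}(i) = \frac{m_{t}(i)-n_{t}(i)}{m_{t}(i)} + \hat{\mu}_{t}^{-}(i)$, and therefore $\hat{\mu}_{t}^{+}(i)-\hat{\mu}_{t}^{-}(i) = \frac{m_{t}(i)-n_{t}(i)}{m_{t}(i)}$ --- exactly the fraction of pulls of arm $i$ whose feedback is still missing at round $t$. Hence it suffices to bound the number of missing samples $m_{t}(i)-n_{t}(i)$ from above.

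The key step is to split this count at the cutoff time $t-d_i(q)$: writing $m_{t}(i)-n_{t}(i) = \big(m_{t}(i)-m_{t-d_i(q)}(i)\big) + \big(m_{t-d_i(q)}(i)-n_{t}(i)\big)$, the first bracket is the number of pulls of $i$ in the last $d_i(q)$ rounds (these may all be missing, and it becomes the first term of the bound), while the second bracket is controlled by the hypothesis \cref{eq:rew-dep-good}: since $n_{t}(i) \ge q\,m_{t-d_i(q)}(i) - \sqrt{2\log(T)\,m_{t}(i)}$, we get $m_{t-d_i(q)}(i)-n_{t}(i) \le (1-q)\,m_{t-d_i(q)}(i) + \sqrt{2\log(T)\,m_{t}(i)}$. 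Bounding $m_{t-d_i(q)}(i) \le m_{t}(i)$ in the $(1-q)$ term and dividing through by $m_{t}(i)$ then yields precisely the claimed inequality.

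I do not expect a genuine obstacle here; the statement is a short algebraic consequence of the estimator definitions together with the concentration bound of \cref{lemma:quantile_bound_hoeffding}. The only points requiring a little care are: (i) applying \cref{eq:rew-dep-good} exactly as a hypothesis rather than re-deriving it, so no additional $T^{-4}$ failure probability is incurred; (ii) keeping the term $m_{t}(i)-m_{t-d_i(q)}(i)$ intact rather than crudely bounding it by $m_{t}(i)$, since in the proof of \cref{thm:se_reward_dep} this term is later summed over arms and absorbed into a $\log(K)\,\dmax$ contribution via the round-robin argument; and (iii) observing that the bound holds regardless of the sign of $m_{t-d_i(q)}(i)-n_{t}(i)$, as we only ever need an upper estimate.
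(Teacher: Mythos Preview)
Your proposal is correct and follows essentially the same argument as the paper: identify $\hat{\mu}_{t}^{+}(i)-\hat{\mu}_{t}^{-}(i)=\frac{m_{t}(i)-n_{t}(i)}{m_{t}(i)}$, add and subtract $m_{t-d_i(q)}(i)$ in the numerator, apply \cref{eq:rew-dep-good} to the second piece, and then use $m_{t-d_i(q)}(i)\le m_{t}(i)$ to pull out the $1-q$ term. The paper's proof is identical step for step.
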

    \begin{proof}
        Using the condition in \cref{eq:rew-dep-good},
        \begin{align*}
            \hat{\mu}_{t}^{+}(i)-\hat{\mu}_{t}^{-}(i)
            & = 
            \frac{m_{t}(i)-n_{t}(i)}{m_{t}(i)}
            \\
             & =\frac{m_{t}(i)-m_{t-d_i(q)}(i)+m_{t-d_i(q)}(i)-n_{t}(i)}{m_{t}(i)}
             \\
             & \leq \frac{m_{t}(i)-m_{t-d_i(q)}(i)}{m_{t}(i)}
             + \frac{m_{t-d_i(q)}(i)(1-q)+\sqrt{2\log(T)m_{t}(i)}}{m_{t}(i)}
             \\
             & \leq\frac{m_{t}(i)-m_{t-d_i(q)}(i)}{m_{t}(i)}+1-q+\sqrt{\frac{2\log(T)}{m_{t}(i)}},
        \end{align*}
        where the second inequality holds since $m_{t-d_i(q)}(i)\leq m_{t}(i)$.
    \end{proof}

    We now turn to prove the theorem. Fix some vector $\vec{q} \in (0,1]^K$, which will be determined later. Define the following failure events:
    \begin{align*}
        F_1
        & =
        \left\{\exists t,i:
            \abs{\tilde{\mu}_{t}(i)-\mu_i} > 
            \lambda_t(i)
        \right\}, \\
        F_2 
        & = 
        \left\{ 
            \exists t,i: n_{t}(i) < q_i m_{t-d_i(q_i)}(i)-\lambda_t(i) m_t(i)
        \right\},\\
        F_3 
        & = 
        \left\{ 
            \exists t,i: n_{t}(i^*) < q_i m_{t-d_{i^*}(q_i)}(i)-\lambda_t(i) m_t(i^*)
        \right\},
    \end{align*}
    where $\lambda_t(i) = \sqrt{2\log T/m_t(i)}$. By \Cref{lemma:estimator_bound}, \Cref{lemma:quantile_bound_hoeffding}, and the union bound, each of the event above occures with probability of at most $T^{-2}$. Define the good event $G = \lnot F_1 \cup \lnot F_2 \cup \lnot F_3$. The probability that $G$ occurs is at least $1 - 3T^{-2}$, by the union bound. We bound the regret under $\lnot G$ by $T$, and for the rest of analysis we assume that $G$ occurs. 
    
    Recall that $\tilde{\mu}_{t}(i)$ is the empirical mean of arm $i$ that is based on all $m_t(i)$ samples. Formally, $\tilde{\mu}_{t}(i) = \frac{1}{m_{t}(i)} \sum_{s < t}\indicator\{a_s=i\}r_{s}.$ By definition,
    \begin{align} \label{eq:rew_dep_mu_inequality_appendix}
        \forall t,i:
        \quad 
        \hat{\mu}_{t}^{-}(i)\leq\tilde{\mu}_{t}(i)\leq\hat{\mu}_{t}^{+}(i)
        .
    \end{align}

    Using \cref{eq:rew_dep_mu_inequality_appendix}, under the good event. For any $i$,
    \begin{align*}
        \mu_{i}
        & \leq
        \tilde{\mu}_{t}(i) + \lambda_{t}(i)
        \leq
        \hat{\mu}_{t}^{+}(i) + \lambda_{t}(i)
        = UCB_{t}(i),\\
        \mu_{i}
        & \geq
        \tilde{\mu}_{t}(i) - \lambda_{t}(i)
        \geq
        \hat{\mu}_{t}^{-}(i) - \lambda_{t}(i)
        = LCB_{t}(i).
    \end{align*}
    The above implies that for any $i$,
    \[
        UCB_{t}(i^{*}) 
        \geq \mu_{i^{*}} 
        \geq \mu_{i} 
        \geq LCB_{t}(i).
    \]
    Therefore, the optimal arm is never eliminated. Let $\tau_i$ be the last elimination step in which $i$ was not eliminated. So the total number of times that we have pulled $i$ is at most $m_{\tau_i}(i)+1$. Since $i$ was not eliminated,
    \begin{align*}
        \hat{\mu}_{\tau_i}^{+}(i) + \lambda_{\tau_i}(i)
        & =
        UCB_{\tau_i}(i)
         \geq 
        LCB_{\tau_i}(i^{*})
        =
        \hat{\mu}_{\tau_i}^{-}(i^{*}) - \lambda_{\tau_i}(i^{*}).
    \end{align*}
    We can bound the right hand-side from above by,
    \begin{align*}
        \hat{\mu}_{\tau_i}^{-}(i^{*}) - \lambda_{\tau_i}(i^{*}) 
        & =
        \tilde{\mu}_{\tau_i}(i^{*}) - \lambda_{\tau_i}(i^{*}) + \hat{\mu}_{\tau_i}^{-}(i^{*}) - \tilde{\mu}_{\tau_i}(i^{*})\\
        & \geq
        \mu_{i^{*}} - 2\lambda_{\tau_i}(i^{*}) + \hat{\mu}_{\tau_i}^{-}(i^{*}) - \hat{\mu}_{\tau_i}^{+}(i^{*}),
    \end{align*}
    where the last follows by the good event and \cref{eq:rew_dep_mu_inequality_appendix}. Similarly,
    \begin{align*}
        \hat{\mu}_{\tau_i}^{+}(i) - \lambda_{\tau_i}(i) 
        & \leq
        \mu_{i} + 2\lambda_{\tau_i}(i) + \hat{\mu}_{\tau_i}^{+}(i) - \hat{\mu}_{\tau_i}^{-}(i).
    \end{align*}
    Combining last three inequalities, and the fact that, $\lambda_{\tau_i}(i^*) = \lambda_{\tau_i}(i)$ (since $\tau_i$ is an elimination step),
    \begin{equation}
        \label{eq:dep-proof-delta-bound}
        \begin{aligned}
            & \Delta_{i} 
            =
            \mu_{i^{*}}-\mu_{i}
             \leq
            4\lambda_{\tau_i}(i)
            +
            \hat{\mu}_{\tau_i}^{+}(i)-\hat{\mu}_{\tau_i}^{-}(i)
            +
            \hat{\mu}_{\tau_i}^{+}(i^{*})-\hat{\mu}_{\tau_i}^{-}(i^{*}).
        \end{aligned}
    \end{equation}
    Using \Cref{lemma:reward-dependent-interval-size-bound},
    \begin{align*}
        \hat{\mu}_{\tau_i}^{+}(i)-\hat{\mu}_{\tau_i}^{-}(i)
        & \leq
        \frac{m_{\tau_i}(i) - m_{\tau_i - d_i(q_i)}(i)}{m_{\tau_i}(i)}+ 1-q_i + \lambda_{\tau_i}(i),\\
        \hat{\mu}_{\tau_i}^{+}(i^{*})-\hat{\mu}_{\tau_i}^{-}(i^{*})
        & \leq
        \frac{m_{\tau_i}(i) - m_{\tau_i-d_{i^*}(q_{i})}(i) + 1}{m_{\tau_i}(i)} + 1-q_{i} + \lambda_{\tau_i}(i),
    \end{align*}
    where for the second inequality we have also used the fact that $m_{\tau_i}(i) = m_{\tau_i}(j)$ and $|m_{t}(i) - m_{t}(j)|\leq 1$, for any active arms $i,j$ and time $t$. Plugging in \cref{eq:dep-proof-delta-bound}, setting $q_i = 1 - \Delta_{i}/4$, and using the fact that $m_t$ is monotonically increasing in $t$, 
    \begin{align*}
        \Delta_{i}
        & \leq 2\frac{m_{\tau_i}(i)-m_{\tau_i-d_{max}}(i)}{m_{\tau_i}(i)}
         + 
        2\frac{m_{\tau_i}(i)-m_{\tau_i-d_{max}^{*}}(i)+1}{m_{\tau_i}(i)}
        + 
        12\lambda_{\tau_i}(i),
    \end{align*}
    where $d_{max} = \max_{i\ne i^{*}}d_{i}(1-\Delta_{i}/4)$ and $d_{max}^{*} = \max_{i\ne i^{*}}d_{i^{*}}(1-\Delta_{i}/4)$. Now, if the last term on the right-hand-side dominates the other two, then, 
    \[
        \Delta_{i}\leq24\sqrt{\frac{2\log T}{m_{\tau_{i}}(i)}}
        \Longrightarrow 
        m_{\tau_{i}}(i)\Delta_{i}
        \leq 1152\frac{\log T}{\Delta_{i}}.
    \]
    Otherwise,
    \begin{align*}
        m_{\tau_{i}}(i)\Delta_{i} 
        & \leq
        4 \left(m_{\tau_{i}}(i)-m_{\tau_{i}-d_{max}}(i) \right)
         +
        4 \left(m_{\tau_{i}}(i)-m_{\tau_{i}-d_{max}^{*}}(i) \right) + 4.
    \end{align*}
    Either way,
    \begin{equation}
        \begin{aligned}
            m_{T}(i)\Delta_{i} 
            & \leq
            \left( m_{\tau_{i}}(i) + 1 \right)\Delta_{i}\\
            & \leq
            1157\frac{\log T}{\Delta_{i}} 
            +
            4 \left(m_{\tau_{i}}(i)-m_{\tau_{i}-d_{max}}(i) \right)
             +
            4 \left(m_{\tau_{i}}(i)-m_{\tau_{i}-d_{max}^{*}}(i) \right).
        \end{aligned}
        \label{eq:rew-dep-proof-arm-regret}
    \end{equation}
    Let $\sigma(i)$ be the number of active arms at time $\tau_i$. Since we round-robin over the arms,
    \[
        m_{\tau_{i}-\tilde{d}}(i)-m_{\tau_{i}}(i)\leq\frac{\tilde{d}}{\sigma(i)}+1,
    \]
    for any integer $\tilde{d}$. Note that, $\sum_{i\ne i^*} 1/\sigma(i) \leq 1/K + 1/(K-1) + ... +1/2 \leq \log(K)$. Summing \cref{eq:rew-dep-proof-arm-regret} over the sub-optimal arms, and taking into account the bad event.
    \begin{align*}
        \mathcal{R}_{T} 
        & \leq
        \sum_{i\ne i^{*}}\frac{1157\log T}{\Delta_{i}} + 4\log(K)(d_{max}+d_{max}^{*})
         +
        8(K-1) 
        + \underbrace{\Pr(\lnot G)T}_{\leq 1}\\
        & \leq
        \sum_{i\ne i^{*}} \frac{1166\log T}{\Delta_{i}}
        +  4\log(K)\left(
        \max_{i\ne i^{*}}d_{i}(1-\Delta_{i}/4)
        + d_{i^{*}}(1-\min_{i\ne i^{*}}\Delta_{i}/4) \right)
    \end{align*}

\subsection{Proof of \Cref{thm:reward_depndent_lower}}
\label{appendix:dep-lower}
Consider two instances $I_1, I_2$. The rewards on both instances are sampled from Bernoulli distributions with $\mu_1 = \frac{1}{2}$, and the delay for arm $1$ is $\tilde{d}$ with probability $1-2\Delta$, and $0$ otherwise (regardless of the value of the reward).\footnote{The proof would hold even if there is no delay on arm $1$. However, arm $1$ would be sub-optimal under $I_2$, so we would like that $d_1(1-\Delta) = \tilde{d}$, as required by the theorem's statement.} Under $I_1$, arm $2$ is sub-optimal with $\mu_2 = \frac{1}{2} - \Delta$. The delay distribution for arm $2$ is as follows:
\begin{align*}
    \Pr\left( d_{t} = d ~\mid~a_t=2, r_{t} = 0, I_{1} \right)
    & =
    \begin{cases}
        \frac{4\Delta}{1+2\Delta}    & d = \tilde{d}\\
        \frac{1-2\Delta}{1+2\Delta}  & d = 0,
    \end{cases}\\
    \Pr\left( d_{t} = 0 ~\mid~a_t=2, r_{t} = 1, I_{1} \right)
    & = 1.
\end{align*}
Under $I_2$, arm $2$ is optimal with $\mu_i = \frac{1}{2} + \Delta$. The delay distribution for arm $2$ is:
\begin{align*}
    \Pr\left( d_{t} = 0 ~\mid~a_t=2, r_{t} = 0, I_{2} \right)
    & = 1,\\
    \Pr\left( d_{t} = d ~\mid~a_t=2, r_{t} = 1, I_{2} \right)
    & =
    \begin{cases}
        \frac{4\Delta}{1+2\Delta} & d = \tilde{d}\\
        \frac{1-2\Delta}{1+2\Delta} & d = 0.
    \end{cases}
\end{align*}
Note that under both instances, $d_2( 1 - 2\Delta) = \tilde{d}$. Since, if $a_t = 2$, then,
\begin{align*}
   & \Pr\left(d_{t}=0 \mid a_t = 2, I_{1}\right)  =\Pr\left(d_t=0 \mid a_t = 2, I_{2}\right)=1-2\Delta,\\
   & \Pr\left(d_{t} = \tilde{d} \mid a_t = 2, I_{1}\right)  =\Pr\left(d_{t} = \tilde{d} \mid a_t = 2, I_{2}\right)=2\Delta.
\end{align*}
Also note that the probability to \textit{observe} $0$ or $1$ given that the delay is $0$, is identical under both instances. That is,
\begin{align*}
    \Pr\left(r_{t}=1  \mid a_t=2, d_{t}=0,I_{1}\right) 
    & =
    \frac{\Pr\left(d_{t}=0\mid a_t=2, r_{t}=1,I_{1}\right)\Pr\left(r_{t}=1\mid a_t=2, I_{1}\right)}{\Pr\left(d_{t}=0\mid a_t=2,I_{1}\right)}
    \\
    & =
    \frac{\frac{1-2\Delta}{2}}{1-2\Delta}
    =\frac{1}{2},
\end{align*}
and, 
\begin{align*}
    \Pr\left(r_{t}=1\mid a_t=2, d_{t}=0,I_{2}\right) 
    & =
    \frac{\Pr\left(d_{t}=0\mid a_t=2, r_{t}=1,I_{2}\right)\Pr\left(r_{t}=1\mid a_t=2,I_{2}\right)}{\Pr\left(d_{t}=0\mid a_t=2,I_{2}\right)}\\
    & =
    \frac{\frac{1-2\Delta}{1+2\Delta}\frac{1+2\Delta}{2}}{1-2\Delta}
    =\frac{1}{2}.
\end{align*}
Until time $\tilde{d}$, the learner only observes feedback that has delay of $0$ which distribute the same on both instances. Furthermore, the amount of delayed feedback also behave identically since both arms in both instances have the same probability ($2\Delta$) to be delayed. Therefore, a learner cannot distinguish between $I_1$ and $I_2$, and in expectation she pulls the sub-optimal arm at least $\tilde{d}/2$ under one of the instances. 
Finally, assuming the learner guarantees a regret bound of $T^\alpha$ over any instance, using \cref{lemma:no-delay-lower-bound}, she suffers regret of
\begin{align*}
   \mathcal{R}_T 
   \geq
   \max\Big\{\frac{1}{2}d \Delta, \frac{(1-\alpha)\log(T)}{64\Delta}\Big\}
   \geq
   \frac{1}{4}d + \frac{(1-\alpha)\log(T)}{128\Delta}.
\end{align*}

\newpage
\section{Additional Experiments}
\label{sec:additional-exp}

\paragraph{More on $\alpha$-Pareto delays.} In \cref{fig:all_SEvsPB} we extend \cref{fig:SEvsPB} and reproduce the experiment done by \citet{manegueu2020stochastic} for other values of $\alpha_2$ that they tested: $\alpha_2 \in \{0.2,0.3,0.4,0.5,0.8\}$. All other parameters remains the same as in \cref{fig:SEvsPB}. That is, $T = 3000$ rounds, $K=2$ arms, $\alpha_1 = 1$, and the expected rewards are $\mu_1 = 0.4$ and $\mu_2 = 0.4 + \Delta$, where $\Delta \in [0.04, 0.6]$.

\begin{figure}[h]
    \centering
    \includegraphics[width = 1.0\textwidth]{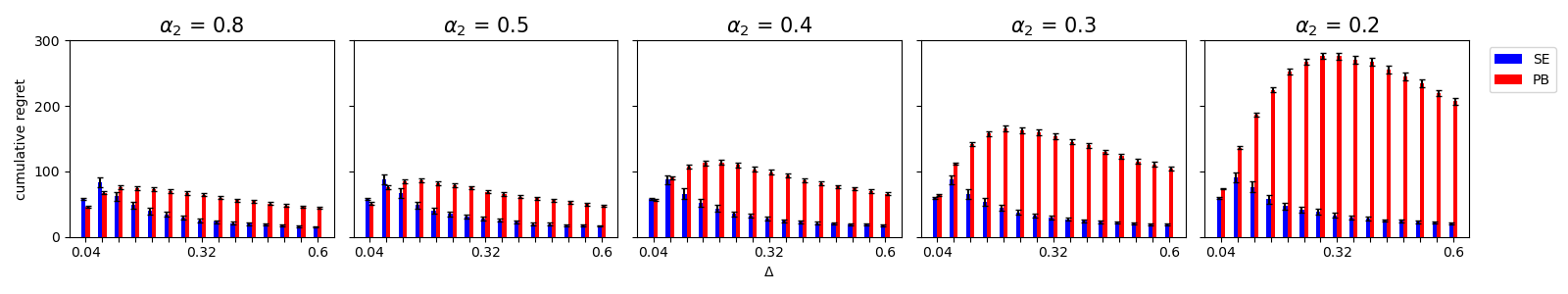}
    \caption{\label{fig:all_SEvsPB} Regret of SE and PatientBandits (PB) for Pareto delays.}  
\end{figure}

Recall that smaller values means heavier tail on the delay distribution (i.e. larger delays). For large values of $\alpha_2$, the performance of SE and PB is somewhat similar, with a minor advantage for PB under small values of $\Delta$ ($\Delta = 0.04,0.08$), and an advantage for SE under the rest of $\Delta$ values. Similar to the permanence UCB in \cref{fig:SEvsUSB} under fixed delays, as delay increases (that is, $\alpha_2$ decreases), the regret of PB increases as well. SE on the other hand, is almost not affected by the delay, as shown also for fixed delays in \cref{fig:SEvsUSB}. Around $\alpha_2 = 0.3$ SE becomes strictly superior for all values of $\Delta$. 
Naturally, PB is affected by $\alpha_2$ through the feedback, but is also affected directly as its confidence radius increases when the minimal $\alpha$ decreases. This also explains the fact that the peak of the PB curve moves right as $\alpha_2$ decreases, as more feedback is required to shrink the confidence interval enough so that the sub-optimal arm will be identified as such.

\end{document}